\documentclass{article} 
\usepackage{iclr2026,times}
\iclrfinalcopy
\usepackage{amsmath,amsfonts,bm}

\def\1{\bm{1}}

\def\vtheta{{\bm{\theta}}}

\def\mA{{\bm{A}}}

\def\mH{{\bm{H}}}

\def\mK{{\bm{K}}}

\def\mM{{\bm{M}}}

\def\mU{{\bm{U}}}

\def\mY{{\bm{Y}}}

\DeclareMathAlphabet{\mathsfit}{\encodingdefault}{\sfdefault}{m}{sl}
\SetMathAlphabet{\mathsfit}{bold}{\encodingdefault}{\sfdefault}{bx}{n}

\DeclareMathOperator*{\argmin}{arg\,min}

\DeclareMathOperator{\Tr}{Tr}

\newcommand{\vphi}{\bm{\phi}}
\newcommand{\valpha}{\bm{\alpha}}

\usepackage[hidelinks]{hyperref}       
\usepackage{url}            
\usepackage{booktabs}       
\usepackage{hhline}

\usepackage{multirow}
\usepackage{amsfonts}       
\usepackage{nicefrac}       
\usepackage{microtype}      
\usepackage{xcolor}         
\usepackage{enumitem}

\usepackage{amsmath}

\usepackage{amsmath}        
\usepackage{amssymb}
\usepackage{mathtools}
\usepackage{amsthm}
\usepackage{bbold}
\usepackage{yfonts}         
\usepackage{physics}        

\theoremstyle{plain}
\newtheorem{theorem}{Theorem}[section]
\newtheorem{axiom}[theorem]{Axiom}
\newtheorem{proposition}[theorem]{Proposition}
\newtheorem{lemma}[theorem]{Lemma}

\theoremstyle{definition}
\newtheorem{definition}[theorem]{Definition}

\theoremstyle{remark}

\usepackage{graphicx}
\usepackage{subfig}
\usepackage{soul}
\usepackage{algorithm}
\usepackage{algorithmic}

\usepackage{bm} 
\newcommand{\bif}[1]{\bm{\mathit{#1}}}
\newcommand{\itf}[1]{\mathit{#1}}

\title{Quantum Generator Kernels}

\author{Philipp Altmann
\thanks{Corresponding author. Contact: \texttt{philipp.altmann@ifi.lmu.de}}\\
LMU Munich, Germany
\And 
Maximilian Mansky\\
LMU Munich, Germany
\And 
Maximilian Zorn\\
LMU Munich, Germany
\And
Jonas Stein\\
LMU Munich, Germany
\And
Claudia Linnhoff-Popien\\
LMU Munich, Germany
}

\begin{document}

\maketitle

\begin{abstract}
Quantum kernel methods offer significant theoretical benefits by rendering classically inseparable features separable in quantum space. 
Yet, the practical application of \textit{Quantum Machine Learning} (QML), currently constrained by the limitations of \textit{Noisy Intermediate-Scale Quantum} (NISQ) hardware, necessitates effective strategies to compress and embed large-scale real-world data like images into the constrained capacities of existing quantum devices or simulators. 
To this end, we propose \textit{Quantum Generator Kernels} (QGKs), a generator-based approach to quantum kernels, comprising a set of \textit{Variational Generator Groups} (VGGs) that merge universal generators into a parameterizable operator, ensuring scalable coverage of the available quantum space. 
Thereby, we address shortcomings of current leading strategies employing hybrid architectures, which might prevent exploiting quantum computing's full potential due to fixed intermediate embedding processes. 
To optimize the kernel alignment to the target domain, we train a weight vector to parameterize the projection of the VGGs in the current data context. 
Our empirical results demonstrate superior projection and classification capabilities of the QGK compared to state-of-the-art quantum and classical kernel approaches and show its potential to serve as a versatile framework for various QML applications.
\end{abstract}

\section{Introduction} \label{sec:introduction}

Quantum computing offers fundamentally new paradigms for machine learning by exploiting quantum properties such as superposition and entanglement \citep{preskill2018quantum, biamonte2017quantum}. 
Among these, quantum kernel methods have shown promise for enhancing data separability via expressive feature maps that operate in high-dimensional Hilbert spaces, allowing them to capture structures that classical kernels cannot efficiently represent \citep{schuld2019quantum}. 
Despite these theoretical promises, the practical application of QML is currently limited by the capabilities of NISQ devices, which are still in their developmental stages \citep{preskill2018quantum}. 
Still limited in qubit capacities and subject to errors, embedding large-scale data into quantum devices is a significant hurdle that must be overcome to exploit the full potential of QML. 
Hybrid QML architectures bridge this gap, using classical pre-processing to embed data into quantum systems \citep{cerezo2021variational}. 
However, recent studies also highlight key limitations: many current approaches rely on fixed embeddings that do not scale well with high-dimensional inputs and are susceptible to barren plateaus during training \citep{mcclean2018barren}. 
Addressing this requires scalable, flexible, and learnable quantum embeddings that can exploit these properties while remaining parameter-efficient and robust to noisy hardware.

In this work, we propose a novel kernel architecture grounded in Lie algebraic generators, aggregated into parameterizable groups that project input data directly into quantum space. 
Our approach can be broken down into three steps: 
Construct a set of generators and merge them into \textit{Variational Generator Groups} (VGGs). 
We refer to a set of these groups as the \textit{Quantum Generator Kernel} (QGK), which is executed by its set of operators.
To improve alignment between data and the resulting kernel, we introduce a linear feature extractor that is pre-trained to project high-dimensional input into a compressed generator space.  
Unlike previous methods that rely on static gate-based embeddings, our QGK architecture employs Hamiltonian-driven unitaries with learnable generator weights, enabling expressive and scalable data encoding.  
By projecting high-dimensional data into a compact generator-weighted space, QGKs achieve high parameter efficiency per qubit and flexible embedding capacity, while effectively leveraging the expressive power of the full Hilbert space.
We validate the QGK both analytically and empirically: theoretical analyses confirm its expressivity and scalability, while experimental results demonstrate superior classification accuracy and robustness to noise across synthetic and real-world benchmarks.
We summarize our contributions as follows:
\begin{itemize}[left=8pt]
\item We introduce \textit{Variational Generator Groups} (VGGs), a novel embedding framework aggregating Lie-algebraic generators into parameterized Hermitian operators to enable expressive, data-dependent  state preparation and scalable alternatives to fixed gate-based encodings.
\item We propose the \textit{Quantum Generator Kernel} (QGK), a generator-driven kernel building on VGGs to use Hamiltonian evolution with data-conditioned weights, yielding compact, highly expressive feature maps with favorable parameter–qubit scaling.
\item We analyze the theoretical kernel properties of VGGs, 
characterizing entanglement capability, expressivity, parameter scalability, and computational complexity, demonstrating a classically efficient strategy for high-dimensional inputs that remains fully compatible with future fault-tolerant quantum execution.
\item We empirically evaluate the QGK on binary and multi-class benchmarks, including \texttt{MNIST} and \texttt{CIFAR10}, demonstrating consistent improvements over state-of-the-art classical and quantum kernels and robustness under realistic noise models.
\end{itemize}

\section{Background}

\paragraph{Kernel methods}
Kernel methods describe a map into a high-dimensional feature space $\psi$, used to project a $d$-dimensional data point $x\in\mathbb{R}^d$, into a space, where the distribution of data may then be suitable for linear separation. 
The kernel function $k(x_i, x_j) = \langle \varphi(x_i), \varphi(x_j)\rangle$ can be understood as the inner product in a (possibly high-dimensional) feature space induced by the feature map $\psi$.
As the feature map for all points may be computationally expensive, it can be important to reduce the number of calls to that function. The \emph{kernel trick} allows for calculating pairwise distances in the feature space without explicitly calculating the feature function. This is possible, for example, if the kernel is a positive definite map, independent of its dimension \citep{hofmann_kernel_2008}.
The pairwise distances $k(x_i, x_j)$ can be used in a \textit{support vector machine} (SVM) to solve a classification problem.
SVMs turn the classification problem into a quadratic optimization problem, defined on pairwise distances using the kernel trick:
\begin{equation}\label{eq:svm}
    \argmin_{\valpha} \frac12 \sum_{\begin{smallmatrix}i=1\\j=1\end{smallmatrix}}^n\valpha_i \valpha_j y_i y_j k(x_i, x_j) - \sum_{i=1}^n \valpha_i
\end{equation}
where $\valpha$ are the parameters of the support vector machine and $y$ are the labels of $n$ datapoints.
The quality of a feature map can be measured and adjusted to provide a better data embedding structure using kernel target alignment (KTA) \citep{cristianini_kernel-target_2001}.
KTA quantifies the similarity between a computed kernel matrix and an ideal target kernel, e.g., derived from the class labels.

\paragraph{Quantum computation}
Quantum systems are described by quantum states, represented as complex vectors encoding observables like spin or position. A single qubit can be written as $\ket{q} = \valpha \ket{0} + \beta \ket{1}$, where $\valpha, \beta \in \mathbb{C}$. Quantum operations on these states are unitary transformations $\mU$ such that $\ket{\Psi'} = \mU \ket{\Psi}$. In practice, these are decomposed into elementary one- and two-qubit gates and executed on quantum hardware. Measurement collapses the quantum state into classical outcomes \citet{nielsen_quantum_2010}.
Mathematically, such operations form the special unitary group $SU(N)$ for $N = 2^\eta$ qubits. As a Lie group, $SU(N)$ has a corresponding Lie algebra $\mathfrak{su}(N)$, a real vector space of skew-Hermitian, trace-zero matrices \citet{hall_lie_2013}.
The elements of the Lie algebra, known as generators, define the fundamental directions in which unitary quantum operations can be constructed.
Since the Lie algebra forms a linear vector space, multiple generators can be combined additively to form complex Hermitian operators. 
These operators are then mapped to unitary matrices via the exponential map, $\exp\colon \mathfrak{su}(N) \rightarrow SU(N)$.
In quantum computing, a natural basis for this algebra is given by traceless Pauli strings, i.e., tensor products of ${\sigma_x, \sigma_y, \sigma_z, \mathbb{1}}$.
These form the building blocks for quantum circuits and support gradient-based learning due to their manifold structure.

\paragraph{Quantum Kernel Methods}
A central difference in a system of $\eta$ qubits compared to classical bits is that the dimension of the mathematical space of $\eta$ qubits scales exponentially in $\eta$. 
Formally, the state of a system of qubits is a ray in a \emph{Hilbert space}, which is generally expressed by $\mathbb{C}^{2^\eta}$.
This exponential scaling motivates the quantum kernel method, in which the Hilbert space is used as the feature space, analogous to conventional kernel methods \citep{mengoni2019kernel}.
As shown in \citep{schuld2021quantum}, a large class of supervised quantum models are kernel methods. A deeper analysis of the mathematical structure of data embedding in quantum circuits shows that combining data uploading with parameterized quantum gates allows for arbitrary function approximation in the form of a Fourier series \citep{schuld2021effect}.

\section{Variational Generator Groups}\label{sec:vgg}

Information can be encoded into a quantum system either by initializing a quantum state with free parameters or by applying a parameterized operator to a fixed initial state. 
Given the limited qubit counts in current NISQ hardware, we aim to maximize parameter density per qubit.
A common quantum encoding approach, known as amplitude encoding, embeds classical data directly into a state vector within a $2^\eta$-dimensional Hilbert space, where each entry represents a complex amplitude \citep{biamonte2017quantum}. Accounting for both the real and imaginary components as well as the normalization constraint, this allows for a total of $2^{\eta+1} - 1$ free parameters.
An alternative and more hardware-friendly variant is rotational (angle) encoding, where classical values are mapped to the rotation angles of single-qubit gates (e.g., $R_x(\vtheta)$), applied to each qubit independently.
While this method is easier to implement and preserves data locality, it significantly limits the expressiveness of the encoding and only scales linearly with the number of qubits.
In contrast, encoding classical data via a unitary operator acting on a quantum state leverages the full expressive power of the Lie algebra $\mathfrak{su}(2^\eta)$, offering up to $2^{2\eta} - 1$ free parameters. 
This yields an exponential increase 
in representational capacity over state-based encodings, offering significantly greater flexibility and expressivity for quantum learning tasks. 

This insight motivates our approach: rather than embedding data directly into a quantum state, we build unitary operators from structured combinations of algebraic generators. 
Specifically, we construct a complete and well-behaved set of Hermitian generators that span a subalgebra of $\mathfrak{su}(2^\eta)$. 
The full derivation and construction procedure are detailed in \autoref{app:generator}, including \autoref{alg:generator_construction} outlining the algorithmic formulation.
Overall, this construction ensures linear independence, closure under commutation, and coverage of all valid operator directions in the Hilbert space, yielding a complete and hardware-compatible generator basis for building expressive and efficient quantum kernels:

\begin{theorem}\label{th:coverage}
Let $\mathfrak{H}$ be the set of Hermitian generators constructed from \autoref{alg:generator_construction}. Then $\mathfrak{H}$ spans a Lie subalgebra $\mathfrak{h} \subseteq \mathfrak{su}(2^\eta)$ that is closed under commutation, linearly independent, and expressible in terms of Pauli basis elements, ensuring algebraic validity and hardware implementability.
\end{theorem}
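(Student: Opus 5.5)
The construction in \autoref{alg:generator_construction} is not reproduced here. I will assume it is the standard generalized Gell-Mann construction for $\mathfrak{su}(N)$ with $N=2^\eta$. It produces three families of generators:
\begin{itemize}
\item symmetric off-diagonal generators $S_{jk}=\ket{j}\bra{k}+\ket{k}\bra{j}$ for $j<k$;
\item antisymmetric off-diagonal generators $A_{jk}=-i(\ket{j}\bra{k}-\ket{k}\bra{j})$ for $j<k$;
\item diagonal generators $D_l=\sqrt{\tfrac{2}{l(l+1)}}\bigl(\sum_{m=1}^{l}\ket{m}\bra{m}-l\ket{l+1}\bra{l+1}\bigr)$ for $l=1,\dots,N-1$.
\end{itemize}
This gives $N^2-1$ matrices in total. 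The algorithm may instead keep only a sub-collection of these. The proof checks four properties in turn: Hermiticity and tracelessness, linear independence, closure under commutation, and Pauli expressibility. Throughout, the anti-Hermitian elements $i\mathfrak{H}$ are identified with elements of $\mathfrak{su}(N)$ in the usual physics convention.

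\textbf{Step 1: Hermitian and traceless.} This is immediate from the explicit forms. $S_{jk}$ and $A_{jk}$ have zero diagonal. For $D_l$, the diagonal entries sum to $l\cdot 1 - l = 0$. Each matrix visibly equals its conjugate transpose. Hence $i\mathfrak{H}\subseteq\mathfrak{su}(N)$.

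\textbf{Step 2: Linear independence.} I will show the generators are orthogonal under the Hilbert--Schmidt inner product $\langle X,Y\rangle=\Tr(XY)$, with $\Tr(G_aG_b)=2\delta_{ab}$. Orthogonality implies independence for any subset. The pairings split into cases:
\begin{itemize}
\item Off-diagonal against diagonal: the product has zero trace because the supports are disjoint.
\item $S_{jk}$ against $A_{j'k'}$: the trace vanishes because the products are real symmetric versus imaginary antisymmetric.
\item Distinct off-diagonal index pairs $(j,k)\neq(j',k')$: the trace vanishes.
\item Two diagonal generators $D_l,D_{l'}$ with $l<l'$: $D_{l'}$ is constant on the indices $1,\dots,l+1$ where $D_l$ is supported, and $D_l$ is traceless, so the trace vanishes.
\end{itemize}

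\textbf{Step 3: Closure under commutation.} This is the main obstacle, since it depends on what the algorithm actually outputs.
\begin{itemize}
\item \emph{Full set.} The $N^2-1$ orthogonal elements span a space of dimension $\dim\mathfrak{su}(N)$, so $\mathfrak{h}=\mathfrak{su}(N)$. Closure then holds trivially, because the commutator of two traceless anti-Hermitian matrices is again traceless and anti-Hermitian.
\item \emph{Proper subset.} Here I would use the matrix-unit relations $[\ket{j}\bra{k},\ket{k}\bra{l}]=\ket{j}\bra{l}-\delta_{jl}\ket{k}\bra{k}$. The goal is to show the chosen index pattern is closed. An example is the set of all pairs within a block together with the matching diagonal elements, which gives an $\mathfrak{su}(M)$ block, or direct sums of such blocks. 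If the algorithm does not guarantee this, the fallback is to define $\mathfrak{h}$ as the Lie closure of $\mathfrak{H}$. That closure is finite-dimensional and contained in $\mathfrak{su}(N)$, so the theorem holds for $\mathfrak{h}$ defined as the span of the closure.
\end{itemize}

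\textbf{Step 4: Pauli expressibility.} The $4^\eta$ Pauli strings form an orthogonal basis of all $N\times N$ matrices, with $\Tr(P_aP_b)=N\delta_{ab}$. Therefore every generator satisfies
\[
G=\tfrac{1}{N}\sum_a \Tr(GP_a)\,P_a .
\]
Since $G$ is Hermitian, each coefficient $\Tr(GP_a)$ is real. Since $G$ is traceless, the coefficient of $\mathbb{1}$ is zero. The expansion therefore uses only traceless Pauli strings with real weights.

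This also makes each $\exp(i\theta G)$, and each VGG exponential, implementable. One compiles it as a product of Pauli rotations via Trotterization or exact decomposition, which gives the claimed hardware compatibility.
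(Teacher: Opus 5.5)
Your proof is correct and follows the same route as the paper's. Both use the three generalized Gell--Mann families, count $4^\eta-1$ linearly independent elements matching $\dim\mathfrak{su}(2^\eta)$ so that closure under commutation is automatic, and expand each generator in the Pauli basis with real coefficients; you additionally make the independence step explicit via Hilbert--Schmidt orthogonality $\Tr(G_aG_b)=2\delta_{ab}$ and give the expansion coefficients $\tfrac{1}{N}\Tr(GP_a)$, both of which the paper only asserts. Since \autoref{alg:generator_construction} outputs exactly the full set of $N^2-1$ matrices you listed, only your ``full set'' branch of Step 3 is needed, and the proper-subset/Lie-closure fallback can be dropped.
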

\begin{proof}
The generator set $\mathfrak{H}$ consists of three families of Hermitian matrices: off-diagonal real symmetric matrices (\autoref{eq: generators1}), off-diagonal imaginary anti-symmetric matrices (\autoref{eq: generators2}), and diagonal traceless real matrices (\autoref{eq: generators3}). 
These matrices span the entire space of traceless Hermitian operators on $\mathbb{C}^{2^\eta}$ and yield $4^\eta - 1$ linearly independent elements, matching the dimension of the Lie algebra $\mathfrak{su}(2^\eta)$. 
Their construction guarantees that the Lie algebra generated by $\mathfrak{H}$ is closed under commutation and coincides with $\mathfrak{su}(2^\eta)$.
Moreover, since the Pauli basis forms a complete operator basis of $\mathfrak{su}(2^\eta)$, each Hermitian generator $h_k \in \mathfrak{H}$ can be decomposed as a real linear combination of Pauli basis elements.
This establishes a direct correspondence between the generator-based formalism and standard quantum circuit constructions based on Pauli rotations.
\end{proof}
This implies that any unitary generated by $h_k$ can be synthesized into a gate-based quantum circuit using known decomposition techniques. 
Hence, $\mathfrak{H}$ not only provides complete algebraic coverage for encoding but also remains practically compatible with current quantum hardware.

With the set of generators $\mathfrak{H}$ in hand, it is now possible to encode a real-valued $g$-dimensional parameter vector $\vphi\in\mathbb{R}^g$ into a sequence of unitary operators.
Theoretically, for an 8-qubit system, we could encode $65,535$ parameters into this sequence. 
Therefore, we propose to merge them into \textit{Variational Generator Groups} (VGGs), combining multiple generators with a single parameter. 

To do so, we split the list of generators into equally sized partitions. 
Every group $\mathcal{G}\subset\mathfrak{H}$ constitutes a set of generators linearly combined to form a single Hermitian matrix $\hat{\mH}_i$, constructed according to \autoref{alg:vgg_construction}, which can be mapped to a unitary operator $\hat{\mU}$ using the time development of the quantum state, substituting the time dependence by a parameter element $\vphi_i$: 
\begin{align}
 \hat{\mU}_{\vphi_i} = e^{-i \cdot \vphi_i \cdot \hat{\mH}_i} \label{eq: paramintroduction} 
\end{align}

\begin{algorithm}
\caption{Construction of Variational Generator Groups (VGGs)}\label{alg:vgg_construction}
\begin{algorithmic}[1]
\REQUIRE Generator set $\mathfrak{H}$ from \autoref{alg:generator_construction}, number of groups $g$, projection width $w\in[0,\eta]$
\ENSURE Grouped generators $\hat{\mH}\gets \mathbf{0}_{(g,H,H)}$ \hfill{\color{gray}$\vartriangleright$ Initialize empty VGGs with $H\gets 2^\eta$.}
\STATE Construct ordering $idx$ by alternating generators from \eqref{eq: generators1}, \eqref{eq: generators2}, \eqref{eq: generators3} evenly.
\STATE $perm \gets \langle \lfloor k\cdot (w/\eta)\cdot g \rfloor \rangle_{k=0}^{|\mathfrak{H}|-1}$ \hfill{\color{gray}$\vartriangleright$ Target stride (width-controlled).}
\IF{$perm$ not a permutation}
\STATE $perm\gets \langle (k\cdot 2^{\lfloor \log_2((w/\eta)\cdot g)\rfloor})\bmod D\rangle_{k=0}^{|\mathfrak{H}|-1}$. \hfill{\color{gray}$\vartriangleright$ Ensure bijection.}
\ENDIF
\STATE $idx \gets idx[perm]$ \hfill{\color{gray}$\vartriangleright$ Apply stride permutation.}
\STATE{\textbf{for} $(i,j)\in[0,g[\;\times\;[0,|\mathfrak{H}|/g[$ \textbf{do}} \hfill{\color{gray}$\vartriangleright$ Merge generators into groups.}
\begin{ALC@g}\STATE $\hat{\mH}[i]\gets \hat{\mH}[i]+\mathfrak{H}[perm[i\mathfrak{H}|/g+j]]$ \hfill{\color{gray}$\vartriangleright$ Select permuted index and merge.}\end{ALC@g}
\STATE{\textbf{end for}}
\end{algorithmic}
\end{algorithm}

By grouping the generators, we are able to define the number of parameters that are introduced into the unitary. 
We choose the total number of groups $g$ (i.e., the number of embeddable parameters) as: 
\begin{align}\label{eq:groups}
&\quad\quad 
g = \frac{|\mathfrak{H}|}{\Gamma_\eta}
= \left(3\cdot2^\eta - 6\cdot\left(\eta\;\mathrm{mod}\;2\right) + 3\right)
\end{align}
to ensure scaling the number of generators per group $\Gamma_\eta$ (\autoref{eq:generators_per_group}) approximately exponentially with the total number of generators $|\mathfrak{H}|$.
The grouping process is further parameterized by the projection width $w$ that determines the stride at which generators are assigned to groups. 
By varying this projection width, we can control the density of generators per VGG (cf. \autoref{fig:projection comparison}). 
Consequently, \autoref{alg:vgg_construction} ensures an even distribution of generators across all groups.
Overall, using a wide stride helps promote linear independence among the grouped operators by mixing structurally diverse generators, which can improve parameter identifiability.
Further justification for this approach, including comparisons to ungrouped and fully grouped schemes, is discussed in \autoref{app:grouping}, while the empirical properties of the resulting VGGs are analyzed in \autoref{sec:analysis} and \autoref{app:grouping_hp}. 
The theoretical analysis in \autoref{app:grouping_analysis} furthermore shows that expressivity is preserved as long as the groups form a strict partition of $\mathfrak{H}$.

\begin{figure*}[t]\centering
  \includegraphics[width=0.8\linewidth]{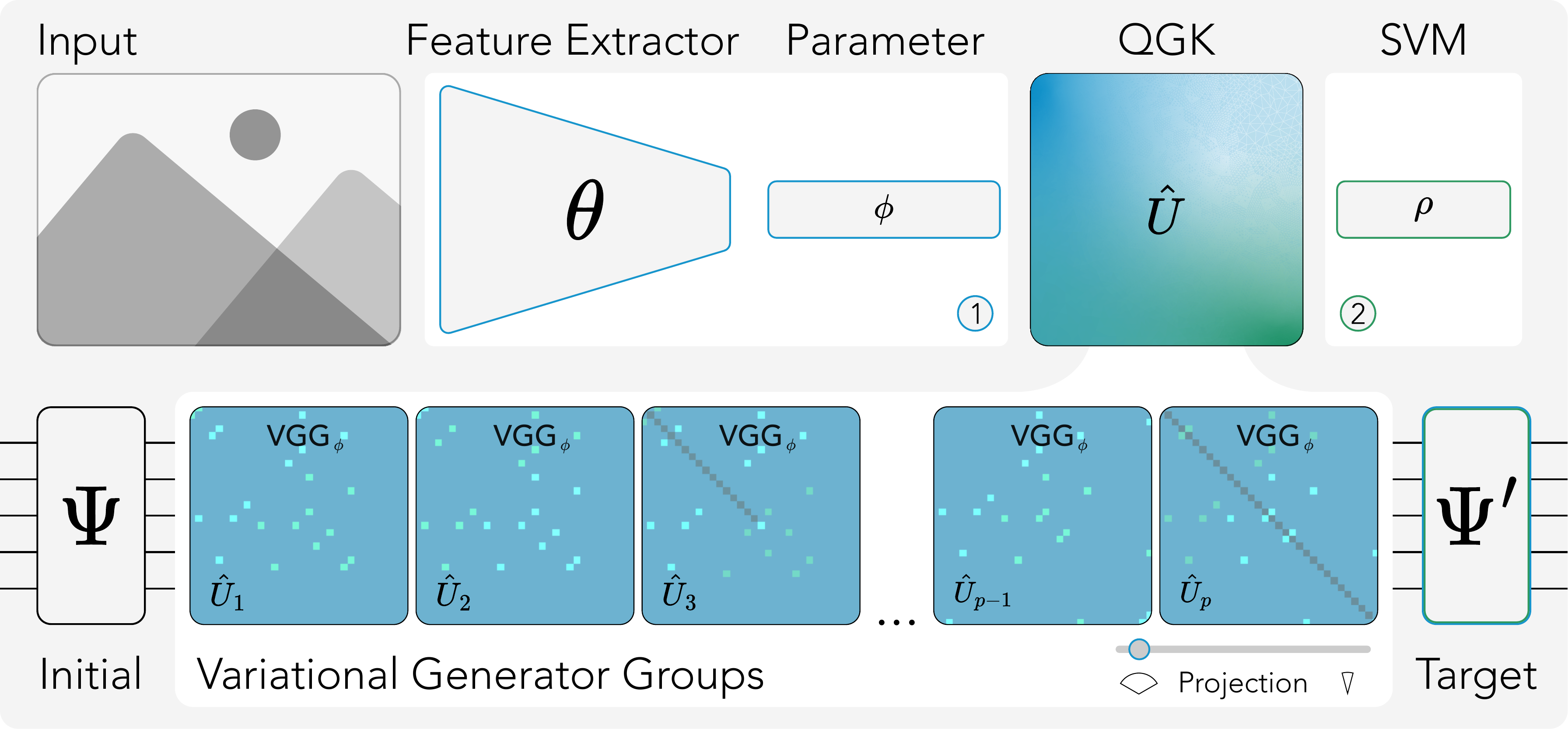}
  \caption{Quantum Generator Kernel: A generator-based quantum kernel architecture based on VGGs for parameterizable projection. Each colored matrix corresponds to one of $g=93$ \textit{Variational Generator Groups} (VGGs) merged for $\eta=5$ qubits, visualized as heatmaps of the magnitude (blue) and phase (green) of the resulting generators merged into the operator. The QGK is parameterized by the context $\vphi$, which is either given directly by the input or extracted from the input using a feature extractor, adapted during the pre-training phase (1) by updating the parameters $\vtheta$ to minimize the Kernel-Target Alignment (KTA) loss. 
  In phase (2), a support vector machine (SVM), parameterized by $\valpha$, is trained using the resulting QGK $\hat{\mU}$.
}\label{fig:QGK}
\end{figure*}

\section{Quantum Generator Kernels}\label{sec:qgk}

Via this procedure, we create a VGG with a unitary operator $\hat{\mU}_{\vphi_i}$ for every parameter element $\vphi_{i}$, which can be decomposed into quantum gates or applied directly onto an initial quantum state.
When composing the set of VGGs in a sequence, we obtain one condensed unitary that incorporates all parameters $\vphi$:
\begin{align}\hat{\mU}_{\vphi} = \exp\left(\sum_{i=1}^g -i\cdot\vphi_i\cdot\hat{\mH}_i\right) 
\label{eq:unitary} \end{align}
As shown in Eq.~\eqref{eq: closed_matmul_group}, the group of unitary matrices is closed regarding multiplication; hence, $\hat{\mU}_{\vphi}$ must also be unitary.
We refer to this unified operator $\hat{\mU}_{\vphi}$ as the \textit{Quantum Generator Kernel} (QGK), illustrated in \autoref{fig:QGK}, applied via 
\begin{align}\label{eq:qgk}\varphi(x) = \hat{\mU}_x \ket{\Psi} = \ket{\Psi'}\end{align}
to the initial uniform superposition state $\ket{\Psi}$, generated starting from the ground state $\ket{0}$ using $\otimes_{i=1}^\eta\mathcal{H}\ket{0} = \ket{\Psi},$ with $\eta$ qubits and the Hadamard gate $\mathcal{H}$ applied to all qubits. 

\newpage

To calculate the kernel matrix $\mK$ from our QGK, we use the fidelity as the distance between the states: 
\vspace{-1pt}\begin{align}\label{eq:kernel_matrix}
\mK = k(x_i,x_j) = \left| \bra{\Psi} \hat{\mU}^\dagger_{x_j} \hat{\mU}_{x_i}\ket{\Psi}\right|^2
\end{align}
To embed data using the QGK (cf. \autoref{fig:QGK}, phase 1), we can either use the input data to parameterize the VGGs directly, i.e., $\vphi=x\in \mathbb{R}^d$ as denoted in Eq.~\eqref{eq:qgk}, or use a feature extractor $\mathcal{F}_\vtheta: \mathbb{R}^d\mapsto\mathbb{R}^g$, parameterized by $\vtheta$, s.t. $\vphi = \mathcal{F}_\vtheta(x)$. 
Note that to directly embed input data, the number of groups $g$ must be set according to the input dimension $d$.
However, when using a feature extractor, we can decouple the input dimension from the number of generator groups. This enables dimensionality reduction, which we quantify via the compression factor $\gamma = d/g$.
With the embedded data, the kernel matrix can be used to fit a support vector machine (SVM) parameterized by $\valpha$ according to Eq.~\eqref{eq:svm}, to perform arbitrary classification tasks. 
To pre-train the parameterization $\vtheta=\langle W,b\rangle$ of the feature extractor, we suggest using the Kernel Target Alignment (KTA) loss, as suggested in \citep{hubregtsen2022training}:
\begin{align}\label{eq:KTA}
\mathcal{L}_{\text{KTA}} = 1 - \frac{\Tr(\mK \mY)}{\| \mK \|_{\mathrm{F}} \cdot \| \mY \|_{\mathrm{F}}}, 
\end{align}
with the kernel matrix $\mK$ and the classification targets $\mY$, where $\Tr(\mA)$ is the trace of matrix $\mA$ and $||\mA||_{\mathrm{F}}$ is the Frobenius norm.
In our experiments, we restrict $\mathcal{F}_\vtheta$ to a linear affine transformation of the form $\vphi = W x + b$.
This avoids introducing classical nonlinearities, and ensures that all expressive capacity stems from the quantum kernel.
The resulting model class remains confined to the RKHS induced by the kernel $\mathcal{K}_{\vphi}$ (see \autoref{app:linear_projection} for a theoretical analysis).

\section{Related Work}

\paragraph{Embedding Processes}
To encode classical data into quantum systems, various strategies have been developed, ranging from fixed mappings such as \textit{basis encoding}, where binary values $b \in \{0,1\}$ are mapped to computational basis states $\ket{b}$, to more compact methods such as \textit{amplitude encoding}, where a normalized vector $v \in \mathbb{R}^{2^\eta}$ is embedded directly into the amplitudes of a quantum state $\ket{\Psi}$. While amplitude encoding uses only $\eta$ qubits to represent exponentially large vectors, its practical use is limited due to costly state preparation and reduced kernel expressivity unless followed by complex unitaries~\citep{10.1109/TCAD.2023.3244885, schuld2021effect, schuld2019quantum, huang2021power}. \textit{Angle encodings}, or Pauli rotational embeddings, instead map real-valued inputs into single-axis rotations such as $R_X(x)$ or $R_Z(x)$, and form the basis of many variational quantum circuits. Their expressivity is often enhanced by \textit{data reuploading}~\citep{perez2020data}, which introduces nonlinearity through repeated injection of inputs, enabling the circuit to approximate Fourier-like transformations~\citep{PhysRevA.109.042421}. The Quantum Embedding Kernel (QEK)~\citep{hubregtsen2022training} leverages this mechanism to train kernel functions via KTA maximization. However, such methods rely on axis-aligned encodings and fixed circuit structures. 
In contrast, recent work has explored more expressive multi-axis or multi-qubit rotational embeddings.
In this context, our \textit{Variational Generator Groups} (VGGs) can be seen as a structured and systematic \emph{generalization} of multi-qubit angle encoding. 
Instead of encoding each input feature through a single-axis rotation, VGGs construct
unitaries from grouped, algebraically structured combinations of Hermitian generators $\hat{\mH}_k = \sum_{h \in \mathcal{G}_k \subset \mathfrak{H}} h$, where each $h$ is a linear combination of Pauli strings spanning the full Pauli basis (cf.~\autoref{th:coverage}). 
The resulting unitary transformations $\text{exp} (-i \vphi_k \hat{\mH}_k)$ 
implement data-dependent, \emph{multi-axis} and \emph{multi-qubit} transformations within the Lie algebra $\mathfrak{su}(2^\eta)$.
This distinguishes VGGs from traditional angle encoding in two key ways:
While angle encodings act on one Pauli axis per qubit, VGGs use grouped generators that cover structured subspaces of $\mathfrak{su}(2^\eta)$, enabling richer transformations per parameter.
Each group induces entangling, correlated rotations across qubits, going beyond independent, axis‑aligned operations.
Thus, VGGs form a principled extension of multi‑axis angle encoding, bridging fixed rotational embeddings and full Hamiltonian variational models
This allows for greater expressivity per qubit, making it particularly effective for kernel learning in qubit-constrained settings.

\paragraph{Hybrid QML}
Motivated by the limited capabilities of current quantum hardware, hybrid quantum-classical machine learning approaches have been explored in literature, which add pre- and postprocessing layers to the quantum model \citep{Mari2020transferlearningin}. While this approach allows the exploration of problems that are beyond the capabilities of current quantum hardware by handling larger data sizes, it blurs the individual contributions of the classical and the quantum components to the overall solution quality \citep{10.5220/0011772400003393,10.5220/0012381600003636}. 
This issue is particularly pronounced in \textit{Dressed Quantum Circuits} (DQC) \citep{Mari2020transferlearningin}, where nonlinear neural networks appear both before and after the quantum circuit. By the universal approximation theorem \citep{hornik1989multilayer}, these classical networks can approximate arbitrary continuous functions even if the quantum circuit acts trivially, making it difficult to attribute improvements to quantum processing. In contrast, our QGK uses only an affine transformation to parameterize generator weights and no nonlinear post‑processing, ensuring that expressivity is strictly governed by the quantum kernel (see \autoref{app:linear_projection}).
A recent example combining the above principles is the \textit{Hardware Efficient Embedding} (HEE) \citep{thanasilp2024exponential}, where input-dependent rotations are arranged in layered circuits interleaved with entangling gates (e.g., CNOT or CZ). These embeddings are expressive and compatible with near-term devices, but they can lead to exponentially concentrated kernel values and barren features unless carefully controlled.
To adapt the input dimensionality to the limited number of qubits, the authors apply \textit{principal component analysis} (PCA).
Even though the input dimensionality of our proposed generator-based approach scales exponentially with the number of qubits, in contrast to the linear scaling of the HEE, its preprocessing denotes a similar approach to the linear layer we introduce.
However, rather than static feature extraction, we further use this pre-processing to pre-train the projection of the kernel. 
Another recent direction is the \textit{Projected Quantum Kernel} (PQK) \citep{huang2021power}, which avoids fidelity-based kernels by extracting quantum features through the \textit{one‑particle reduced density matrix} (1‑RDM) and subsequently applying a classical kernel function.
Similar to HEE, the PQK is limited by linear input scaling and relies heavily on classical compression techniques such as PCA. 
Furthermore, it may suffer when the underlying label structure is not well aligned with the kernel geometry induced by the projected quantum features.
In contrast, the QGK supports high-dimensional inputs via grouped Hamiltonian encoding and offers kernel-target alignment (KTA) pre-training to adapt the kernel to task-specific structures.

\paragraph{Generator-based approaches}
The exploration of generator-based quantum computing is rather recent, compared to the longer exploration of gate-based variational circuits \citep{nielsen_quantum_2010}. Generators are used in more mathematical explorations of quantum computing \citep{mansky_near-optimal_2023}. In particular, they are found in the treatment of barren plateaus \citep{arrasmith_effect_2021, goh_lie-algebraic_2023, ragone_unified_2024} and specialized circuits that focus on restricting subspaces \citep{schatzki_theoretical_2024, nguyen_theory_2024}.
A direct equivalent to the generators does not exist in classical machine learning, owing to the different mathematical structure \citep{bronstein_geometric_2021}.

\section{Empirical Analysis}\label{sec:analysis}

In this section, we aim to provide an empirical analysis of the resulting properties of the proposed \textit{Quantum Generator Kernel} (QGK) and compare them to the \textit{Quantum Embedding Kernel} (QEK), \textit{Hardware Efficient Embedding} (HEE), and \textit{Projected Quantum Kernel} (PQK) as representative state-of-the-art approaches to quantum kernel methods, as well as classical \textit{Radial Basis Function} (RBF) and Linear kernels.
\autoref{fig:scalability-analysis} summarizes our results. 

\begin{figure*}[ht] \centering
\subfloat[Parameters]{\includegraphics[width=0.2\linewidth]{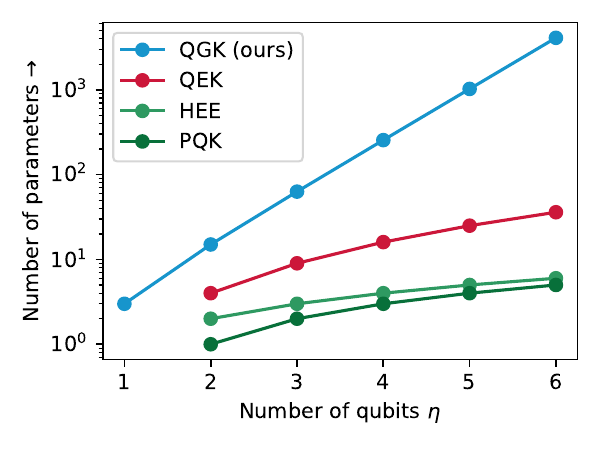}\label{fig:scale:params}}
\subfloat[Entanglement]{\includegraphics[width=0.2\linewidth]{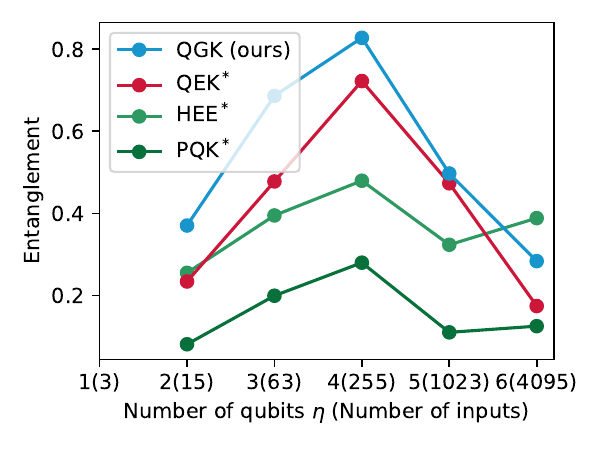}\label{fig:scale:entangle}}
\subfloat[Expressibility]{\includegraphics[width=0.2\linewidth]{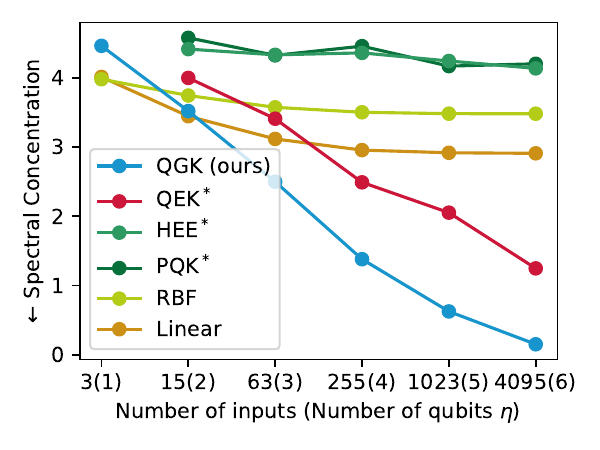}\label{fig:scale:expressibility}}
\subfloat[Complexity]{\includegraphics[width=0.2\linewidth]{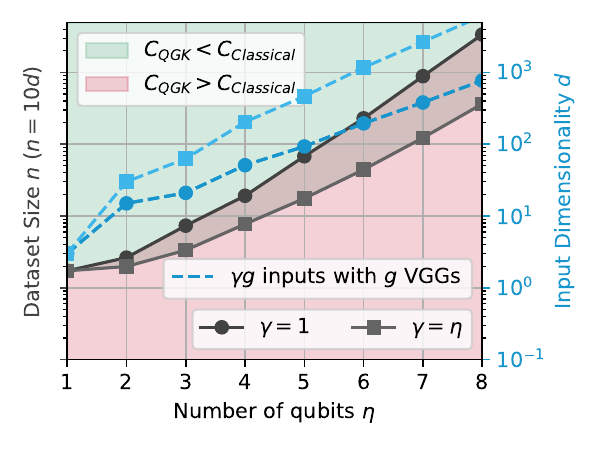}\label{fig:scale:complexity}}

\caption{Comparing the properties of the QGK to classical and quantum kernels w.r.t. the number of available qubits $\eta$, regarding the: \protect\subref{fig:scale:params} number of available parameters, \protect\subref{fig:scale:entangle} entanglement capability by means of the Meyer-Wallach measure, \protect\subref{fig:scale:expressibility} expressibility by means of the spectral concentration, and \protect\subref{fig:scale:complexity} computational complexity, showing the complexity breakeven dataset size $n$ when classically simulating the QGK (left y-axis) and the number of VGGs and resulting inputs (right y-axis, blue), scaled to $n=10d$ ($n\gg d$). The properties of the QEK$^*$, HEE$^*$, and PQK$^*$ are reported w.r.t the input scalability shown in \protect\subref{fig:scale:params}\protect\footnotemark. Refer to \autoref{app:grouping_hp} for an extended analsyis.
}
\label{fig:scalability-analysis}
\end{figure*}

\paragraph{Parameter efficiency} \autoref{fig:scale:params} shows the scalability of the number of parameters w.r.t. the number of qubits. 
For the QEK, we assume a maximum of $\eta^2$ input parameters, i.e., using $\eta$ layers to embed the input data, scaling quadratically with the number of qubits.
Utilizing generators in Hilbert space rather than fixed gate-based embedding processes, our approach shows significantly improved qubit efficiency, scaling exponentially with the number of qubits.
Thus, QGK offers an overall higher data capacity, which is especially crucial for realizing real-world applications on near-term quantum devices.
To ensure a fair comparison with equal parameter counts, the following analysis is based on the maximum number of QGK parameters, where the QEK is extended by parameterized reuploading layers to replenish the additional parameters.  
As mentioned above, the input capacity of methods such as HEE and PQK scales only linearly with the number of qubits.

\paragraph{Kernel properties} 
\autoref{fig:scale:entangle} shows the maximum entanglement capabilities by means of the Meyer-Wallach measure.
For all approaches, the entanglement capability peaks around four qubits and steadily decreases afterwards. 
Due to the mutli-qubit nature of the grouped generators, the QGK shows the overall highest entanglement capabilites. 
\autoref{fig:scale:expressibility} compares the kernels' spectral concentration as a measure of expressibility. 
Notably, the QGK exhibits the highest expressibility (low spectral concentration), indicating strong resilience against expressibility collapse and exponential concentration, despite operating in exponentially larger Hilbert spaces.
Also, benefiting from its generator-based structure with high parameter density and even coverage of the available parameter space, the QGK shows increasing expressibiltiy with increasing qubit count.
The compared kernels show less-decreasing concentration with increasing input size, suggesting weaker scalability with added parameters. 
These results highlight QGK’s favorable expressibility-to-learnability trade-off, making it a scalable and efficient alternative for classical and quantum kernels.
To assess architectural sensitivity, we analyze the impact of group size and projection stride in \autoref{app:grouping_hp}. 
The results confirm that expressibility and entanglement remain stable across all tested configurations, with exponential group scaling and wide stride yielding the most consistent and expressive feature maps. Finer groupings enhance expressiveness slightly, while projection width has only marginal impact. These findings reinforce the theoretical robustness guarantees presented in \autoref{app:grouping_analysis} and support our default grouping design as a scalable and well-conditioned choice.

\footnotetext[1]{Matching the input-scale of our QGK would require tens to thousands of qubits, which is infeasible for simulation or current hardware without aggressive classical dimensionality reduction that would obscure the intrinsic quantum embedding evaluated.}

Despite the exponential scaling in qubit number $\eta$, the QGK remains classically simulable due to its decomposition into tensor-efficient operations: generator construction, input projection, quantum evolution, and pairwise kernel evaluation. 
Unlike classical kernels such as RBF or Linear, which incur $\mathcal{O}(n^2 \cdot d)$ cost for computing the similarity matrix, QGK scales as $\mathcal{O}(4^\eta + n \cdot \gamma \cdot g^2 + n \cdot 8^\eta + n^2 \cdot 2^\eta)$, with $g$ VGGs and the compression ratio $\gamma=d/g$. 
This structure favors sample-efficient scenarios revealed by the complexity analysis shown in \autoref{fig:scale:complexity}, indicating two key properties of the QGK: 
(i) for low qubit number ($\eta\leq5$) QGK offers a computational advantage over classical kernels when using a 1:1 mapping ($\gamma=1$).
(ii) to enable larger-scale applications (e.g., $d > 100$) efficiently, hybrid approaches with $\gamma > 1$ are required to compress the input. 
E.g., using $\gamma=\eta$ pushes the lower efficiency bound further down, such that the input dimension reference ($d=\eta g$ for $g$ VGGs, light blue) does not intersect anymore. 
This hybrid execution enables scalable and classically feasible QGK training in the NISQ era and lays the groundwork for full quantum execution on future fault-tolerant architectures. A detailed analysis, including complexity thresholds, approximations, and formal bounds, is provided in \autoref{app:complexity}.
Importantly, while the QGK provides a path to handling large input dimensionality efficiently it shares the same quadratic complexity in the number of training samples $n$ as all classical kernel methods due to the $\mathcal{O}(n^2)$ kernel matrix computation. As with classical approaches, this can become prohibitive for very large datasets.
To mitigate this, kernel approximation techniques such as the Nyström method \citep{williams2000using} or random feature expansions \citep{rahimi2007random} can be used to reduce training complexity to near-linear in $n$, with minimal performance degradation.

\section{Evaluation}\label{sec:eval}

To empirically validate the properties and advantages discussed above, this section demonstrates the kernels' trainability, scalability, and hardware applicability across various tasks.
As small-scale synthetic benchmarks, we use \texttt{moons} and \texttt{circle} with $n\!=\!200$ and $d\!=\!2$ input features, augmented with 20\% noise \citep{pedregosa2011scikit}. 
Furthermore, we use the \texttt{bank} dataset \citep{moro2014data,bank_marketing_222} ($n\!=\!2000$, $d\!=\!16$, 2 classes) as well as the 10-class \texttt{MNIST} \citep{lecun1998gradient} ($d\!=\!784$) and \texttt{CIFAR10} citep{Krizhevsky09learningmultiple} ($d\!=\!3072$) datasets with $n\!=\!10000$, to show scalable real-world applicability.
In addition to the QEK~\citep{hubregtsen2022training}, PQK~\citep{huang2021power}, and HEE~\citep{thanasilp2024exponential}, we evaluate \textit{Radial Basis Functions} (RBF), \textit{Linear Kernels}, and a small \textit{Multi-Layer Perceptron} (MLP) as classical state-of-the-art baselines. 
To ensure even capabilities, QEK and HEE circuits comprise one data-reuploading layer, i.e., $2\cdot d/\eta$ layers parameterized by Y- and CZ-rotations for the QEK, and $2$ X-rotational input embedding layers for the HEE.
For the PQK baseline, we follow \citet{huang2021power}, projecting to $\eta-1$ input features via PCA and computing an RBF kernel over the one-particle reduced density matrix (1-RDM). Relabeling is omitted to reflect general-purpose scenarios.
To parameterize the feature projection of the QGK, we use a single linear layer. 
The baseline MLP is using a single hidden layer of size $g$, matching the number of generator groups.
To provide an ablation quantifying the impact of using KTA to train the kernel, we also report the untrained QGK performance (\textit{QGK Static}).
To additionally quantify the impact of the generator-based kernel itself and delimit it from the classical preprocessing, we adapted our linear pre-processing to the classical Linear Kernel (\textit{Linear KTA}) and HEE (\textit{HEE Linear}), whereas \textit{HEE} refers to the untrained approach using PCA for feature extraction.  
We pre-train the embedding parameterization and baseline MLP for $100$ epochs using the Adam optimizer with learning rate $10^{-(\eta-1)}$.
Unless stated otherwise, we use $\eta = 2$ for the binary and $\eta = 5$ for the multi-class benchmarks.
As a general performance metric, we use the classification accuracy on an unseen $10\%$ test-split, additionally reporting the \textit{Kernel Target Alignment} (KTA).
All results are averaged over eight random seeds with 95\% confidence intervals.
Non-pretrained approaches are shown as dashed horizontal lines. 
All evaluations were conducted on Apple M2 Ultra hardware with 192GB memory and a total compute time of approximately 96 hours using torchquantum \citep{hanruiwang2022quantumnas}.

\begin{figure*}[t]\centering
\includegraphics[width=0.95\linewidth]{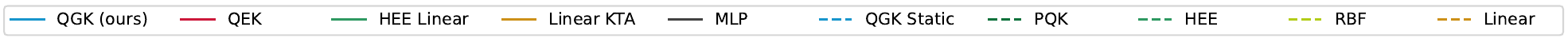}\\ \vspace{-1em}
  \subfloat[\centering\texttt{moons}\label{fig:training:moons}]{\includegraphics[width=0.2\linewidth]{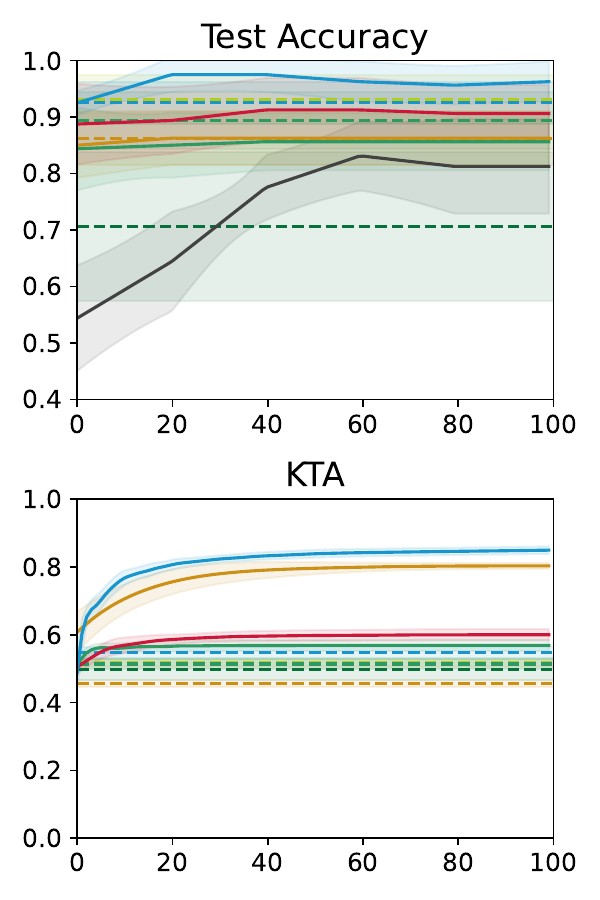}} 
  \subfloat[\centering\texttt{circles}\label{fig:training:circles}]{\includegraphics[width=0.2\linewidth]{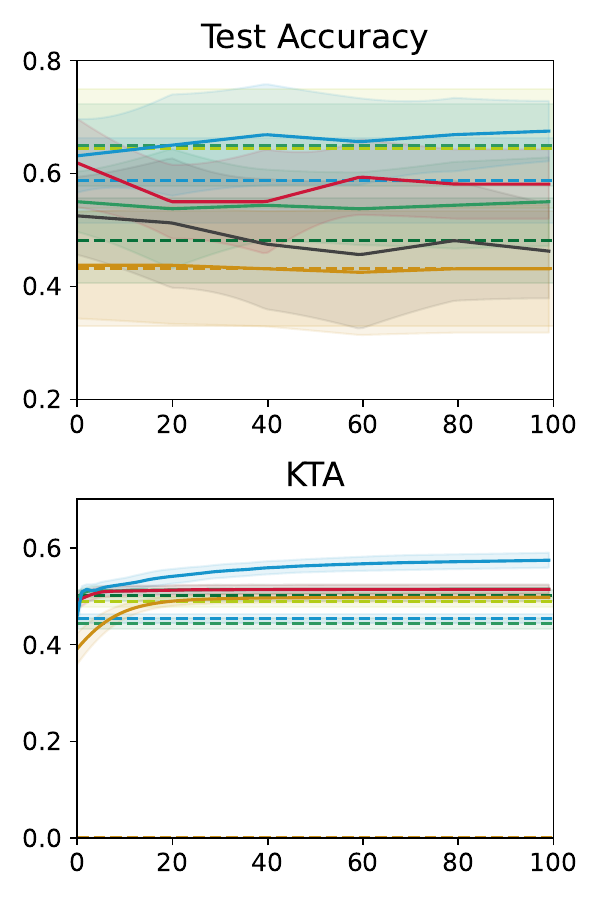}} 
  \subfloat[\centering\texttt{bank}\label{fig:training:bank}]{\includegraphics[width=0.2\linewidth]{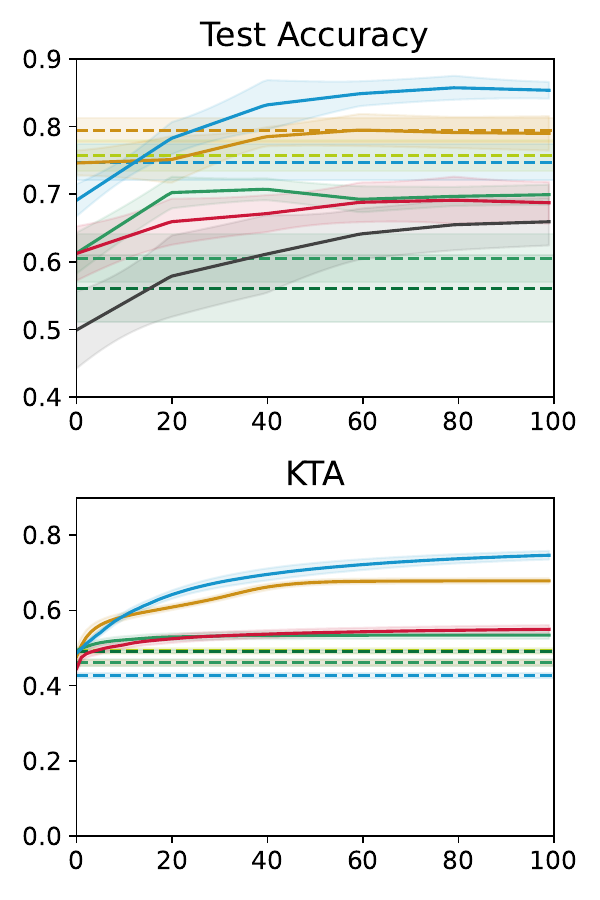}}
  \subfloat[\centering\texttt{MNIST}\label{fig:training:mnist}]{\includegraphics[width=0.2\linewidth]{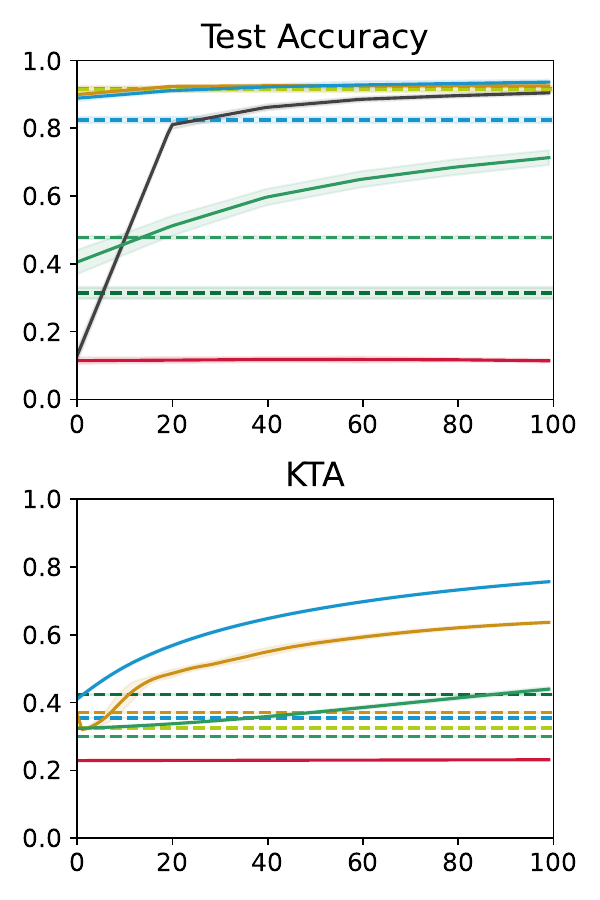}} 
  \subfloat[\centering\texttt{CIFAR10}\label{fig:training:cifar}]{\includegraphics[width=0.2\linewidth]{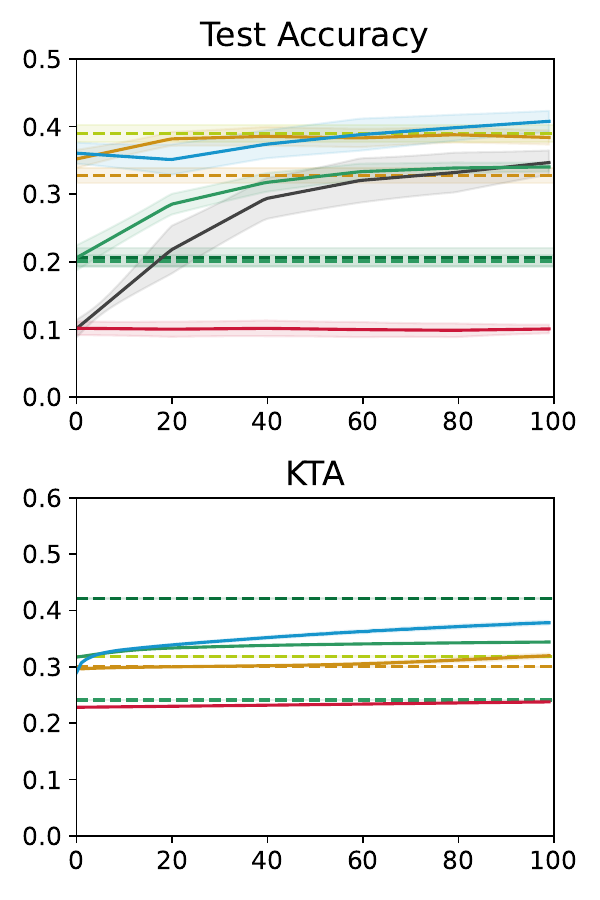}} 
  \caption{Training performance of the QGK(blue), QEK(red), PQK(dark green), HEE(green), RBF(yellow), and Linear(orange) Kernels and MLP(grey) w.r.t. the Test Accuracy (top) and KTA (bottom) in the \texttt{moons}, \texttt{circles}, \texttt{bank}, \texttt{MNIST}, and \texttt{CIFAR10} benchmarks, with the QGK outperforming all compared quantum kernels and classical kernels baselines. Final accuracies are summarized in \autoref{tab:final_acc}. }\label{fig:training}
\end{figure*}

\paragraph{Small-scale tasks}
The training results for the \texttt{moons} dataset are shown in \autoref{fig:training:moons}. 
Notably, QGK outperforms both embedding-based approaches.
Training the projection increases the final test accuracy from $93\%$ to $96\%$, and even outperforms both classical approaches, with a final KTA above $0.8$.
Similar performance trends are prevalent in the \texttt{circles} benchmark shown in \autoref{fig:training:circles}, where the QGK outperforms all compared quantum and classical approaches. 
The overall higher accuracy compared to the HEE Linear and KTA Linear ablations further underscores the effectiveness of our generator-based kernel over embedding-based alternatives and beyond the linear preprocessing.
Looking at the results for the real-world \texttt{bank} dataset in \autoref{fig:training:bank}, the QGK continues to expand its performance lead. 
Despite the increased complexity introduced by the 16-dimensional input space, it achieves a mean final test accuracy of $86\%$, significantly outperforming all competing methods. 
This highlights QGK’s ability to maintain robustness and accuracy even in higher-dimensional, non-trivial learning tasks.

\paragraph{Large-Scale Tasks}
To evaluate the scalability of the QGK, we investigate its performance on the 10-class \texttt{MNIST} and \texttt{CIFAR10} benchmarks, shown in \autoref{fig:training:mnist} and \autoref{fig:training:cifar}. On \texttt{MNIST}, QGK achieves a final test accuracy of around $94\%$, surpassing the performance of all classical baselines, and significantly outperforming all compared quantum kernels. 
Even without any trained projection, QGK Static achieves strong performance, highlighting the expressive nature of the generator-based representation.
On \texttt{CIFAR10}, a considerably more challenging dataset ($d\!=\!3072,\ n\!=\!10000$), QGK still achieves the highest accuracy among all tested methods around $41\%$, despite the strong compression required for five-qubit compatibility. In contrast, all quantum baselines (QEK, HEE, PQK) and classical approaches (RBF, Linear, MLP) show limited performance under the same constraints. These results illustrate the strong adaptability of QGK under large input dimensionality and its ability to generalize beyond synthetic or low-dimensional tasks.

\paragraph{Hardware Compatibility}
To assess compatibility with current quantum hardware and evaluate robustness to noise, we compiled all compared circuits to IBM’s Falcon architecture and simulated them using realistic noise models\footnote{Using the 27 qubit IBM Falcon processor~\textit{FakeToronto}.} using datasets with $n\!=\!200$. 
As summarized in \autoref{tab:result_summary}, for the binary classification tasks, all methods maintain manageable depths below 100, allowing realistic simulation under noise. 
Notably, QGK outperforms all compared approaches even when executed under realistic hardware noise. 
On larger-scale tasks, the compiled depths diverge, reflecting significant differences in the encoding strategies (cf. \autoref{tab:hyperparameters}).
HEE achieves the lowest depth but only encodes five features (less than 1\% of the input dimension, even for the lower-dimensional \texttt{MNIST}), relying heavily on classical preprocessing, arguably limiting its comparability. 
At the other extreme, QEK encodes all features without classical reduction, but at the cost of unmanageable circuit depths exceeding 10k, demonstrating its poor scalability due to the fixed embedding structure. 
In contrast, QGK strikes a practical balance: grouping generators to embed 93 features using five qubits yields manageable depths below 5k, without heavy preprocessing. 
The simulated QGK results of $81\%$ and $26\%$ for \texttt{MNIST} and \texttt{CIFAR10} respectively, still significantly outperform all compared quantum kernels and only slightly underperform the noise-free classical execution.
Notably, QGK’s structured design allows further depth reductions via generator pruning, paving the way for efficient deployment on fault-tolerant quantum systems.

\begin{table*}[ht]\centering\tiny
\resizebox{\textwidth}{!}{%
\begin{tabular}{|l@{\hspace{2pt}}c|l@{}c|l@{}c|l@{}c|l@{}c|l@{}c|l@{}c|}\hline
Method  &
& \multicolumn{2}{|c|}{\texttt{moons} ($d{=}2,\eta{=}2$)} 
& \multicolumn{2}{|c|}{\texttt{circles} ($d{=}2,\eta{=}2$)} 
& \multicolumn{2}{|c|}{\texttt{bank} ($d{=}16,\eta{=}2$)} 
& \multicolumn{2}{|c|}{\texttt{MNIST} ($d{=}784,\eta{=}5$)} 
& \multicolumn{2}{|c|}{\texttt{CIFAR10} ($d{=}3072,\eta{=}5$)} \\\hline

QGK (ours) & \multirow{4}{*}{\rotatebox[origin=c]{90}{noisy}}
& \multirow{4}{*}{\rotatebox[origin=c]{90}{$n{=}200$}}
& $\bif{96\% (28)}$ & \multirow{4}{*}{\rotatebox[origin=c]{90}{$n{=}200$}}
& $\bif{67\% (28)}$ & \multirow{4}{*}{\rotatebox[origin=c]{90}{$n{=}200$}}
& $\bif{88\% (28)}$ & \multirow{4}{*}{\rotatebox[origin=c]{90}{$n{=}200$}}
& $\bif{81\% (4754)}$ & \multirow{4}{*}{\rotatebox[origin=c]{90}{$n{=}200$}}
& $\bif{26\% (4754)}$ \\

QEK &
& & $\itf{86\%\ (29)}$
& & $\itf{59\%\ (29)}$
& & $\itf{66\%\ (191)}$
& & $11\%\ (12006)$
& & $11\%\ (47223)$ \\

HEE &
& & $\itf{88\%\ (18)}$
& & $\itf{61\%\ (18)}$
& & $\itf{59\%\ (18)}$
& & $21\%\ (53)$
& & $17\%\ (53)$ \\

PQK &
& & $\itf{53\%\ (362)}$
& & $\itf{48\%\ (362)}$
& & $\itf{51\%\ (368)}$
& & $19\%\ (416)$
& & $14\%\ (416)$ \\

\hline

QGK (ours) & \multirow{4}{*}{\rotatebox[origin=c]{90}{exact}}
& \multirow{4}{*}{\rotatebox[origin=c]{90}{$n{=}200$}}
& $\mathbf{0.96 \pm 0.04}$ & \multirow{4}{*}{\rotatebox[origin=c]{90}{$n{=}200$}}
& $\mathbf{0.68 \pm 0.05}$ & \multirow{4}{*}{\rotatebox[origin=c]{90}{$n{=}2k$}}
& $\mathbf{0.85 \pm 0.01}$ & \multirow{4}{*}{\rotatebox[origin=c]{90}{$n{=}10k$}}
& $\mathbf{0.94 \pm 0.01}$ & \multirow{4}{*}{\rotatebox[origin=c]{90}{$n{=}10k$}}
& $\mathbf{0.41 \pm 0.02}$ \\

MLP &
& & $0.81 \pm 0.08$
& & $0.46 \pm 0.08$
& & $0.66 \pm 0.04$
& & $0.90 \pm 0.01$
& & $0.35 \pm 0.02$ \\

RBF &
& & $0.93 \pm 0.04$
& & $0.64 \pm 0.11$
& & $0.76 \pm 0.02$
& & $0.92 \pm 0.01$
& & $0.39 \pm 0.01$ \\

Linear &
& & $0.86 \pm 0.05$
& & $0.43 \pm 0.10$
& & $0.79 \pm 0.02$
& & $0.92 \pm 0.01$
& & $0.33 \pm 0.01$ \\
\hline
\end{tabular}}

\caption{Final test accuracies across five benchmarks, comparing QGK (ours) with quantum kernels (QEK, HEE, PQK) and classical (MLP, RBF, Linear) baselines. Upper: results obtained from noisy circuit simulation on IBM Falcon hardware \citep{qiskit2024}, with the compiled circuit depth in parentheses. Lower: Final accuracies under noise-free execution as reported in \autoref{fig:training}. See Tabs.~\ref{tab:simulation_acc},~\ref{tab:final_acc} for extended results. Even under hardware noise, QGK consistently achieves the highest accuracy, demonstrating superior robustness and scalability across synthetic and real-world datasets.
}\label{tab:result_summary}
\end{table*}

\section{Conclusion}\label{sec:conclusion}

In this paper, we introduced \textit{Quantum Generator Kernels} (QGKs), a novel generator-based approach to quantum kernel methods. 
The QGK consists of \textit{Variational Generator Groups} (VGGs) that merge a set of universal generators into parameterizable groups.
Building upon universal generators in Hilbert space, QGKs offer significantly improved parameter scalability compared to common gate-based approaches, employing fixed embedding processes. 
Empirical studies across five benchmarks demonstrate that the QGK achieves superior trainability and classification accuracy, consistently outperforming quantum kernels and  classical baselines, even under hardware noise.

\paragraph{Key Results}
On both synthetic binary tasks, QGK outperforms classical and quantum embedding-based methods, showcasing strong expressiveness under limited quantum resources. 
On the real-world \texttt{bank} dataset, QGK maintains a clear lead, reaching $85\%$ accuracy despite the 16-dimensional input space. 
On the larger-scale \texttt{MNIST} benchmark, QGK reaches $94\%$ accuracy, matching the best classical kernel (Linear) and significantly outperforming all quantum baselines.
On the more complex \texttt{CIFAR10}, QGK achieves $41\%$, clearly surpassing QEK ($10\%$), HEE ($20\%$), and even Linear kernels ($33\%$) and small MLPs ($35\%$). 
These results demonstrate the QGKs effective solution for 
embedding high-dimensional data into few-qubit systems, 
through expressive generator-grouped unitaries.
Rather than targeting near-term execution on very large quantum devices, QGKs are designed to maximize learning capacity under limited qubit budgets while remaining compatible with future hardware advances.
This positions the QGK as a flexible kernel framework bridging present-day constraints and longer-term fault-tolerant quantum execution.

\paragraph{Limitations}
Allthough generator-based quantum kernels are theoretically well-defined for arbitrary system sizes, heir native execution on current hardware remains limited.
Combined with today’s limited quantum device capabilities, this constrains their immediate large-scale deployment. 
To assess near-term viability, we compiled QGK circuits to current IBM hardware and simulated them under realistic noise. 
For small-scale tasks, compiled QGK circuits remain well below 100 gates and demonstrate superior noise  robustness relative to gate-based approaches. For larger-scale datasets, however, compiled depths exceed the capabilities of present-day noisy devices. In this regime, efficient tensor-based classical implementations combined with input compression provide a practical execution pathway until fault-tolerant quantum hardware capable of supporting deep generator-based embeddings becomes available. 
At the same time, these limitations highlight opportunities for future hardware–algorithm co-design aimed at native support for generator-based quantum models.

\paragraph{Outlook}
For small- and medium-scale tasks, hybrid execution offers a practical path: classical preprocessing can reduce dimensionality before quantum embedding, enabling robust performance on today’s noisy devices. 
For large-scale datasets, efficient tensor-based implementations provide a tractable classical alternative, keeping generator-based kernels competitive, even outperforming classical baselines, until quantum hardware matures. 
In the long term, with the advent of fault-tolerant systems, QGKs could be executed fully quantum, including variational training of projections and native generator-based operations.
Overall, QGK provides a scalable, expressive, and classically efficient kernel method for near-term hybrid deployment, while paving the way toward a generator-based paradigm of quantum-native learning in future hardware generations.

\subsection*{Reproducibility Statement}
We have taken several measures to ensure reproducibility of our results. A detailed description of the proposed Quantum Generator Kernel (QGK) method, including generator construction, grouping procedure, and training pipeline, is provided in the main text and Appendix \ref{app:generator}, \ref{app:grouping}, \ref{app:grouping_analysis}, and \ref{app:linear_projection}.
Theorems and proofs of the algebraic properties are included in \autoref{app:complexity}. Hyperparameters, datasets, and experimental setups are reported in \autoref{sec:eval} and \autoref{app:grouping_hp} and summarized in Tables~\ref{tab:hyperparameters} and \ref{tab:result_summary}. Noise simulations and hardware compilation details are given in \autoref{app:analysis}, with compiled depths explicitly reported. All datasets used (\texttt{moons}, \texttt{circles}, \texttt{bank}, \texttt{MNIST}, and \texttt{CIFAR10}) are publicly available, with preprocessing steps documented in the supplementary materials. For robustness, we report averages over eight random seeds. The full implementation is available upon request.

\subsection*{Impact Statement}
This paper considers the construction and applicability of generator-based quantum kernels. 
We anticipate that advances in quantum machine learning, and in particular the new paradigm of generator-based quantum kernel methods introduced here, may broaden the applicability of kernel learning and enable scalable use of quantum resources in future AI systems.
We do not expect any societal consequences from our work beyond those posed by quantum machine learning and quantum computing in general.

\bibliography{QGK}
\bibliographystyle{iclr2026}

\appendix
\section{Deriving a Universal Set of Generators}\label{app:generator}
This appendix details the construction of the generator set $\mathfrak{H}$ used throughout this work and clarifies its relation to the generalized Gell--Mann matrices, which form a canonical basis of the Lie algebra $\mathfrak{su}(N)$.
The SU(N) group includes all unitary $N \times N$ matrices under multiplication. As a Lie group, the group elements form a smooth manifold and tangential space, the Lie algebra $\mathfrak{su}(N)$. 
The Lie algebra itself is defined as an additive real vector space characterized together with a commutator relation, satisfying the Jacobi identity. 
Further, it exhibits the same dimension as the respective Lie group. The direct connection between elements of a Lie group G and elements of the corresponding Lie algebra $\mathfrak{g}$ can be defined by:
\begin{align}\forall_{X \in \mathfrak{g} } \quad e^{-t \cdot X} \in G \quad \text{with } t \in \mathbb{R}  \label{eq: group-algebra-dependency}\end{align}
However, $X \in \mathfrak{g}$ just holds as long as $ \forall_{t \in \mathbb{R}} e^{-t \cdot X} \in G$.
The Lie algebra itself constitutes a vector space which is spanned by a number of base elements $e_{i}$, while $i$ denotes the respective dimension. The number of base elements is equivalent to the dimension of the Lie algebra.
Using the fact that the Lie group G represents a differentiable manifold, the associated Lie algebra $\mathfrak{g}$ represents the tangent space to G at its identity element. This picture leads to the relation given in Eq.~\eqref{eq: tangentspacederivative}, while the indices $l$ and $k$ denote the application of the equation on a matrix element at the respective position.
\begin{align}
  \frac{\partial}{\partial{x_{i}}}A_{lk}(x_{1},x_{2}...x_{n})\bigg|_{x_{1} = 0,x_{2} = 0...x_{n} = 0} = (e_{i})_{lk}\quad A \in G \label{eq: tangentspacederivative}
\end{align}

Since it can be shown that the set of traceless square anti-Hermitian $N \times N$ matrices constitutes the $\mathfrak{su}(N)$ algebra, a complete base set $\{e_{1}, e_{2} ... \}$ of linear independent representatives of $\mathfrak{su}(N)$ is sufficient to create any element within this algebra. By making use of the relation given in Eq.~\eqref{eq: group-algebra-dependency}, we can map any element of $\mathfrak{su}(N)$ to an element of SU(N). 
Additionally, these anti-Hermitian base elements $e_{i}$ can be converted into so called Hermitian generator elements $h_{i}$ using: 
\begin{align}e_{i}= - \frac{i}{2} h_{i}\label{eq: antitohermitian}\end{align}

The connection between algebra and group via the algebra is exact \citep{mansky_scaling_2025}. 
It can also be used to build quantum circuits from Lie algebra elements, as every element of the Lie group has a corresponding quantum circuit element \citep{mansky_decomposition_2023}. 
In general, this representation of the group element as a quantum circuit is difficult to find and generally requires $4^\eta$ operations to represent \citep{bergholm_quantum_2005, shende_synthesis_2005}. 
With the choice of a particular basis, this approach can be simplified. The Pauli basis $\Pi = \bigotimes_i^\eta\{\sigma_x, \sigma_y, \sigma_z, \mathbb{I}\}\backslash \mathbb{I}^\eta$ is the discrete group of Pauli strings, the tensor product of Pauli matrices. 
For a system of $\eta$ qubits, the associated Hilbert space has dimension $H = 2^\eta$.
Any data-dependent unitary feature map acting on this space is generated by Hermitian operators in $\mathfrak{su}(N)$, the Lie algebra of traceless Hermitian $H\times H$ matrices.
In the qubit setting ($H=2$), the Pauli matrices provide a basis of $\mathfrak{su}(2)$. For higher-dimensional systems, this role is played by the generalized Gell--Mann matrices, which extend the Pauli and (three-dimensional) Gell--Mann matrices to arbitrary dimension $N$ \cite{gellmann1962symmetries}.

Due to the fact that the group of traceless square anti-Hermitian $N \times N$ matrices constitutes the $\mathfrak{su}(N)$ algebra, a complete set of base elements can be easily formulated. Through the multiplication with a complex factor, Hermitian generators can be constructed following Eq.~\eqref{eq: antitohermitian}. This also makes it possible to start directly by creating a set of base elements for Hermitian matrices (generators) and convert them back to anti-Hermitian matrices by applying the factor. This can be done during the transformation into a unitary displayed in Eq.~\eqref{eq:unitary}.

Firstly, the set of generators $\mathfrak{H}$ needs to show linear independence among all its elements. This can be satisfied if every generator contains at least one element $a_{lk} \neq 0$ for which all other generators show $a_{lk} = 0$. To further ensure that the found generators actually form a basis for Hermitian matrices, any pair of generators $h_{i}$ and $h_{j}$ needs to satisfy the following condition:
\begin{align}\label{eq:orthogonality}\Tr(h_{i}, h_{j}) = 2\cdot\delta_{ij}\end{align}
Finally, it needs to be ensured that all generators $h_{i}$ and $h_{j}$ fulfill the commutator condition characterizing all basis elements of a Lie algebra: 
\begin{align}
[h_{i}, h_{j}] = \sum_{k = 1}^{n} C_{ikl} \cdot 2i\cdot\ h_{k} \;\;\;\; with \;\; \forall_{k \in \{1,2..n\}}h_{k} \in \mathfrak{H} \;\;\wedge \;\; \forall_{i,k,l}  C_{ikl} \in \mathbb{R}
\end{align}

Based on these three conditions, 
the full generator set $\mathfrak{H} = \left\{ \mathfrak{H}^{(S)}, \mathfrak{H}^{(A)}, \mathfrak{H}^{(D)}, \right\}$ can be defined from 
the following generator subsets given in Eqs.~\eqref{eq: generators1}, \eqref{eq: generators2}, and \eqref{eq: generators3}. 
Let $E_{rc}\in\mathbb{C}^{N\times N}$ denote the matrix unit with a single nonzero entry: $(E_{rc})_{ij} = \delta_{ir}\delta_{jc}$, with $r,c\in\{0,\dots,N-1\}$.
These matrices form a basis of $\mathbb{C}^{d\times d}$ and allow a compact definition of all generalized Gell--Mann generators.
For each pair of distinct indices $0\le r<c\le N-1$, we define two Hermitian, traceless matrices $h^{(S)}_{rc} = E_{rc} + E_{cr}$ (\autoref{eq: generators1}) and $h^{(A)}_{rc} = -i\,(E_{rc} - E_{cr})$ (\autoref{eq: generators2}) that correspond to the symmetric and antisymmetric off-diagonal generators of the generalized Gell--Mann basis:

\begin{align}
\mathfrak{H}^{(S)} &=
\begin{pmatrix} 0 & 1 & & 0 \\ 1 & 0 & & \\ & & \ddots & \\ 0 &  &  & 0 \\ \end{pmatrix},\dots,
\begin{pmatrix} 0 & 0 & & 1 \\ 0 & 0 & & \\ & & \ddots &\\ 1 & &  & 0 \end{pmatrix},\dots,
\begin{pmatrix} 0 & & & 1 \\ & \ddots & &\\ & & 0 & 1 \\ 0 &  & 1 & 0 \end{pmatrix}
\label{eq: generators1}
\\
\mathfrak{H}^{(A)} &=
\begin{pmatrix}
        0 & -i & & 0 \\ i & 0 & & \\ & & \ddots & \\ 0 &  &  & 0 \\
    \end{pmatrix},\dots,
    \begin{pmatrix}
        0 & 0 & & -i \\ 0 & 0 & & \\ & & \ddots &\\ i & &  & 0
    \end{pmatrix},\dots,
    \begin{pmatrix}
        0 & & & 0 \\ & \ddots & &\\ & & 0 & -i \\ 0 &  & i & 0
    \end{pmatrix}
    \label{eq: generators2}
\end{align}

These matrices generalize the $X$- and $Y$-Pauli operators. They generate coherent transitions between computational basis states $\ket{r}$ and $\ket{c}$ and account for all off-diagonal degrees of freedom in $\mathfrak{su}(N)$.
For the construction of the generators containing diagonal elements, the traceless condition must be regarded.
Following the canonical construction of the diagonal (Cartan) generators of the generalized Gell--Mann basis of $\mathfrak{su}(N)$, this yields $N-1$ linearly independent diagonal generators 
with uniform Hilbert--Schmidt norm (\autoref{eq: generators3}).

\begin{align}
h^{(D)}_{k}&=\sqrt{\frac{2}{k(k+1)}}\left(\sum_{m=0}^{k-1} E_{mm}-k\,E_{kk}\right)
, \quad k \in \{1,\dots,N-1\}
\nonumber\\
\mathfrak{H}^{(D)} &= 
\begin{pmatrix}
1 &        &        &        \\
  & -1     &        &        \\
  &        & \ddots &        \\
  &         &        & 0
\end{pmatrix}, \dots, \sqrt{\frac{2}{(N-1)N}}
\begin{pmatrix}
1 &        &        &        \\
  & \ddots &        &        \\
  &       & 1     &        \\
  &         &        & N-1
\end{pmatrix}
\label{eq: generators3}
\end{align}

The normalization factor $\sqrt{2/(k(k+1))}$ ensures that all generators satisfy the orthogonality condition (\autoref{eq:orthogonality}), consistent with the Lie-algebra relations used throughout this work.

Algorithmically, the set of generators can be constructed with computational complexity $\mathcal{O}(4^\eta)$ via:

\begin{algorithm}
\caption{Construction of $\mathfrak{su}(2^\eta)$ Generators}\label{alg:generator_construction}
\begin{algorithmic}[1]
\REQUIRE Number of qubits $\eta$
\ENSURE Set of generators $\mathfrak{H} \gets \emptyset$ \hfill{\color{gray}$\vartriangleright$ Initialize empty generator set.}

\STATE $H \gets 2^\eta$ \hfill{\color{gray}$\vartriangleright$ Compute Hilbert space dimension.}

\STATE{\textbf{for} $(r,c) \in \{(r,c)\mid 0 \le r < c \le H-1\}$ \textbf{do} \hfill{\color{gray}$\vartriangleright$ Off-diagonal generators.}}
    \begin{ALC@g}\STATE $\mathfrak{H}\gets\mathfrak{H}\cup\{E_{rc}+E_{cr}\}$ \hfill{\color{gray}$\vartriangleright$ Symmetric (\autoref{eq: generators1})}

    \STATE $\mathfrak{H}\gets\mathfrak{H}\cup\{-i(E_{rc}-E_{cr})\}$ \hfill{\color{gray}$\vartriangleright$ Antisymmetric (\autoref{eq: generators2})}\end{ALC@g}
\STATE{\textbf{end for}}

\STATE{\textbf{for} $k \in \{1,2,\dots,H-1\}$ \textbf{do} \hfill{\color{gray}$\vartriangleright$ Diagonal (Cartan) generators.}}
\begin{ALC@g}
  \STATE $\mathfrak{H}\gets\mathfrak{H}\cup\left\{\sqrt{2/(k(k+1))}\left(\sum_{m=0}^{k-1}E_{mm} - kE_{kk}\right)\right\}$\hfill{\color{gray}$\vartriangleright$ Traceless diagonal (\autoref{eq: generators3})}
\end{ALC@g}
\STATE{\textbf{end for}}
\end{algorithmic}
\end{algorithm}

\clearpage

\section{Generator Grouping Approaches} \label{app:grouping}

\autoref{alg:generator_construction} yields a complete set of $|\mathfrak{H}| = 4^\eta - 1$ Hermitian generators spanning $\mathfrak{su}(2^\eta)$. 
Since directly assigning one parameter per generator is computationally prohibitive for growing $\eta$, we aggregate generators into \emph{Variational Generator Groups} (VGGs), each combining multiple generators into a single effective operator.
The number of groups $g$ is determined via a grouping factor $\Gamma_\eta$, which controls the number of generators per group and scales with the system size:
\begin{align}\label{eq:generators_per_group}
g = \frac{|\mathfrak{H}|}{\Gamma_\eta}, \qquad
\Gamma_\eta =
\begin{cases}
2\,\Gamma_{\eta-1} + 1, & \text{if $\eta>2$ and $\eta$ is odd},\\
2\,\Gamma_{\eta-1} - 1, & \text{if $\eta>2$ and $\eta$ is even},\\
1, & \text{if $\eta\leq 2$}.
\end{cases}
\end{align}
This construction yields an approximately exponential increase in group size with $\eta$, balancing expressivity and computational tractability, and forms the basis for the VGGs used.

One alternate approach to form a single unitary matrix incorporating all $4^\eta-1$ parameters assigns every free parameter to a specific generator of the underlying $\mathfrak{su}(N)$ algebra. By forming one large linear combination of generators $h_{i}$ multiplied by the respective parameters $\vphi_{i}$, one Hermitian operator $\hat{\mH}$ is produced:
\begin{align}\hat{\mH} = \sum_{i = 1}^{4^\eta-1} \vphi_{i} \cdot h_{i}\quad,\end{align}
while $\eta$ again corresponds to the number of qubits used.
This Hermitian operator $\hat{\mH}$ can then be converted to a single unitary matrix $\hat{\mU}$: 
\begin{align}\hat{\mU} = e^{-i\hat{\mH}} = e^{-i  \sum_{i = 1}^{4^\eta-1} \vphi{i} h_{i}}\end{align}

An important disadvantage of this approach, however, is the inability to form a finite expression of the derivative of $\hat{\mU}$ with regard to one of the involved free parameters $\vphi{i}$ in the general case. This issue becomes apparent when differentiating $\hat{\mU}$ while displaying the involved exponential function in its series representation. 
The difficulties in forming a finite expression for the derivative originate from the non-commuting characteristic present between specific generators $h_{i}$ and $h_{j}$. An example of this is given in Eq.~\eqref{eq: non_commuting_generators}:
\begin{align}\hat{\mH} = \sum_{i = 1}^{4^\eta-1} \vphi_i\cdot h_k \quad \forall\quad h_k \in \mathcal{G}_i
\label{eq: non_commuting_generators}\end{align}
To circumvent the problem, we chose another approach, which enables the formulation of a finite expression of the derivative and which is described in \autoref{sec:vgg} in more detail. 
If we separate the set of generators into clusters, assigning one free parameter to every cluster respectively, we generate several unitary sub-operators $\hat{\mU}_i$, which we contract to a single one according to Eq.~\eqref{eq:unitary}. Since every sub-operator contains only one free parameter, no commutation is necessary in order to form a finite derivative; hence, the non-commuting characteristic does not cause any problems.
Also, as given by the following proof showing that the unitary matrices $\mU$ representing the transformation of a quantum state form a closed group regarding matrix multiplication, the resulting operator is also unitary:
\begin{align}
    \hat{\mU} &= \hat{\mU}_{\vphi_{1}}\hat{\mU}_{\vphi_{2}}\dots\hat{\mU}_{\vphi_{p}}\quad \cap\quad\forall_{i \epsilon\{1 \dots p\}} \;\; \hat{\mU}_{\vphi_{i}}^{+}\hat{\mU}_{\vphi_{i}} = \mathbb{I} \nonumber\\
    \Longrightarrow \hat{\mU}^{+}\hat{\mU} &= (\hat{\mU}_{\vphi_{1}}\hat{\mU}_{\vphi_{2}}\dots\hat{\mU}_{\vphi_{p}})^{+} \;(\hat{\mU}_{\vphi_1}\hat{\mU}_{\vphi_2}\dots\hat{\mU}_{\vphi_g}) \nonumber\\
    &= \;\hat{\mU}_{\vphi_g}^{+}\dots\hat{\mU}_{\vphi_2}^{+}\hat{\mU}_{\vphi_1}^{+} \cdot \hat{\mU}_{\vphi_1}\hat{\mU}_{\vphi_2}\dots\hat{\mU}_{\vphi_g}\nonumber\\
    &= \;\hat{\mU}_{\vphi_g}^{+}\dots\hat{\mU}_{\vphi_2}^{+}\cdot \mathbb{I} \cdot \hat{\mU}_{\vphi_2}\dots\hat{\mU}_{\vphi_g}
    = \;\mathbb{I} \label{eq: closed_matmul_group}
\end{align}
However, maximizing the number of parameters incorporated in $\hat{\mU}$, i.e., introducing one parameter per generator, requires clusters containing only a single generator, respectively. As a result, we would create one sub-operator per generator. This means we have to do a matrix multiplication for every additional generator in order to get the contracted unitary $\hat{\mU}$. This again sets a challenge to the approach since the generator number increases exponentially ($4^\eta-1$) with $\eta$ qubits, making the calculation of $\hat{\mU}$ computationally expensive. This justifies the implemented flexibility of choosing the size of the used clusters, enabling the adjustment of the tradeoff between computational expense and size of the introduced context.

\clearpage

\section{Grouping Hyperparameter Analysis}\label{app:grouping_hp}

The \textit{Variational Generator Group} (VGG) construction (Alg.~\ref{alg:vgg_construction}) introduces two key hyperparameters:
(1) the number of generators per group, controlled by the number of groups $g$ (cf., Eqs.~\ref{eq:groups} and \ref{eq:generators_per_group}), and
(2) the projection width $w$, which determines how generators are assigned to groups (wide, medium, or narrow stride).
This section analyzes how these hyperparameters influence the theoretical properties of the QGK, including entanglement capacity and expressibility (Fig.~\ref{fig:app:grouping}, \autoref{tab:app:expressibility}), compiled depth (Tab.~\ref{app:tab:group_depths}), and their empirical impact on downstream learning performance (Tab.~\ref{app:tab:grouping_hp}).
Fig.~\ref{fig:projection comparison} shows a visual comparison between wide and narrow projections widths.

\begin{figure}[ht]
\includegraphics[width=0.775\linewidth]{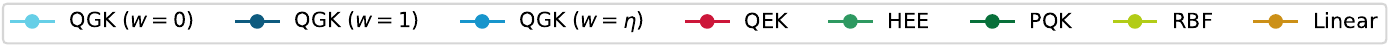}\vspace{-1em}\\
\subfloat[Entanglement]{\includegraphics[width=0.775\linewidth]{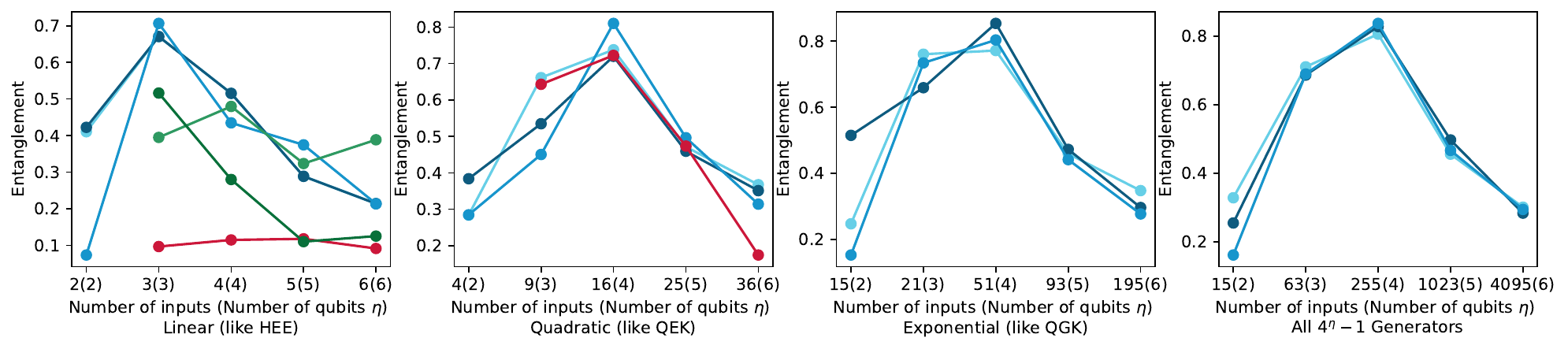}\label{fig:app:entanglement}} 
\subfloat[Projection Heatmaps]{\includegraphics[width=0.225\linewidth]{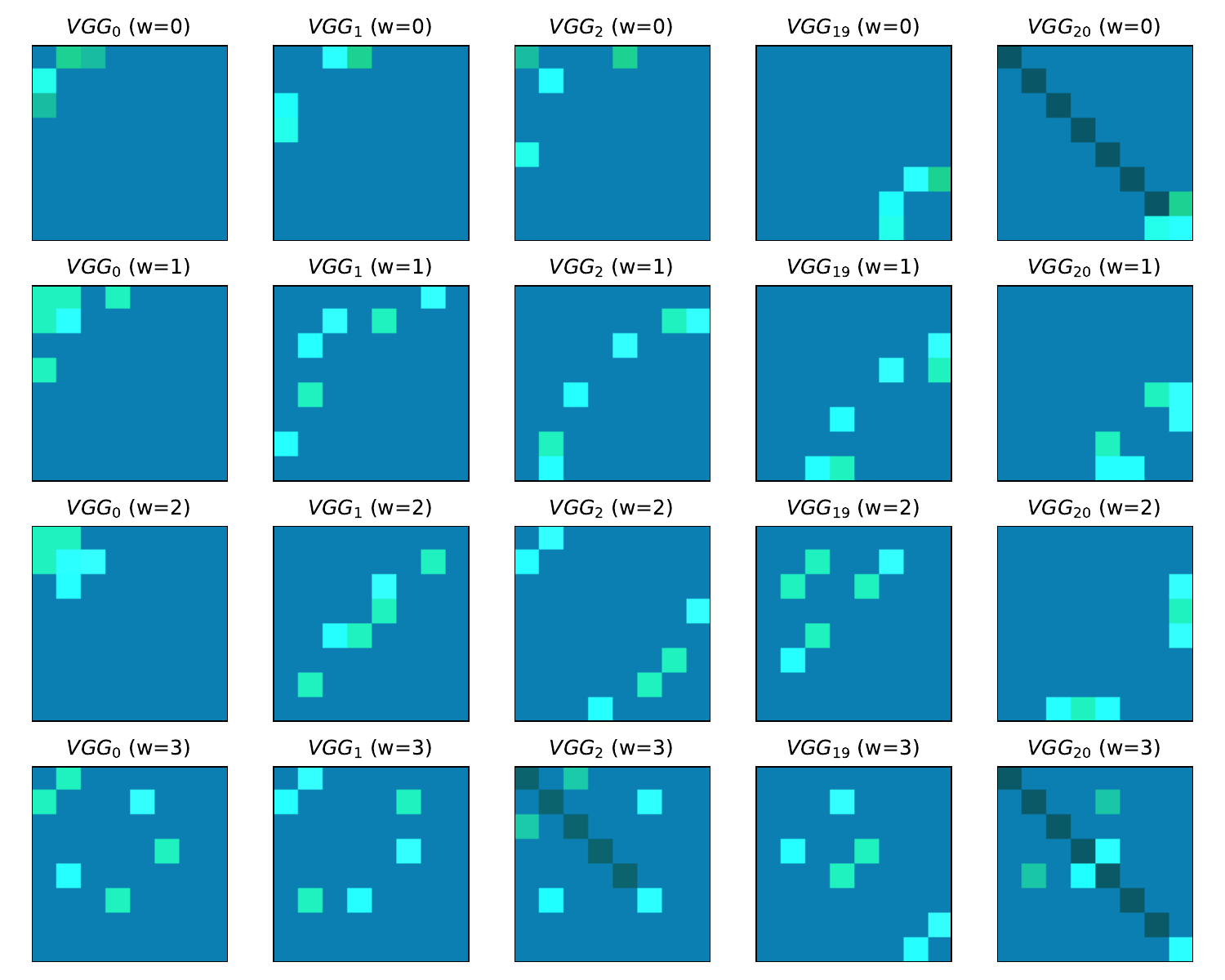}\label{fig:projection comparison}}\\
\subfloat[Expressibility]{\includegraphics[width=0.775\linewidth]{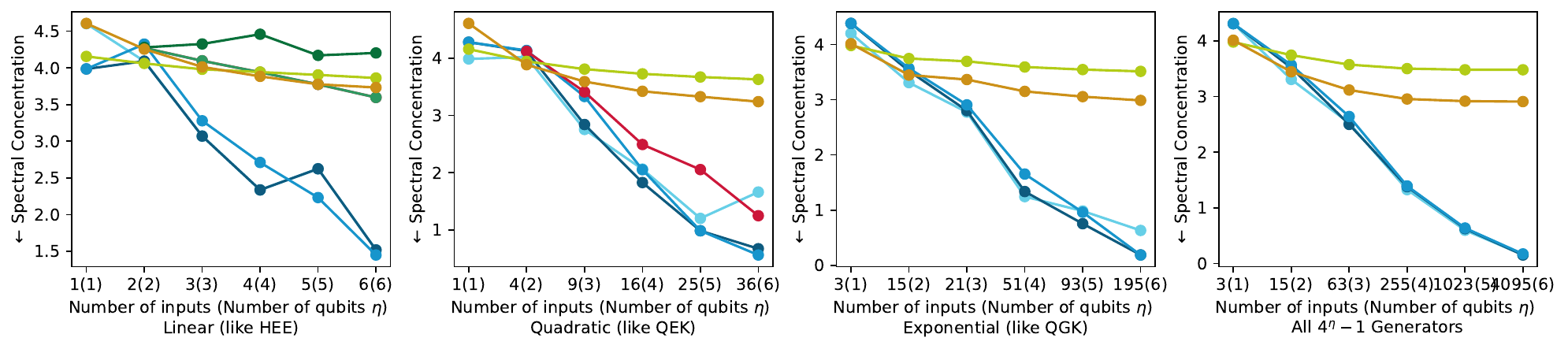}\label{fig:app:expressibility}}
\subfloat[Group Size]{\includegraphics[width=0.225\linewidth]{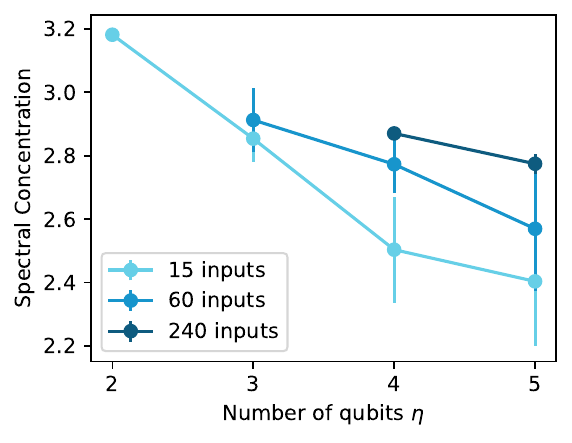}\label{fig:app:groupsize_exp}}
\caption{VGG grouping analysis and comparison regarding 
\protect\subref{fig:app:entanglement} the entanglement capability by means of the Meyer-Wallach measure and \protect\subref{fig:app:expressibility} the
expressibility by means of the spectral concentration, accross four input scalings: \textit{linear} ($g=\eta$, left) yielding the most generators per group, \textit{quadratic} ($g=\eta^2$), \textit{exponential} ($g$ according to Eqs.~\ref{eq:groups} and \ref{eq:generators_per_group}), and \textit{all} (right) using the maximum available input dimensions, i.e., the total number of generators $4^\eta-1$ and three projection widths: the default $w=\eta$ causing a wide stride, i.e., assignment of distant generators to groups, $w=1$, causing a medium stride, and $w=0$, causing a narrow stride. \protect\subref{fig:app:groupsize_exp} shows a comparison of spectral concentration across group sizes $g\in[15,60,240]$, averaged over projection widths ($w \in \{0,\,1,\,\eta\}$), with error bars indicating the standard deviation.
\protect\subref{fig:projection comparison} shows projection heatmaps for VGG$_{0,1,2,19,20}$ for $\eta=3$ qubits, with widths from $w=0$ (upper) to $w=\eta$ (lower), visualizing the magnitude (blue) and phase (green) of the resulting generator matrices $\hat{\mH}$.
}\label{fig:app:grouping}
\end{figure}

\paragraph{Entanglement Capacity}
\autoref{fig:app:entanglement} shows that QGK maintains stable entanglement capabilities across all examined projection widths and grouping strategies.
Variance decreases with increasing numbers of qubits and groups, demonstrating robustness as system size grows.

\paragraph{Expressivity} \autoref{fig:app:expressibility} shows the expressibility of the QGK behaves overall robustly across different projection widths and numbers of qubits, particularly when using the suggested exponential scaling.
Finally, \autoref{fig:app:groupsize_exp} and \autoref{tab:app:expressibility} summarize the effect of different grouping configurations on kernel expressivity, showing three key insights:
(1) QGK expressivity is highly robust across projection widths, with only minor variation between $w \in \{0, 1, \eta\}$, even as the number of qubits and group sizes change.
(2) Expressivity increases systematically with system size, reflecting the growing representational capacity of larger generator sets.
(3) Coarser groupings (i.e., fewer groups with more generators per group) consistently yield higher expressivity, indicating that merging diverse generators enhances the attainable unitary transformation.

\begin{table}[ht]\centering\scriptsize
\begin{tabular}{|c|c|c|c|c|c|c|c|c|}\hline
Scaling & Width & $\eta={1}$ & $\eta={2}$ & $\eta={3}$ & $\eta={4}$ & $\eta={5}$ & $\eta={6}$ & Total $\downarrow$\\
\hline \multirow{4}{*}{Quadratic (like QEK)} &  & 1 inputs & 4 inputs & 9 inputs & 16 inputs & 25 inputs & 36 inputs & -\\ 
 & $0$ & $0.19$ & $0.07$ & $0.22$ & $0.08$ & $0.14$ & $0.70$ & $1.41$ \\
 & $1$ & $0.10$ & $0.03$ & $0.13$ & $0.15$ & $0.07$ & $0.29$ & $0.78$ \\
 & $\eta$ & $0.10$ & $0.04$ & $0.35$ & $0.07$ & $0.07$ & $0.40$ & $1.04$ \\
\hline \multirow{4}{*}{Exponential (like QGK)} &  & 3 inputs & 15 inputs & 21 inputs & 51 inputs & 93 inputs & 195 inputs & -\\ 
 & $0$ & $0.12$ & $0.16$ & $0.05$ & $0.17$ & $0.08$ & $0.30$ & $0.88$ \\
 & $1$ & $0.06$ & $0.05$ & $0.03$ & $0.08$ & $0.15$ & $0.15$ & $0.51$ \\
 & $\eta$ & $0.06$ & $0.10$ & $0.08$ & $0.24$ & $0.06$ & $0.15$ & $0.69$ \\
\hline \multirow{4}{*}{All $4^\eta-1$ Generators} &  & 3 inputs & 15 inputs & 63 inputs & 255 inputs & 1023 inputs & 4095 inputs & -\\ 
 & $0$ & $0.00$ & $0.16$ & $0.03$ & $0.04$ & $0.02$ & $0.00$ & $0.26$ \\
 & $1$ & $0.00$ & $0.05$ & $0.06$ & $0.01$ & $0.01$ & $0.01$ & $0.14$ \\
 & $\eta$ & $0.00$ & $0.10$ & $0.09$ & $0.03$ & $0.02$ & $0.01$ & $0.25$ \\
\hline\end{tabular}
\caption{Sensitivity of kernel expressivity to projection width: Absolute deviations $|s_{w,\eta} - \bar{s}_\eta|$ of kernel expressivity scores $s$ across projection widths $w \in \{0, 1, \eta\}$, evaluated for $\eta$ qubits under different input scalings (i.e., varying the number of generators per group).}\label{tab:app:expressibility}
\end{table}

\newpage
\paragraph{Compiled Circuit Depth}
\autoref{app:tab:group_depths} reports the compiled depths on IBM Falcon (\textit{FakeToronto}, $\eta\leq 27$).
Across all projection widths and input scaling rules, depth variations are negligible.
This is expected because the underlying set of generators remains identical; only their groupings differ.
Thus, VGG rearrangements do not materially affect the gate count for fixed hardware constraints.

\begin{table}[ht]\centering\scriptsize
\begin{tabular}{|c|c|c|c|c|c|}\hline
Dataset & Groups & Generators per Group & $w=1$ & $w=\eta$ & $w=2\eta$ \\ \hline
\multirow{3}{*}{\shortstack{\texttt{moons}\\($d=2$, $\eta=2$)}} 
& Q ($4$) & $3$ & $28$ & $28$ & $28$ \\
& g ($15$) & $1$ & $28$ & $28$ & $28$ \\ 
& $|\mathfrak{H}|$ ($15$)& $1$ & see above & see above & see above \\ \hline 

\multirow{3}{*}{\shortstack{\texttt{MNIST} \\($d=784$, $\eta=5$)}} 
& Q ($25$) & $40$ & $4756$ & $4748$ & $4755$ \\
& g ($93$) & $11$ & $4752$ & $4755$ & $4759$ \\ 
& $|\mathfrak{H}|$ ($1023$) & $1$ & $4755$ & $4753$ & $4756$ \\ \hline 

\end{tabular}
\caption{Level-1-optimized circuit depths compiled to a 5-qubit IBM Falcon device for different generator grouping strategies: Quadratic ($Q = \eta^2$, matching QEK capacity), Exponential ($g = 3(2^\eta - 2(\eta \bmod 2) + 1)$, as per \autoref{eq:groups}), and All ($|\mathfrak{H}| = 4^\eta - 1$, i.e., one group per generator).}\label{app:tab:group_depths}
\end{table}

\paragraph{Performance variation}
\autoref{app:tab:grouping_hp} summarizes downstream classification accuracy under different grouping configurations for representative benchmarks (\texttt{moons}, \texttt{bank}, \texttt{MNIST}).
On small datasets (e.g., \texttt{moons}), quadratic and exponential grouping choices perform comparably, while linear grouping performs worst, when pre-training the kernel's projection to maximize KTA.
On higher-dimensional datasets, exponential scaling yields consistently strong performance, with only small deviations between projection widths.
Full-generator scaling, while theoretically appealing, offers only marginal gains over exponential scaling, at significantly higher classical simulation costs.

\begin{table}[ht]\centering\scriptsize
\begin{tabular}{|c|c|c|c|c|c|}\hline
Dataset & Groups / Inputs & $\Gamma$ & $w=0$ & $w=1$ & $w=2$\\ \hline
\multirow{3}{*}{\shortstack{\texttt{moons} \\ ($d=2$, $\eta=2$, $n=200$)}}
& $\eta=2$ & 7 & $0.90 \pm 0.04$ & $0.91 \pm 0.05$ & $0.90 \pm 0.04$ \\
& $Q=4$ & 3 & $0.95 \pm 0.03$ & $0.95 \pm 0.03$ & $0.95 \pm 0.03$ \\
& $|\mathfrak{H}|=g=15$ & 1 & $0.96 \pm 0.04$ & $0.96 \pm 0.04$ & $0.96 \pm 0.04$ \\ \hline
\multirow{3}{*}{\shortstack{\texttt{bank} \\ ($d=16$, $\eta=2$, $n=200$)}}
& $\eta=2$ & 7 & $0.77 \pm 0.05$ & $0.75 \pm 0.10$ & $0.77 \pm 0.05$ \\
& $Q=4$ & 3 & $0.79 \pm 0.07$ & $0.82 \pm 0.07$ & $0.79 \pm 0.07$ \\
& $|\mathfrak{H}|=g=15$ & 1 & $0.88 \pm 0.06$ & $0.88 \pm 0.06$ & $0.88 \pm 0.06$ \\ \hline
\multirow{3}{*}{\shortstack{\texttt{MNIST} \\ ($d=784$, $\eta=5$, $n=1000$)}}
& $Q=25$ & 40 & $0.85 \pm 0.03$ & $0.83 \pm 0.02$ & $0.85 \pm 0.03$ \\
& $g=93$ & 11 & $0.88 \pm 0.03$ & $0.88 \pm 0.03$ & $0.88 \pm 0.03$ \\
& $|\mathfrak{H}|=1023$ & 1 & $0.91 \pm 0.03$ & $0.91 \pm 0.03$ & $0.91 \pm 0.03$ \\
\hline\end{tabular}
\caption{Empirical Hyperparameter Sensitivity of QGK: Final classification accuracy (mean $\pm$ error margin) on a subset of three selected $d$-dimensional benchmarks, with $\Gamma$ generators per group, comparing different projection widths; wide ($w=1$, grouping distant generators), medium ($w=\eta$), and narrow ($w=2\eta$, grouping nearby generators), across different group scaling regimes relative to the number of qubits $\eta$: Linear ($\eta$ groups), Quadratic ($Q=\eta^2$, comparable to QEK capacity), Exponential ($g=3(2^\eta - 2(\eta \bmod 2) + 1)$, cf. \autoref{eq:groups}), and All ($|\mathfrak{H}|{=}4^\eta{-}1$, one group per generator).}\label{app:tab:grouping_hp}
\end{table}

This confirms that the default exponential grouping rule provides a robust, well-balanced choice for both theoretical coverage and empirical accuracy.

\clearpage

\section{Theoretical Characterization of Grouping Robustness}\label{app:grouping_analysis}

We begin by analyzing the grouping strategy underlying the construction of the \textit{Variational Generator Groups} (VGGs), which serve as building blocks of the \textit{Quantum Generator Kernel} (QGK). A VGG is defined as a parameterized Hermitian operator constructed from a subset of a full Hermitian generator basis $\mathfrak{H}$:
\begin{equation}
    \hat{\mH}_i = \sum_{j \in \mathcal{G}_i} h_j, \quad h_j \in \mathfrak{H}, \quad \mathcal{G}_i \subset \mathfrak{H},
\end{equation}
where $\{ \mathcal{G}_1, \dots, \mathcal{G}_g \}$ forms a collection of groups. The full generator set $\mathfrak{H}$ has size $|\mathfrak{H}| = 4^\eta - 1$ for $\eta$ qubits, and the VGGs are composed into a parameterized unitary via:
\begin{equation} 
    \hat{\mU}_{\vphi} = \exp\left(\sum_{i=1}^g -i\cdot\vphi_i\cdot\hat{\mH}_i\right).
\end{equation}
Thus, each grouping defines a new operator basis $\mathfrak{H}' = \{\hat{\mH}_1, \dots, \hat{\mH}_g\}$ where each $\hat{\mH}_i$ is a structured linear recombination of elements from $\mathfrak{H}$. Provided that the grouped operators remain linearly independent and collectively span the same subspace as $\mathfrak{H}$, the QGK remains expressively equivalent. 

\begin{proposition}[Expressive Robustness under Generator Grouping]
Let $\mathfrak{H}, \mathfrak{H}'$ be two generator sets with $\mathrm{span}(\mathfrak{H}') = \mathrm{span}(\mathfrak{H})$, and $\mathfrak{H}'$ formed via linear combinations over $\mathfrak{H}$. Then, for any QGK defined over $\mathfrak{H}'$, there exists an equivalent QGK defined over $\mathfrak{H}$ up to reparameterization. Consequently, the expressive capacity of the kernel is preserved under such groupings.
\end{proposition}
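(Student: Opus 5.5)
The plan is to reduce the statement to a linear change of coordinates in the exponent of Eq.~\eqref{eq:unitary}. Write $\mathfrak{H} = \{h_1,\dots,h_m\}$ with $m = 4^\eta - 1$, and $\mathfrak{H}' = \{\hat{\mH}_1,\dots,\hat{\mH}_g\}$. The hypothesis that $\mathfrak{H}'$ is formed by linear combinations over $\mathfrak{H}$ means there is a real coefficient matrix $C \in \mathbb{R}^{g\times m}$ with $\hat{\mH}_i = \sum_{j} C_{ij} h_j$. For VGGs built by Alg.~\ref{alg:vgg_construction}, $C$ is a $0/1$ indicator matrix of the groups $\mathcal{G}_i$. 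Real coefficients suffice, since both sets consist of Hermitian matrices and a Hermitian matrix has real coordinates in the Hermitian basis $\mathfrak{H}$ of \autoref{th:coverage}.

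\textbf{Direction $\mathfrak{H}' \Rightarrow \mathfrak{H}$.} Take any QGK over $\mathfrak{H}'$ with parameters $\vphi \in \mathbb{R}^g$. Expanding the exponent gives
\begin{equation*}
\sum_i \vphi_i \hat{\mH}_i = \sum_j \Bigl(\sum_i C_{ij}\vphi_i\Bigr) h_j = \sum_j \vtheta_j h_j, \qquad \vtheta = C^\top \vphi .
\end{equation*}
Hence $\hat{\mU}_{\vphi}$ over $\mathfrak{H}'$ equals the ungrouped QGK over $\mathfrak{H}$ evaluated at $\vtheta = C^\top\vphi$. The reparameterization is the linear map $C^\top$. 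Since it is linear, composing it with an affine extractor $\mathcal{F}_\vtheta$ yields another affine extractor, so the class of models of Sec.~\ref{sec:qgk} maps into itself. The same identity holds for the feature map $\varphi(x) = \hat{\mU}_x\ket{\Psi}$ and therefore for the kernel Eq.~\eqref{eq:kernel_matrix}. This shows that every grouped kernel is a kernel of the ungrouped family.

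\textbf{Converse and preservation of capacity.} To claim that capacity is \emph{preserved} rather than merely contained, I would use the span hypothesis. Because $\mathrm{span}(\mathfrak{H}') = \mathrm{span}(\mathfrak{H})$, the map $\vphi \mapsto C^\top\vphi$ is surjective onto $\mathbb{R}^m$. So for every $\vtheta$ there is some $\vphi$ with $C^\top\vphi = \vtheta$, for example $\vphi = (C^\top)^{+}\vtheta$ using the pseudoinverse. The set of generated exponents, and hence the sets of unitaries $\{\hat{\mU}\}$ and of kernels, then coincide for the two families. Equality of these images is exactly the statement that expressive capacity is preserved.

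\textbf{Main obstacle.} The difficulty is reconciling the span hypothesis with the actual VGG construction. A strict partition into $g < m$ groups gives $\mathrm{rank}(C) = g$, which cannot span $\mathfrak{su}(2^\eta)$. I would therefore state explicitly that the proposition applies to full-span groupings; this requires $g \ge m$, which covers $\Gamma_\eta = 1$. For genuine partitions I would record the weaker but still valid conclusion: the grouped kernel equals the ungrouped kernel restricted to the subspace $\mathrm{span}(\mathfrak{H}')$, with the reparameterization given by the rows of $C$, and linear independence ensures this reparameterization is injective.

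A further point is the product form in Eq.~\eqref{eq: closed_matmul_group} versus the single exponential in Eq.~\eqref{eq:unitary}. I would work only with the single-exponential definition stated in the proposition's setting, where no Baker--Campbell--Hausdorff terms arise. I would note that the product form would instead require a Lie-closure argument based on \autoref{th:coverage}.
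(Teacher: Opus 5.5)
Your argument is correct and is essentially the paper's own justification, which amounts to the remark that each $\hat{\mH}_i$ is a linear recombination of elements of $\mathfrak{H}$, so span equality gives equivalence up to reparameterization; you make this precise with $\vtheta = C^\top\vphi$ (which also covers the affine extractor) and with surjectivity of $C^\top$ for the converse. Your caveat is also right and goes beyond the paper: its later claim that a strict partition preserves $\mathrm{span}(\mathfrak{H})$ conflates the union of the groups with the span of the summed operators $\hat{\mH}_i$, which is only $g$-dimensional, so for genuine partitions the hypothesis holds only when $\Gamma_\eta = 1$, and otherwise one gets exactly your weaker restriction-to-a-subspace statement.
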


\paragraph{Robustness Conditions}
Let us now state the necessary conditions for robustness more precisely:

\begin{definition}[Robust Grouping Criteria]
A generator grouping $\{\mathcal{G}_i\}_{i=1}^{g}$ is robust if:
\begin{enumerate}[left=4pt, topsep=-2pt]
    \item Each generator $h_j \in \mathfrak{H}$ is assigned to exactly one group $\mathcal{G}_i$ (i.e., the groups form a partition).
    \item The subspace $\mathrm{span}(\{ \hat{\mH}_1, \dots, \hat{\mH}_g \})$ approximates or equals $\mathrm{span}(\mathfrak{H})$.
    \item The grouped operators $\mathfrak{H}' = \{ \hat{\mH}_1, \dots, \hat{\mH}_g \}$ are linearly independent.
\end{enumerate}
\end{definition}

\paragraph{Verifying the Proposed Grouping Scheme}
Our proposed grouping strategy (\autoref{alg:vgg_construction}) partitions the generator set $\mathfrak{H}$ into $g = |\mathfrak{H}| / \Gamma_\eta$ groups, with $\Gamma_\eta$ scaling approximately exponentially in $\eta$ (cf. \autoref{eq:groups}). The generators are assigned to groups using a cyclic stride determined by the projection width $w$, where the index mapping is given by: $\mathrm{idx} = \left\langle \left(j \cdot 2^w \bmod |\mathfrak{H}|, j \bmod g \right) \,\middle|\, j = 0, 1, \dots, |\mathfrak{H}| - 1 \right\rangle.$
We now argue that this scheme satisfies the robustness criteria above:
\begin{enumerate}[left=4pt,topsep=-2pt]
    \item \textbf{Partition Validity.}  
    By construction, every generator $h_j \in \mathfrak{H}$ is assigned to exactly one group via a deterministic, non-overlapping index mapping. Therefore, the grouping is a strict partition: $$\bigcup_i \mathcal{G}_i = \mathfrak{H}\quad\text{ and }\quad\mathcal{G}_i \cap \mathcal{G}_j = \emptyset \quad\text{ for }\quad i \ne j.$$

    \item \textbf{Preservation of Span.}  
    Each group $\mathcal{G}_i$ contains approximately $\Gamma_\eta$ generators. Since $\sum_i |\mathcal{G}_i| = |\mathfrak{H}|$, and all generators are used exactly once, the union of the groups spans the same space as the original generator basis. 

    \item \textbf{Linear Independence.}  
    We define the group assignment matrix $\mM \in \mathbb{R}^{D \times g}$ where $\mM_{j,i} = 1$ if $h_j \in \mathcal{G}_i$ and $0$ otherwise. 
    Since the groups $\{\mathcal{G}_i\}$ form a strict partition of the linearly independent basis $\{h_j\}$, each column of $\mM$ has support on a disjoint subset of rows. 
    Consequently, the columns of $\mM$ are linearly independent and $\mathrm{rank}(\mM)=g$, implying that the grouped operators $\{\hat{\mH}_i\}$ are linearly independent.
    The specific grouping strategy (e.g., stride or ordering) does not affect linear independence, provided each group is non-empty and the groups form a strict partition of the basis.
   
\end{enumerate}

\begin{proposition}[Sufficient Condition for Linear Independence]
Let $\mathfrak{H}$ be a linearly independent generator basis of dimension $|\mathfrak{H}|$, and let the grouping matrix $\mM \in \{0,1\}^{|\mathfrak{H}| \times g}$ have full column rank. If each group $\mathcal{G}_i$ is non-empty and the grouping forms a partition, then the grouped operators $\{ \hat{\mH}_i \}$ are linearly independent.
\end{proposition}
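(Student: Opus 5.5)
The plan is to reduce the statement to a linear-algebra fact about the assignment matrix $\mM$ introduced above. Write $D=|\mathfrak{H}|$ and enumerate the basis as $h_1,\dots,h_D$. By definition of the VGGs, each grouped operator can be written as
\[
\hat{\mH}_i=\sum_{j=1}^{D}\mM_{j,i}\,h_j .
\]
So the grouped operators are the images of the columns of $\mM$ under the linear map
\[
T:\mathbb{R}^D\to\mathrm{span}_{\mathbb{R}}(\mathfrak{H}),\qquad c\mapsto\sum_j c_j h_j .
\]
Since $\mathfrak{H}$ is linearly independent (Theorem~\ref{th:coverage}, and more concretely the orthogonality relation \eqref{eq:orthogonality}), $T$ is injective.

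First, I will assume a real linear relation $\sum_{i=1}^g a_i\hat{\mH}_i=0$. Substituting the expansion gives $T(\mM a)=0$. By injectivity of $T$, this forces $\mM a=0$, and full column rank of $\mM$ then gives $a=0$. This is the whole argument under the stated full-rank hypothesis. As a cross-check that avoids appealing to abstract injectivity, one can take the Hilbert--Schmidt inner product of the relation with a fixed $h_j$. By \eqref{eq:orthogonality} this yields $2(\mM a)_j=0$ for each $j$, which gives the same conclusion directly.

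Second, I will show that the partition and non-emptiness assumptions already imply full column rank, so the rank hypothesis is redundant but consistent. If the groups are pairwise disjoint, each row of $\mM$ contains at most one $1$. Hence $(\mM a)_j=a_{i(j)}$, where $i(j)$ is the group containing $h_j$. Non-emptiness provides, for each $i$, some $j$ with $i(j)=i$. Therefore $\mM a=0$ forces every $a_i=0$, so $\mathrm{rank}(\mM)=g$. Equivalently, $\mM^\top\mM=\mathrm{diag}(|\mathcal{G}_1|,\dots,|\mathcal{G}_g|)$ is invertible.

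The main obstacle is not the algebra but the bookkeeping of the field and the space. The generators are Hermitian and the coefficients $\vphi_i$ are real, so linear independence should be taken over $\mathbb{R}$ inside the real vector space of traceless Hermitian matrices. I will state this explicitly. I will also note that complex independence follows anyway from the orthogonality argument, since the Hilbert--Schmidt pairing works for complex $a$ too. A second point to flag is that non-emptiness is genuinely needed: an empty group gives $\hat{\mH}_i=0$, which destroys independence. This is exactly where the hypothesis enters.
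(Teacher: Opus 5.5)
Your proposal is correct and follows essentially the same route as the paper, which argues that the partition gives the columns of $\mM$ disjoint supports, hence $\mathrm{rank}(\mM)=g$, hence the $\hat{\mH}_i$ are linearly independent. You also spell out the step the paper leaves implicit: linear independence of $\mathfrak{H}$ makes $c\mapsto\sum_j c_j h_j$ injective, so $\sum_i a_i\hat{\mH}_i=0$ forces $\mM a=0$.
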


In our default configuration with $g = |\mathfrak{H}| / \Gamma_\eta$ (cf. \autoref{eq:groups}) and $w = 1$, the cyclic offset ensures that all generators are used once, and that groups combine uncorrelated directions. Thus, the grouping matrix $\mM$ achieves full rank in practice, ensuring linear independence of the VGGs and preservation of expressive capacity.

\paragraph{Conclusion.}
The above analysis confirms that the default grouping scheme --- using exponentially many groups ($g \sim 3 \cdot 2^\eta$ as defined in \autoref{eq:groups}) and a wide projection width ($w = 1$) --- satisfies the robustness criteria outlined above. It yields a strict partition of the full Hermitian generator basis $\mathfrak{H}$, ensuring full coverage of the operator subspace without redundancy or degeneracy. Moreover, the wide stride distributes algebraically diverse generators across groups, promoting linear independence among the grouped operators $\hat{\mH}_i$.
This explains the empirical robustness observed across different grouping strategies, which yield near-identical entanglement and expressivity characteristics as long as the induced span of $\mathfrak{H}'$ remains intact. In contrast, alternative scaling rules, such as quadratic grouping with $g = \eta^2$, may lead to underutilization of generators when $g < |\mathfrak{H}|$, breaking full coverage of $\mathrm{span}(\mathfrak{H})$ and thereby reducing the expressivity of the resulting feature map. 
Our theoretical findings thus not only justify the empirical stability of the default scheme but also explain the degradation observed in configurations that fail to meet the robustness conditions, such as narrow groupings or insufficient group counts. 
Finally, we note that even when the group count equals the generator count ($g = |\mathfrak{H}|$), the resulting set of grouped operators $\mathfrak{H}'$ is not invariant under different stride values: the ordering of generators within the VGG sequence is affected by the projection width $w$, leading to parameter-wise differences in unitary composition. While this does not impact expressivity at the algebraic level, it does influence the specific parameterization of the resulting kernel.

\subsection{Kernel Expressibility Bounds under Grouped Generators}
\label{app:expressivity_bounds}

In our empirical analysis (Appendix~\ref{app:grouping_hp}), we evaluate kernel expressivity using the KL divergence between the normalized eigenvalue spectrum of the kernel matrix and a uniform reference distribution:
\begin{equation}\mathcal{E}(K) := D_{\mathrm{KL}}\!\left(\lambda(K)\,\middle\|\,\tfrac{1}{n}\mathbf{1}\right) = \sum_{i=1}^n \lambda_i(K)\,\log\!\left(n\,\lambda_i(K)\right),\end{equation}
where $\lambda(K)$ denotes the spectrum of $K$ scaled to sum to one and $K \in \mathbb{R}^{n\times n}$ is the kernel matrix $K_{ij} = \mathcal{K}_\phi(x_i,x_j) = |\langle \psi(x_i),\psi(x_j)\rangle|^2$ evaluated on $n$ randomly sampled inputs.
Here we provide bounds on $\mathcal{E}(K)$ for the QGK under grouped generators and discuss their dependence on the grouping structure and the number of qubits~$\eta$.

\paragraph{Preliminaries}
Recall the QGK feature map
\begin{equation}
    \ket{\varphi(x)}=\exp\!\Big(-i \sum_{i=1}^{g} \phi_i(x)\, \hat{\mH}_i\Big)\ket{0},
    \qquad
    \hat{\mH}_i = \sum_{j\in \mathcal{G}_i} h_j,
\end{equation}

where $\{\mathcal{G}_i\}_{i=1}^g$ is a strict partition of the full Pauli operator basis $\mathfrak{H}$ on $\eta$ qubits into $g$ disjoint groups, each inducing a Hermitian operator $\hat{\mH}_i$, with the total number of elementary generators $|\mathfrak{H}| = 4^\eta - 1$.
Because Pauli operators form an orthonormal basis with respect to the Hilbert--Schmidt inner product, their groupings satisfy:
\begin{equation}
\|\hat{\mH}_i\|_F^2 = \Big\lVert\sum_{j\in\mathcal{G}_i} h_j\Big\rVert_F^2 = \sum_{j\in\mathcal{G}_i}\|h_j\|_F^2 = |\mathcal{G}_i|.
\end{equation}
Because the elementary generators are orthogonal under the Hilbert--Schmidt inner product and are normalized as in Eq.~(11),
\(\mathrm{Tr}(h_j h_k)=2\delta_{jk}\), we obtain
\begin{equation}
\|\hat{\mH}_i\|_F^2 = \Big\lVert\sum_{j\in\mathcal{G}_i} h_j\Big\rVert_F^2 = \sum_{j,k\in\mathcal{G}_i}\mathrm{Tr}(h_j h_k) = \sum_{j\in\mathcal{G}_i}\mathrm{Tr}(h_j^2) = 2\,|\mathcal{G}_i|.
\end{equation}
Thus, the group sizes $|\mathcal{G}_i|$ quantify the Frobenius weight of each grouped generator up to the constant factor set by the basis normalization.
Intuitively, larger groups $|\mathcal{G}_i|$ excite more Pauli directions at once, increasing the variability of the effective Hamiltonian and, in turn, influencing the dispersion of the kernel spectrum.
Balanced groupings (similar $|\mathcal{G}_i|$ across $i$) produce unitaries whose action is distributed over many Pauli directions, while unbalanced groupings (some groups very large or very small) lead to anisotropic generator structure and potentially concentrated kernels.

\paragraph{KL-Expressibility Bounds}

We now relate the expressivity metric $\mathcal{E}(K)$ to the group sizes $|\mathcal{G}_i|$.

\begin{theorem}[Expressibility Bounds for Grouped Generators]
\label{thm:bounds}
Assume (i) inputs are sampled i.i.d.\ from a bounded distribution and
(ii) the groups $\{\mathcal{G}_i\}_{i=1}^g$ form a strict partition of
$\mathfrak{H}$. 
Combining the second-moment lower bound (driven by group \emph{balance}) and the fourth-moment upper bound (driven by group \emph{anisotropy}), then there exist constants $c_1,c_2>0$, s.t.
\begin{equation}\boxed{ \frac{c_1}{n}\, \frac{\sum_i |\mathcal{G}_i|} {\bigl(\sum_i \sqrt{|\mathcal{G}_i|}\bigr)^2} \;\le\; \mathcal{E}(K) \;\le\; c_2\, \frac{\sum_i |\mathcal{G}_i|^{2}} {\sum_i |\mathcal{G}_i|} }\label{eq:bounds}\end{equation}
\label{thm:kl_bounds}
\end{theorem}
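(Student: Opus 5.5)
The plan is to sandwich $\mathcal{E}(K)$ between two moment quantities of the normalized spectrum $p_i=\lambda_i(K)/\Tr K$. Since $K_{ii}=1$ for the fidelity kernel, $\Tr K=n$, and the purity $P:=\sum_i p_i^2=\|K\|_F^2/n^2$ controls $\mathcal{E}(K)$ from both sides. The upper bound comes from Jensen: $\mathcal{E}(K)=\sum_i p_i\log(np_i)\le \log(nP)\le nP-1$, which puts it in terms of the off-diagonal second moment $\mathbb{E}[k(x,x')^2]$, itself a fourth moment of the feature state. The lower bound uses Pinsker together with the relation between $\chi^2$ and KL: $\mathcal{E}(K)\ge \tfrac12\|p-\tfrac1n\1\|_1^2$, combined with $\|p-\tfrac1n\1\|_2^2=P-\tfrac1n$ and the comparison $\|\cdot\|_1\ge\|\cdot\|_2$. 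This yields $\mathcal{E}\gtrsim P-\tfrac1n$, and the result then follows from a lower bound on the variance of the off-diagonal kernel entries. Both ends reduce to estimating $\mathbb{E}_{x,x'}[k(x,x')]$ and $\mathbb{E}_{x,x'}[k(x,x')^2]$ under assumption (i).

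Next I would express these expectations through the grouped Hamiltonian $A(x)=\sum_i\phi_i(x)\hat{\mH}_i$. Expanding $k(x,x')=|\bra{\Psi}e^{iA(x')}e^{-iA(x)}\ket{\Psi}|^2$ to second order in $\Delta=A(x)-A(x')$ (legitimate because the inputs are bounded) gives $1-k\approx \mathrm{Var}_\Psi(\Delta)$. Averaging over $\ket{\Psi}$ within the Hilbert--Schmidt geometry then gives $\mathbb{E}\,\mathrm{Var}_\Psi(\Delta)\propto\|\Delta\|_F^2=2\sum_i(\phi_i-\phi_i')^2|\mathcal{G}_i|$. This step uses the orthogonality $\Tr(h_jh_k)=2\delta_{jk}$ and the strict partition (ii), which gives $\|\hat{\mH}_i\|_F^2=2|\mathcal{G}_i|$ and makes the cross terms $\Tr(\hat{\mH}_i\hat{\mH}_{i'})$ vanish. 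The second moment of $1-k$ involves $\|\Delta\|_F^4$ and $\Tr(\Delta^4)$. Bounding $\mathbb{E}\|\Delta\|_F^4$ by the fourth moments of $\phi$, weighted by $|\mathcal{G}_i|^2$, produces the anisotropy ratio $\sum_i|\mathcal{G}_i|^2/\sum_i|\mathcal{G}_i|$, normalized by the total weight. This settles the upper bound with $c_2$ absorbing moments of the input distribution and dimension factors.

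For the lower bound I would apply Cauchy--Schwarz to the fluctuation of $\|\Delta\|_F^2=\sum_i w_i|\mathcal{G}_i|$ with $w_i=(\phi_i-\phi_i')^2$. This gives a variance at least of order $\bigl(\sum_i|\mathcal{G}_i|\bigr)/\bigl(\sum_i\sqrt{|\mathcal{G}_i|}\bigr)^2$ times the minimal input variance, which measures how balanced the groups are; dividing by $n$ accounts for the Pinsker/$\ell_2$ normalization.

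I expect the main obstacle to be the second-order (small-$\Delta$) expansion. It is only valid when the parameters are small, and the exponential of a sum of noncommuting groups is not controlled by Frobenius norms alone. I would first try to restrict to bounded inputs with a rescaling absorbed into $c_1$ and $c_2$. Failing that, I would replace the Taylor step by a Haar/2-design averaging argument that accepts non-explicit constants, which the statement permits.
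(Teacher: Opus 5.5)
Your plan follows the paper's own sketch, made more explicit. You sandwich $\mathcal{E}(K)$ between functions of the purity $P=\|K\|_F^2/n^2$. You use the Frobenius weights $\|\hat{\mH}_i\|_F^2=2|\mathcal{G}_i|$ as the generator mass, Cauchy--Schwarz for balance and a fourth moment for anisotropy. Everything else goes into $c_1,c_2$. The sandwich $\tfrac12(P-\tfrac1n)\le\mathcal{E}(K)\le\log(nP)$ is correct (Pinsker with $\|\cdot\|_1\ge\|\cdot\|_2$, and Jensen). The gap is in the steps that should turn it into the two group-size ratios, and the paper's sketch does not fill them either.

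The first such step reduces $1-k$ to $\|\Delta\|_F^2=2\sum_i(\phi_i-\phi_i')^2|\mathcal{G}_i|$, the only place $|\mathcal{G}_i|$ enters your argument. No average over $\ket{\Psi}$ is available, because the reference state is fixed: the uniform superposition in the main text, $\ket{0}$ in the appendix. For a fixed state, $\mathrm{Var}_\Psi(\Delta)$ picks up covariances between generators in different groups, which Hilbert--Schmidt orthogonality does not remove.
\begin{itemize}
\item For the uniform superposition: $h^{(S)}_{01}h^{(S)}_{12}=E_{02}$, and the covariance of these two generators is $2^{-\eta}(1-2^{2-\eta})\neq0$ for $\eta\ge3$.
\item For $\ket{0}$: $\mathrm{Var}_0(\Delta)$ sees only the $2(2^\eta-1)$ generators $h^{(S)}_{0c},h^{(A)}_{0c}$. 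A group containing none of them contributes nothing, whatever its size.
\end{itemize}
Your 2-design fallback removes the grouping rather than exposing it. Under a 2-design average the first two kernel moments become the Haar values $2^{-\eta}$ and $2/(2^\eta(2^\eta+1))$, which do not depend on $\{|\mathcal{G}_i|\}$ at all.

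The upper bound cannot come from any moment computation with $c_2$ independent of $n$. The fidelity kernel is the Gram matrix $K_{ij}=\Tr(\rho_i\rho_j)$ of the projectors $\rho_i=\ket{\varphi(x_i)}\bra{\varphi(x_i)}$, which lie in the $4^\eta$-dimensional real space of Hermitian matrices. Hence $\mathrm{rank}\,K\le4^\eta$. Writing $H$ for the Shannon entropy, $\mathcal{E}(K)=\log n-H(\lambda(K))\ge\log n-\eta\log4$ for every input distribution and every grouping. The right-hand side $c_2\sum_i|\mathcal{G}_i|^2/\sum_i|\mathcal{G}_i|$ does not grow with $n$, so the bound fails for large $n$. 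In particular, $\mathcal{E}(K)\le c_2$ for $\Gamma_\eta=1$ cannot hold uniformly in $n$.

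If $c_2$ may depend on $n$, the bound holds trivially with $c_2=\log n$, because $\mathcal{E}(K)\le\log n$ and $\sum_i|\mathcal{G}_i|^2\ge\sum_i|\mathcal{G}_i|$. The anisotropy ratio then does no work. Your own chain shows the same thing: where your Taylor expansion is valid, $k_{ij}\approx1$, so $\log(nP)\approx\log n$. Separately, $\mathbb{E}\|\Delta\|_F^4=4\sum_{i,i'}\mathbb{E}[w_iw_{i'}]|\mathcal{G}_i||\mathcal{G}_{i'}|$ keeps cross terms of order $(\sum_i|\mathcal{G}_i|)^2$. The quotient $\sum_i|\mathcal{G}_i|^2/\sum_i|\mathcal{G}_i|$ appears only as the dispersion index of $\|\Delta\|_F^2$ under independence, and nothing connects that to $\log(nP)$.

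On the lower side, Cauchy--Schwarz $(\sum_i\sqrt{|\mathcal{G}_i|})^2\le g\sum_i|\mathcal{G}_i|$ says only that the balance ratio is at least $1/g$; it bounds no variance from below. What Pinsker actually needs is a lower bound on $\tfrac{1}{n^2}\sum_{i\ne j}k_{ij}^2$. In your small-$\Delta$ regime that quantity is already $\approx\tfrac{n-1}{n}$, giving $\mathcal{E}(K)\gtrsim\tfrac{n-1}{2n}$ with no balance term. A proof must therefore first fix what $c_1,c_2$ may depend on, and whether the claim is about $\mathcal{E}(K)$ or its expectation. For the upper bound, that choice alone makes the inequality either false or trivial, regardless of the moment analysis.
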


\begin{proof}[Proof sketch.]
The KL divergence $\mathcal{E}(K)=D_{\mathrm{KL}}(\lambda(K)\,\|\,\tfrac1n\mathbf{1})$ can be bounded above and below by the squared $\ell_2$ distance between the kernel spectrum $\lambda(K)$ and the uniform distribution via standard Pinsker-type inequalities.
This $\ell_2$ distance is controlled by the variance and higher-order spectral moments of $K$, which depend on quantities of the form $\mathrm{Tr}\bigl(U(x)^\dagger U(x')\bigr)^m,$ and therefore on how the Hamiltonian spreads its energy across Pauli directions.
The three structural quantities in Eq.~\eqref{eq:bounds} arise from this moment analysis:
\begin{enumerate}[label={\textbf{(\arabic*)}}, noitemsep, left=4pt,topsep=-2pt]

\item \textbf{Total generator mass: }$\sum_i |\mathcal{G}_i|$.
Since the grouped operators satisfy $\|\hat{\mH}_i\|_F^2=2|\mathcal{G}_i|$, the total Frobenius norm of the generator basis is 
$\sum_i\|\hat{\mH}_i\|_F^2 = 2|\mathfrak{H}| = 2(4^\eta-1)$.
This determines the \emph{overall amount of variance} that the Hamiltonian can inject into the kernel.  
It is therefore the natural normalisation factor in both upper and lower bounds.

\item \textbf{Group balance:} $\bigl(\sum_i \sqrt{|\mathcal{G}_i|}\bigr)^2$.
This term arises from bounding the second spectral moment of the kernel, which depends on how the total generator mass $\sum_i \|\hat{\mH}_i\|_F^2$ is distributed across groups.
By Cauchy–Schwarz, $(\sum_i \|\hat{\mH}_i\|_F)^2 \le g \sum_i \|\hat{\mH}_i\|_F^2,$ with equality if and only if all $\|\hat{\mH}_i\|_F$ are equal.
Since $\|\hat{\mH}_i\|_F^2=2|\mathcal{G}_i|$, this yields the balance factor $(\sum_i \sqrt{|\mathcal{G}_i|})^2$.
If the groups are perfectly balanced ($|\mathcal{G}_i|\equiv \Gamma$), $\sum_i \sqrt{|\mathcal{G}_i|}= g\sqrt{\Gamma}$, which maximizes this quantity and hence \emph{minimizes the lower bound}.
A small lower bound corresponds to a kernel spectrum that is close to uniform.
Thus the denominator $(\sum_i\sqrt{|\mathcal{G}_i|})^2$ encodes how evenly the Hamiltonian’s energy is spread over groups.

\item \textbf{Group anisotropy:} $\sum_i |\mathcal{G}_i|^2$.
The upper bound depends on the fourth moment of $K$, which is dominated by the largest generator norms.
Since $\|\hat{\mH}_i\|^4 = 4|\mathcal{G}_i|^2$, we obtain the anisotropy term $\sum_i |\mathcal{G}_i|^2$.
If one group is much larger than the rest, this term becomes large, and the kernel spectrum admits large spikes, increasing KL divergence.
Conversely, if all groups are equal ($|\mathcal{G}_i|\equiv \Gamma$), $\sum_i |\mathcal{G}_i|^2 = g\Gamma^2$ and $\sum_i |\mathcal{G}_i| = g\Gamma$, so their ratio is simply $\Gamma$, and the upper bound remains tightly controlled.
\end{enumerate}
Thus, the \textit{lower bound} becomes small when the group sizes are well balanced and the \textit{upper bound} becomes small only when the groups are nearly uniform, and grows when the generator structure is highly uneven.
All constant factors from higher-order expansions are absorbed into
$c_1,c_2$.
\end{proof}

\paragraph{Scaling with Qubit Number}

The bounds in Eq.~\eqref{eq:bounds} scale with the number of qubits
$\eta$ through the distribution of the group sizes $\{|\mathcal{G}_i|\}$.
In many configurations considered in this work, including
(i) the default exponential grouping (cf. Eq.~\ref{eq:groups}) and
(ii) using all generators separately,
the groups $\{\mathcal{G}_i\}_{i=1}^g$ form a strict partition of $\mathfrak{H}$ with \emph{equal} size $|\mathcal{G}_i|=\Gamma_\eta$, where $\Gamma_\eta = \lfloor (2^\eta+1)/3 \rfloor$ for exponential grouping
and $\Gamma_\eta = 1$ for the all-generators case.
In both cases we have
$$
\sum_i \sqrt{|\mathcal{G}_i|} = g\sqrt{\Gamma_\eta}, \qquad
\sum_i |\mathcal{G}_i|^2 = g\,\Gamma_\eta^2, \qquad
\sum_i |\mathcal{G}_i| = g\,\Gamma_\eta.
$$
Plugging this into Eq.~\eqref{eq:bounds} yields
\begin{equation}
\frac{c_1}{ng} \;=\; \frac{c_1}{n}\, \frac{g\,\Gamma_\eta}{g^2\Gamma_\eta} \;\le\; \mathcal{E}(K) \;\le\; c_2\, \frac{g\,\Gamma_\eta^2}{g\,\Gamma_\eta} \;=\; c_2 \Gamma_\eta
\end{equation}
showing that for balanced groupings the KL-based expressibility metric cannot vanish faster than $\mathcal{O}(1/(n g))$.
In particular, as the number of groups $g$ grows with $\eta$ (e.g.\ $g \approx 3\cdot 2^\eta$ under exponential grouping or $g=4^\eta-1$ for all generators), the lower bound shrinks but remains controlled and importantly does not grow with $\eta$.
Using all generators as independent groups asymptotically gives 
$\mathcal{E}(K)\in\Omega(4^{-\eta}/n)$ and $\mathcal{E}(K)\in\mathcal{O}(1)$,
while the default exponential grouping scheme gives $\mathcal{E}(K) \in \Omega(2^{-\eta}/n)$ and $\mathcal{E}(K)\in\mathcal{O}(2^\eta)$, showing that for balanced groupings the lower expressibility bound decays exponentially in~$\eta$.
Because KL divergence scales with the \emph{normalized} spectrum, the constants $c_1,c_2$ absorb the dimensional factors of $K$.
Note that although the upper bound scales with $\eta$ through the group size $\Gamma_\eta$, it reflects only a worst-case concentration scenario derived from moment inequalities, not the typical behaviour of the kernel. 
In practice (cf. Fig.~\ref{fig:app:grouping}), the QGK exhibits consistently more favorable scaling of $\mathcal{E}(K)$ with $\eta$, as the mixing of many non-commuting Pauli directions within each grouped generator enhances phase dispersion and suppresses spectral concentration.
Overall, exponential grouping yields a well-controlled upper bound and a rapidly decaying lower bound, providing the best balance between expressivity, stability, and parameter efficiency as $\eta$ increases.

\section{Theoretical Role of the Linear Projection in Quantum Kernel Learning}\label{app:linear_projection}

The \textit{Quantum Generator Kernel} (QGK) embeds data into quantum states via a parameterized unitary whose coefficients are produced by a classical linear affine transformation. 
This section provides a theoretical characterization of this mechanism, explains the resulting \textit{Reproducing Kernel Hilbert Space} (RKHS) structure, and contrasts QGK with hybrid quantum-classical models employing nonlinear pre- and post-processing.

\paragraph{Setup}
Given input data $x \in \mathbb{R}^d$ and $g$ \textit{Variational Generator Groups} (VGGs), the QGK constructs a data-dependent generator weighting via an affine projection:
\begin{equation}
    \vphi = \mathcal{F}_\vtheta(x) = W x + b, \qquad W \in \mathbb{R}^{g \times d},\; b \in \mathbb{R}^g.
\end{equation}
These weights parameterize the QGK unitary $\hat{\mU}_{\vphi} = \hat{\mU}_{Wx + b}$ according to Eq.~\eqref{eq:unitary}. The embedded quantum state becomes $\psi(x) = \hat{\mU}_{Wx + b} \ket{\Psi}$.
The induced kernel from this quantum embedding is defined via fidelity:
\begin{equation}
    \mathcal{K}_{\phi}(x,x') =\mathcal{K}_{W,b}(x,x') = |\langle \psi(x'), \psi(x) \rangle|^2
    = \left| \bra{\Psi} \hat{\mU}^\dagger_{Wx' + b} \hat{\mU}_{Wx + b}\ket{\Psi}\right|^2.
\end{equation}

\paragraph{Role of the Affine Projection}
The affine transformation $\vphi = W x + b$ plays two essential roles:
\begin{itemize}[left=4pt]
    \item \textbf{Controlled generator activation:}  
    It determines how the input modulates generator weights $\{ \hat{\mH}_i \}$, shaping the embedding's structure in Hilbert space.

    \item \textbf{Task-dependent kernel shaping:}  
    During pre-training, $(W, b)$ are optimized using the Kernel Target Alignment objective (\autoref{eq:KTA}) to align the induced kernel with the downstream task.
\end{itemize}

Despite this learned classical transformation, no classical nonlinearities are introduced: all nonlinearity arises solely from the quantum unitary exponentiation and subsequent fidelity computation.

\paragraph{RKHS-Constrained Expressivity}
Once the projection $(W, b)$ is fixed, QGK predictions take the standard kernel-learning form and hence lie in the RKHS $\mathcal{H}_{\mathcal{K}_{W,b}}$ associated with the induced kernel.
The projection determines \emph{which} RKHS is selected but does not extend the model class beyond it.

\begin{proposition}[RKHS-Constrained Expressivity]
Let the quantum feature map be defined as $\psi(x) = \hat{\mU}_{W x + b} \ket{\Psi}$, inducing the kernel $\mathcal{K}_{W,b}(x,x') = |\langle \psi(x'), \psi(x) \rangle|^2$.
Then, for fixed $(W, b)$, any classifier of the form $f(x) = \sum_{i=1}^n \alpha_i\, \mathcal{K}_{W,b}(x,x_i)$ belongs to the RKHS $\mathcal{H}_{\mathcal{K}_{W,b}}$.
Thus, while $(W, b)$ select the kernel, they do not expand the hypothesis class beyond that space.
\end{proposition}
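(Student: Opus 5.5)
The plan is to verify directly that the fidelity kernel is a positive semidefinite kernel and then apply the Moore--Aronszajn theorem. For fixed $(W,b)$, the map $x\mapsto \psi(x)=\hat{\mU}_{Wx+b}\ket{\Psi}$ is a well-defined function $\mathbb{R}^d\to\mathbb{C}^{2^\eta}$. By Eq.~\eqref{eq: closed_matmul_group}, or directly because $\sum_i \vphi_i\hat{\mH}_i$ is Hermitian, $\hat{\mU}_{Wx+b}$ is unitary, so $\psi(x)$ is a unit vector. The central step is the lift to density matrices. Set $\rho(x)=\ket{\psi(x)}\bra{\psi(x)}$. Then
\[
\mathcal{K}_{W,b}(x,x') = |\langle\psi(x'),\psi(x)\rangle|^2 = \Tr\bigl(\rho(x')\rho(x)\bigr) = \langle \rho(x'),\rho(x)\rangle_{\mathrm{HS}},
\]
where $\langle\cdot,\cdot\rangle_{\mathrm{HS}}$ is the Hilbert--Schmidt inner product. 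This identity uses only that each $\rho(x)$ is Hermitian. It shows that $\mathcal{K}_{W,b}$ is the Gram kernel of the explicit feature map $x\mapsto\rho(x)$ into the real Hilbert space of Hermitian $2^\eta\times 2^\eta$ matrices.

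From this representation, symmetry and real-valuedness are immediate. Positive semidefiniteness follows next: for any points $x_1,\dots,x_m$ and reals $c_1,\dots,c_m$,
\[
\sum_{k,l} c_k c_l\,\mathcal{K}_{W,b}(x_k,x_l) = \Bigl\|\sum_k c_k\rho(x_k)\Bigr\|_{\mathrm{HS}}^2 \ge 0 .
\]
Moore--Aronszajn then gives a unique RKHS $\mathcal{H}_{\mathcal{K}_{W,b}}$ that contains every section $\mathcal{K}_{W,b}(\cdot,x_i)$ and is closed under finite linear combinations. Hence $f=\sum_i\alpha_i\mathcal{K}_{W,b}(\cdot,x_i)\in\mathcal{H}_{\mathcal{K}_{W,b}}$, with $\|f\|^2=\valpha^\top \mK\valpha$. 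As an alternative, I can describe the RKHS concretely as $\{x\mapsto \Tr(A\rho(x)) : A \text{ Hermitian}\}$, which makes it finite-dimensional, of dimension at most $4^\eta$, and shows that $f$ corresponds to $A=\sum_i\alpha_i\rho(x_i)$.

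For the final clause, I will note that the SVM dual in Eq.~\eqref{eq:svm}, together with any bias term, only ever forms such finite expansions once $(W,b)$ is fixed. The hypothesis class is therefore contained in $\mathcal{H}_{\mathcal{K}_{W,b}}$, possibly extended by a constant offset. Varying $(W,b)$ changes which kernel, and hence which RKHS, is selected, but a classifier is always evaluated in a single one. I will state explicitly that the union over $(W,b)$ is not claimed to be an RKHS.

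The main obstacle is ensuring positive semidefiniteness survives the squared modulus. The plain overlap $\langle\psi(x'),\psi(x)\rangle$ is a complex kernel, and it is not immediately obvious that $|\cdot|^2$ preserves positivity. The density-matrix (Hilbert--Schmidt) representation is my first tool for this step. As a backup, $|z|^2=z\bar z$ is a Schur product of the PSD Gram matrix with its conjugate, and that product is PSD by the Schur product theorem.
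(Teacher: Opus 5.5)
Your proof is correct and follows the same reasoning as the paper. The paper gives no formal proof: it only remarks that for fixed $(W,b)$ the predictor is a finite expansion in kernel sections and therefore lies in $\mathcal{H}_{\mathcal{K}_{W,b}}$, taking positive semidefiniteness of the fidelity kernel for granted, and your Hilbert--Schmidt lift $\mathcal{K}_{W,b}(x,x')=\mathrm{Tr}\bigl(\rho(x')\rho(x)\bigr)$ (or the Schur-product backup) supplies exactly that missing check. As a minor refinement, $\mathrm{Tr}\,\rho(x)=1$, so choosing $A=b_0\mathbb{I}$ in your concrete description shows that the SVM bias already lies in $\mathcal{H}_{\mathcal{K}_{W,b}}$, and the caveat ``possibly extended by a constant offset'' can be dropped.
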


\paragraph{Comparison with Hybrid Quantum Models}
Architectures like the \textit{Dressed Quantum Circuit} (DQC) \citep{Mari2020transferlearningin} apply classical nonlinear networks both before and after the quantum embedding. Formally, they compute:
\begin{equation}
f_\theta(x) = g_{\text{post}} \big( \psi_\theta( g_{\text{pre}}(x) ) \big),
\end{equation}
where both $g_{\text{pre}}$ and $g_{\text{post}}$ are deep neural networks with nonlinear activations (e.g., $\varphi(Wx + b)$).
By the universal approximation theorem \citep{hornik1989multilayer}, a sufficiently wide or deep classical network can approximate any continuous function on a compact domain. Consequently, the classical component $g_{\text{post}} \circ g_{\text{pre}}$ is already expressive enough to model arbitrary functions --- even if the quantum circuit contributes no transformation at all (e.g., implements the identity).
Consequently, DQC expressivity may be dominated by classical nonlinearities, making it difficult to isolate or attribute performance to quantum processing.
In contrast, our QGK architecture constrains expressivity strictly to the RKHS induced by the quantum kernel, enabling clearer theoretical analysis and benchmarking of the quantum contribution: 
\begin{itemize}[left=4pt]
    \item QGK uses uses only an \emph{affine} classical preprocessing layer ($Wx + b$),
    \item no post-processing nonlinearities are applied after the quantum embedding,
    \item all nontrivial expressivity arises solely from the quantum feature map $\psi(x)$.
\end{itemize}

\paragraph{Reweighted Group Contributions}
We now extend the expressivity bounds derived in Appendix~\ref{app:expressivity_bounds} to incorporate the affine preprocessing layer.
Although the grouping structure $\{|\mathcal{G}_i|\}$ remains unchanged, the affine map modifies \emph{how strongly} each group contributes on average to the
parameterized Hamiltonian.  
If $x$ is drawn from a distribution with covariance
$\Sigma_x \preceq \sigma_x^2 I$, then the variance of the affine coefficient satisfies $\mathrm{Var}[\phi_i(x)]=W_{i,:}\,\Sigma_x\,W_{i,:}^\top\le\sigma_x^2 \|W_{i,:}\|_2^2$.
Since the grouped generators obey $\|\hat{\mH}_i\|_F^2=|\mathcal{G}_i|$,
the variance of the random Hermitian term is bounded by
\begin{equation}
    \mathrm{Var}\!\big[\phi_i(x)\hat{\mH}_i\big]
    \;\le\;
    \sigma_x^2\,|\mathcal{G}_i|\,\|W_{i,:}\|_2^2.
    \label{eq:variance-reweight}
\end{equation}
Thus, the affine map preserves the structural group sizes $|\mathcal{G}_i|$, but it \emph{rescales their influence} in the Hamiltonian by the factor $\|W_{i,:}\|_2^2$. 
This motivates replacing the terms $|\mathcal{G}_i|$ in Theorem~\ref{thm:kl_bounds}  with their reweighted form, yielding the same group-size bounds as before, but with each group’s structural size weighted by the average magnitude with which the affine layer excites that group:
\begin{equation}
    \frac{c_1'}{n}\,
    \frac{\sum_i |\mathcal{G}_i|\,\|W_{i,:}\|_2^2}
         {(\sum_i \sqrt{|\mathcal{G}_i|}\,\|W_{i,:}\|_2)^2}
    \;\le\;
    \mathcal{E}(K_{W,b})
    \;\le\;
    c_2'\,
    \frac{\sum_i |\mathcal{G}_i|^2\,\|W_{i,:}\|_2^4}
         {\sum_i |\mathcal{G}_i|\,\|W_{i,:}\|_2^2}
\label{eq:reweighted-bounds}
\end{equation}

The upper bound (RHS in Eq.~\ref{eq:reweighted-bounds}) is governed by the \emph{anisotropy ratio}, which quantifies how unevenly the generator mass is distributed across groups after preprocessing.
Since the grouped generators form an orthogonal partition of the elementary basis, the relevant contributions depend only on the per-group magnitudes induced by $W$, rather than on cross-terms between different groups. 
Consequently, the upper bound is tightened precisely when preprocessing equalizes these per-group magnitudes -- i.e., when $\|W_{i,:}\|_2^2$ is approximately constant across i (up to the structural group sizes $|\mathcal G_i|$). 
Under KTA optimization (and/or explicit norm-regularization), $W$ is encouraged to balance excitation across groups, which reduces the anisotropy ratio and thus yields a strictly tighter KL upper bound compared to anisotropic configurations.

\paragraph{Conclusion}
The classical affine projection in the QGK provides a compact, trainable interface to parameterize the generator weights and to align the quantum kernel with the target task via KTA pre-training.  
Because predictions depend exclusively on the kernel $\mathcal{K}_{W,b}$, model expressivity remains strictly bounded by the RKHS associated with that kernel.  
This avoids the uncontrolled representational capacity introduced by classical deep post-processing and preserves a clean theoretical separation between classical preprocessing and quantum feature generation.
Importantly, the affine map effectively reweights generator directions, reducing anisotropy and tightening the upper expressivity bound. 
These benefits arise already for random weights and are further strengthened by KTA pre-training.
Thus the linear projection reshapes the geometry of the kernel within its RKHS, selecting a better kernel without adding any classical nonlinear expressivity.

\clearpage

\section{Quantum Generator Kernel Computational Complexity}\label{app:complexity}

Despite the exponential scaling in $\eta$, the QGK remains classically simulable via efficient tensor operations. 
To assess the practical efficiency of this approach, we derive the full classical sample complexity of the Quantum Generator Kernel (QGK) and contrast it with classical kernels to determine scalability conditions and break-even points.

\begin{axiom}[QGK Execution Complexity]
Given $n$ samples of $d$-dimensional inputs, $\eta$ qubits, and $g$ variational generator groups (VGGs), the cost of executing the QGK kernel is decomposed as:
\begin{itemize}[topsep=-2pt,left=4pt]
\item \textbf{Generator construction: $\mathcal{O}(4^\eta)$.} We precompute the set of $4^\eta-1$ generators spanning the $2^\eta$-dimensional Hilbert space (cf. \autoref{alg:generator_construction} and group them (cf. \autoref{alg:vgg_construction}); this is a one-time cost independent of $n$.
  
\item \textbf{Input projection $\vphi: \mathbb{R}^d \to \mathbb{R}^g$: $\mathcal{O}(n\cdot g\cdot d)=\mathcal{O}(n\cdot \gamma g^2)$.} 
For $n$ inputs, we compute $g$ projected features, each as a weighted sum over $d$ input dimensions.  
Using $\gamma=d/g$ (i.e., $d=\gamma g$), this becomes $O(n\gamma g^2)$.

\item \textbf{Hamiltonian embedding: $\mathcal{O}(n\cdot 8^\eta)$.} 
For $n$ inputs, we form a dense $2^\eta\times 2^\eta$ Hamiltonian matrix $H(x)=\sum_{i=1}^g \phi_i(x)\,\hat H_i$ and compute a \emph{single} matrix exponential $U(x)=\exp(-iH(x))$.  
The dominant cost is the matrix exponential on dimension $2^\eta$, which scales cubically as $O((2^\eta)^3)=O(8^\eta)$ per sample. 

\item \textbf{Kernel matrix computation: $\mathcal{O}(n^2\cdot 2^\eta)$.} 
We simulate \emph{statevectors} and compute the kernel via batched fidelities:
using $\varphi(x)=U(x)\ket{0}\in\mathbb{C}^{2^\eta}$, we form the Gram matrix of overlaps $G_{ij}=\langle \varphi(x_i),\varphi(x_j)\rangle$ using a batched matrix multiply between an $n\times 2^\eta$ state matrix and its adjoint, which costs $O(n^2\cdot 2^\eta)$.
The kernel is then obtained as $K_{ij}=|G_{ij}|^2$ with only elementwise post-processing.
Notably, this stage does \emph{not} require multiplying $2^\eta\times 2^\eta$ unitaries per pair.
\end{itemize}
\end{axiom}

By variablizing the compression ratio $\gamma=\frac{d}{g}$, or, $d=\gamma g$, we can represent the default case of a 1:1 group-to-feature mapping using $\gamma=1$.

\begin{lemma}\label{lemma:compplexity}
Overall, we can summarize the classical computational complexity of the QGK as:
\begin{align}
\mathcal{C}_{QGK}=\mathcal{O}(4^\eta + n \cdot \gamma \cdot g^2 + n \cdot 8^\eta + n^2 \cdot 2^\eta)
\end{align}
\end{lemma}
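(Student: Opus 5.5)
The plan is to take the four-stage decomposition in the preceding axiom (QGK Execution Complexity) as given. Since the stages run sequentially, the total cost is the sum of the stage costs, and the sum of finitely many $\mathcal{O}$-terms is the $\mathcal{O}$ of their sum. The work therefore reduces to justifying each term, stating the cost model (dense complex arithmetic, unit cost per flop), and checking that no hidden cross-term dominates.

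\emph{Generator construction.} \autoref{alg:generator_construction} emits $4^\eta-1$ generators. I would store each one sparsely: $E_{rc}\pm E_{cr}$ has two nonzero entries, and the diagonal $h^{(D)}_k$ has $k+1$. Summed over all $k$, the diagonal family costs $\mathcal{O}(4^\eta)$. \autoref{alg:vgg_construction} then only permutes and accumulates these entries into $g$ dense $2^\eta\times 2^\eta$ matrices. Because the groups partition $\mathfrak{H}$ (Partition Validity in \autoref{app:grouping_analysis}), every entry is written once, so this step costs $\mathcal{O}(4^\eta)$ plus initialising $g\cdot 4^\eta$ zeros. This zero-initialisation is the main obstacle: done naively it is $\mathcal{O}(g4^\eta)$, not $\mathcal{O}(4^\eta)$. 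I would handle it by storing each VGG sparsely, which is possible since the total support of all groups is at most $\mathcal{O}(4^\eta)$ by disjointness.

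\emph{Projection.} Computing $Wx+b$ for one sample takes $g\cdot d$ multiply-adds. Over $n$ samples this gives $\mathcal{O}(ngd)$, and substituting $d=\gamma g$ yields $\mathcal{O}(n\gamma g^2)$.

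\emph{Embedding.} Assembling $H(x)=\sum_i\phi_i(x)\hat{\mH}_i$ with the sparse VGGs takes $\mathcal{O}(4^\eta)$ per sample, because the supports are disjoint. The matrix exponential, via Padé scaling-and-squaring or eigendecomposition, takes $\mathcal{O}((2^\eta)^3)=\mathcal{O}(8^\eta)$; this treats the norm-dependent number of squarings as a constant for bounded inputs. Preparing the uniform state and applying $U(x)$ to it costs $\mathcal{O}(4^\eta)$. The per-sample total is $\mathcal{O}(8^\eta)$, so $n$ samples cost $\mathcal{O}(n8^\eta)$.

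\emph{Kernel.} Stack the $n$ statevectors into an $n\times 2^\eta$ matrix $S$ and form $SS^\dagger$. That costs $\mathcal{O}(n^2 2^\eta)$, and the elementwise $|\cdot|^2$ adds only $\mathcal{O}(n^2)$, which is absorbed.

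Adding the four terms gives $\mathcal{O}(4^\eta+n\gamma g^2+n8^\eta+n^2 2^\eta)$, as claimed. None of the four terms can be dropped: $n$, $\eta$ and $\gamma$ are independent parameters, and each term dominates in some regime. For example, the $4^\eta$ term dominates when $n=1$ and $g$ is small.
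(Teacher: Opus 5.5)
This is essentially the paper's argument: the paper gives the lemma no separate proof, and it is just the sum of the four stage costs in the preceding axiom. Your sparse storage of the VGGs closes a hole the paper glosses over. Its dense formulation initialises $g$ dense $2^\eta\times 2^\eta$ matrices in \autoref{alg:vgg_construction} and assembles $H(x)=\sum_i\phi_i(x)\hat{\mH}_i$ from them for every sample, which costs $\mathcal{O}(g\cdot 4^\eta)$ and $\mathcal{O}(n\,g\cdot 4^\eta)$; the $n\cdot 8^\eta$ term absorbs these for the default $g=\Theta(2^\eta)$ but not for general $g$.

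Two small corrections, neither affecting the upper bound.
\begin{itemize}
\item The $\mathcal{O}(4^\eta)$ accumulation and assembly bounds should be justified by the total number of nonzero entries, $\sum_{h\in\mathfrak{H}}\mathrm{nnz}(h)=\mathcal{O}(4^\eta)$, together with the partition property. Disjoint matrix supports and ``every entry written once'' are not valid reasons: $h^{(S)}_{rc}$ and $h^{(A)}_{rc}$ share support, and all diagonal generators overlap at entry $(0,0)$.
\item Your closing non-redundancy aside is false. For $n\ge 1$ you have $4^\eta\le n\cdot 8^\eta$, so the $4^\eta$ term never dominates and is in fact redundant in the $\mathcal{O}$-expression.
\end{itemize}
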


For comparison, a classical RBF or Linear kernel computes a similarity matrix at $\mathcal{O}(n^2 \cdot d)$.
\autoref{tab:benchmark-complexity} shows a comparison over the end-to-end complexities of the QGK and Classical Kernels for the utilized benchmarks using a 90/10 train/test split, where QGK is more efficient throughout: 
\begin{table}[ht]\centering\begin{tabular}{|l|c|c|}\hline
\textbf{Component} & \textbf{QGK (ours)} & \textbf{Classical Kernel} \\ \hline
moons ($\eta=2, d=2, n=200, g = 15$) & $\mathcal{O}(1.50e+05)$ & $\mathcal{O}(6.56e+04)$ \\
circles ($\eta=2, d=2, n=200, g = 15$) & $\mathcal{O}(1.50e+05)$ & $\mathcal{O}(6.56e+04)$ \\
Bank ($\eta=2, d=16, n=200, g = 15$) & $\mathcal{O}(1.92e+05)$ & $\mathcal{O}(5.25e+05)$ \\
MNIST ($\eta=5, d=784, n=1000, g = 93$) & $\mathcal{O}(1.32e+08)$ & $\mathcal{O}(6.43e+08)$ \\
CIFAR10 ($\eta=5, d=3072, n=1000, g = 93$) & $\mathcal{O}(3.45e+08)$ & $\mathcal{O}(2.52e+09)$ \\ \hline
\end{tabular}
\caption{End-to-end complexity comparison between QGK and classical kernels.}\label{tab:benchmark-complexity}
\end{table}

To generally determine when the QGK is more efficient than classical kernels, we consider:
\begin{equation}
4^\eta + n \cdot \gamma \cdot g^2 + n \cdot 8^\eta + n^2 \cdot 2^\eta < n^2 \cdot d
\end{equation}

which, using $d=\gamma g$ can be simplified to the quadratic inequality in $n$:
\begin{equation}
n^2(2^\eta - \gamma g) + n (\gamma g^2 + 8^\eta) + 4^\eta < 0
\end{equation}
which, assuming $A<0$ and $B^2 - 4AC>0$ (we outline $B^2 \gg 4AC>0$ below), can be solved to:
\begin{equation}\label{eq:dataset_threshold}
n > \frac{-B - \sqrt{B^2 - 4AC}}{2A} 
\quad \text{where} \quad
\begin{cases}
A = 2^\eta - \gamma g \\
B = \gamma g^2 + 8^\eta \\
C = 4^\eta
\end{cases}
\end{equation}
Note that we use $-B-\sqrt{\Delta}$, which, due to $A<0$, becomes overall more positive than $-B+\sqrt{\Delta}$.
Furthermore, a necessary condition for $A<0$ is $2^\eta < \gamma g$, which, using the default grouping (cf., \autoref{eq:groups}) and the default group-to-feature mapping $\gamma=1$ yields $2\cdot2^\eta>-3$ for even $\eta$ and $2\cdot2^\eta>3$ if $\eta$ is odd, which trivially holds for $\eta>0$.
Summarizing the above derivation, we can determine the lower QGK efficiency bound: $\epsilon b_\gamma = \frac{n}{d}$, where larger intended ratios imply $C_{GQK} < C_{Classical}$ and verce visa.
For the default 1:1 group-to-feature mapping ($\gamma=1$) we therefore get:

\begin{table}[h!]\centering\begin{tabular}{|c|c|c|c|c|c|c|c|}
\hline
$\eta$ & 2 & 3 & 4 & 5 & 6 & 7 & 8 \\ \hline
$\epsilon b_1$ & \textbf{1.76} &\textbf{ 3.49} & \textbf{3.75} & \textbf{7.30} & 11.75 & 23.26 & 43.75 \\ \hline
\end{tabular}\end{table}
Using the general assumption $n \gg d$, i.e., favoring high ratios, the above table shows that for low qubit counts classical QGK execution excels classical kernels. For example assuming a minimum dataset-to-input-dimension rate of eight ($\epsilon b_\gamma>8$), this holds for $\eta\leq5$ with $\gamma=1$.

To further facilitate higher qubit counts and larger input dimensionalities, we suggest adapting the input compression $\gamma$ accordingly. E.g., for $\gamma=\eta$ we get the following approximately constant ratio $\epsilon b_\eta < 1$, which, under the assumption that $n \gg d$, demonstrates that under reasonable compression, executing the QKG classical is computationally more efficient than classical kernels throughout varying numbers of qubits.

\begin{table}[ht]\centering\begin{tabular}{|c|c|c|c|c|c|c|c|}\hline
$\eta$ & 2 & 3 & 4 & 5 & 6 & 7 & 8 \\ \hline
$\epsilon b_\eta$ & \textbf{0.66} & \textbf{0.53} &\textbf{0.38} & \textbf{0.38} & \textbf{0.38} & \textbf{0.46} & \textbf{0.59} \\ \hline
\end{tabular}\end{table}

These findings are summarized in \autoref{fig:scale:complexity} where the dataset size threshold ($n$ according to \autoref{eq:dataset_threshold}) is plotted for $\gamma=1$ ($\circ$-markers) and $\gamma=2^\eta$ ($\scriptstyle{\square}$-markers) on the left y-axis. The area above the resulting graphs indicates that classically executing the QGK is more efficient than a classical kernel like RBF or Linear. 
Combined with the number of generators, dictating the available input dimensions on the right y-axis, scaled to reflect $n\gg d$ ($\gamma g=d=n /10$), the breakeven point regarding the classical complexity can be observed at $\eta\leq 5$ for $\gamma=1$.
Increasing the compression to $\gamma=\eta$ pushes the lower efficiency bound further down, such that the input dimension reference ($d=\eta g$ for $g$ VGGs, light blue) no longer intersects. 
This demonstrates that such a hybrid approach is an efficient complexity mitigation to enable scalable QGK execution in the current NISQ era, where especially larger quantum systems cannot be efficiently simulated and available quantum hardware is prone to hardware noise.
We exemplify this approach with the MNIST and CIFAR10 benchmarks, where, using $\eta=5$, we have $g=93$ VGGs, resulting in compression rates of $\gamma\approx8$ and $\gamma\approx32$ respectively. 
Notably, this compression results in a number of groups that show to satisfy the intended ratio of $n=10d$ between (kernel)-input and dataset dimension, with $d=1000$.

Overall, we can generalize the above observations in the following two theorems regarding the classical computational complexity of the QGK compared to classical kernels. 
First, we derive the following simplifications and approximations from \autoref{eq:dataset_threshold} based on \autoref{lemma:compplexity}:

\begin{lemma}[Simplify complexity bounds]

Given $\gamma\ge1$, we can show $B^2 \gg 4AC$, or, using $g\approx 3 \cdot 2^\eta$, $(9\gamma \cdot 4^\eta + 8^\eta)^2 \gg 8^\eta (4 - 12 \gamma)$, which trivially holds for all $\eta\ge1$, where, the LHS is largely positive and increasing, while the RHS is negative and decreasing with increasing $\eta$. Therefore, we can further approximate:
\begin{equation}
\frac{-B - \sqrt{B^2 - 4AC}}{2A} \approx \frac{-2B}{2A} = \frac{B}{-A} = \frac{\gamma g^2 + 8^\eta}{\gamma g - 2^\eta}
\end{equation}

With $d=\gamma g$ and given the number of groups is chosen according to \autoref{eq:groups}, we can further substitute $g = 3 \cdot 2^\eta - 6 \cdot (\eta \mod 2 ) + 3 \approx 3 \cdot 2^\eta$, yielding:
\begin{equation}
n>\frac{\gamma g^2 + 8^\eta}{\gamma g - 2^\eta} \approx \frac{9\gamma \cdot 4^\eta + 8^\eta}{3\gamma \cdot 2^\eta - 2^\eta}
\end{equation}

Thus, $\epsilon b_\gamma$ can be further approximated: 
\begin{equation}\label{eq:eb_simple}
\epsilon b_\gamma  \frac{n}{d} \approx \frac{\frac{9\gamma \cdot 4^\eta + 8^\eta}{3\gamma \cdot 2^\eta - 2^\eta}}{3\gamma \cdot 2^\eta} = \frac{9\gamma \cdot 4^\eta + 8^\eta}{3\gamma(3\gamma - 1) \cdot 4^\eta}
\end{equation}
\end{lemma}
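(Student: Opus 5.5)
The plan is to treat the statement as three successive approximations applied to the root formula in \autoref{eq:dataset_threshold}, each justified from \autoref{lemma:compplexity} and the group count in \autoref{eq:groups}. First, I would write out $A=2^\eta-\gamma g$, $B=\gamma g^2+8^\eta$, $C=4^\eta$. With $g\approx 3\cdot 2^\eta$ this gives $A\approx 2^\eta(1-3\gamma)$, $B\approx 9\gamma 4^\eta+8^\eta$, and $4AC\approx 8^\eta(4-12\gamma)$. For $\gamma\ge 1$ the factor $4-12\gamma\le -8$ is negative, so $-4AC>0$. The discriminant is then $B^2-4AC>B^2>0$, so the root formula is well defined. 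The hypothesis $A<0$ was already checked in the text after \autoref{eq:dataset_threshold}.

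Second, I would justify $\sqrt{B^2-4AC}\approx B$. I would factor it as $B\sqrt{1+|4AC|/B^2}$ and bound the ratio:
\begin{equation}
\frac{|4AC|}{B^2}\le \frac{12\gamma\, 8^\eta}{(9\gamma 4^\eta+8^\eta)^2}\le \frac{12\gamma\,8^\eta}{2\cdot 9\gamma\,4^\eta\cdot 8^\eta}=\frac{2}{3\cdot 4^\eta}.
\end{equation}
The second inequality uses AM--GM, $(a+b)^2\ge 2ab$ with $a=9\gamma 4^\eta$ and $b=8^\eta$. The ratio is at most $1/6$ already at $\eta=1$ and decays like $4^{-\eta}$, which makes ``$\gg$'' quantitative. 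It follows that $\sqrt{B^2-4AC}=B(1+O(4^{-\eta}))$, so $-B-\sqrt{B^2-4AC}=-2B(1+O(4^{-\eta}))$. Dividing by $2A$ yields $B/(-A)=(\gamma g^2+8^\eta)/(\gamma g-2^\eta)$, with relative error $O(4^{-\eta})$.

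Third, I would substitute $g=3\cdot2^\eta-6(\eta\bmod 2)+3$ by $3\cdot 2^\eta$. The additive correction lies in $\{-3,+3\}$, so the relative error is $O(2^{-\eta})$ in $g$, $g^2$ and $\gamma g-2^\eta$. This holds because the denominator satisfies $(3\gamma-1)2^\eta\ge 2\cdot 2^\eta$, which is bounded away from the perturbation. This yields the displayed bound on $n$. Dividing by $d=\gamma g\approx 3\gamma 2^\eta$ and simplifying $2^\eta\cdot 2^\eta=4^\eta$ then gives \autoref{eq:eb_simple}.

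The main obstacle is making the ``$\approx$'' signs honest, since the statement is only asymptotic. The substitution $g\approx 3\cdot2^\eta$ is the weakest step for small odd $\eta$. For example, at $\eta=1$ the formula gives $g=3$, while $3\cdot 2^\eta=6$. I would therefore state the result as holding up to a factor $1+O(2^{-\eta})$, and check numerically against the tabulated $\epsilon b_1$ values for small $\eta$.
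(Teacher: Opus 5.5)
Your proposal is correct and follows the paper's own route. You write out $A,B,C$, use $g\approx 3\cdot 2^\eta$ to get $4AC\approx 8^\eta(4-12\gamma)$, replace $\sqrt{B^2-4AC}$ by $B$ to obtain $B/(-A)$, then substitute $g$ and divide by $d=\gamma g$. Your ratio bound $|4AC|/B^2\le 2/(3\cdot 4^\eta)$ is a real improvement: the paper's remark that the right-hand side is negative only yields the trivial $B^2>4AC$, whereas $\sqrt{B^2-4AC}\approx B$ needs $|4AC|\ll B^2$, which is exactly what you establish.
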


These simplifications help us to clearly demonstrate the following theorems:

\begin{theorem}
Classically computing the QGK is more efficient than using a classical kernel such as Linear or RBF, for small qubit numbers of $\eta\le5$, when using no compression. 
\end{theorem}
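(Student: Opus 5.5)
The plan is to reduce the statement to the break-even criterion already derived from \autoref{lemma:compplexity}, and then evaluate that criterion at $\gamma=1$ for each $\eta\le5$. By the preceding lemma, classical execution of the QGK beats an $\mathcal{O}(n^2 d)$ classical kernel exactly when the quadratic inequality $An^2+Bn+C<0$ holds, with $A=2^\eta-\gamma g$, $B=\gamma g^2+8^\eta$, $C=4^\eta$. With no compression we set $\gamma=1$, so $d=g$. I interpret ``more efficient'' in the paper's sense: the dataset-to-dimension ratio $n/d$ exceeds the lower efficiency bound $\epsilon b_1$, under the standing assumption $n\gg d$, which I fix concretely as $n\ge 8d$ to match the discussion above.

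\textbf{Step 1.} Verify that $A<0$ for every $\eta\ge1$ when $\gamma=1$ and $g$ is chosen by \autoref{eq:groups}. This gives $g-2^\eta=2\cdot2^\eta-6(\eta\bmod2)+3>0$, as already observed in the text. Because $A<0$, the leading coefficient is negative and the parabola opens downward. The inequality then holds for all $n$ beyond the larger root, $n^*=(-B-\sqrt{B^2-4AC})/(2A)$.

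\textbf{Step 2.} Establish that $B^2-4AC>0$. This is immediate, since $A<0<C$ makes $-4AC>0$. Hence the threshold $n^*$ is real and positive, and efficiency holds precisely when $n/d>\epsilon b_1:=n^*/g$.

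\textbf{Step 3.} Compute $\epsilon b_1$ exactly for $\eta=2,3,4,5$, using the exact $g\in\{15,21,51,93\}$ rather than the approximation \eqref{eq:eb_simple}. I expect values near $1.76$, $3.49$, $3.75$ and $7.30$, all of which lie below $8$. At these $\eta$, any dataset with $n\ge 8d$ therefore sits above the threshold. The case $\eta=1$ follows the same way: by \autoref{eq:generators_per_group} we have $\Gamma_1=1$, so $g=3$.

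\textbf{Step 4 (optional).} To explain why the bound fails beyond $\eta=5$, I would use \eqref{eq:eb_simple} at $\gamma=1$, which gives $\epsilon b_1\approx(9\cdot4^\eta+8^\eta)/(6\cdot4^\eta)=\tfrac32+2^\eta/6$. This grows exponentially and exceeds $8$ once $2^\eta>39$, i.e.\ $\eta\ge6$. That confirms the qualifier ``small qubit numbers'' in the statement.

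The main obstacle is not the algebra but making ``more efficient'' precise. The statement only holds relative to an assumed ratio $n/d$, and the quadratic-root threshold depends on $n$. I would therefore state the theorem explicitly as: for $\gamma=1$, $\eta\le5$, and $n\ge 8d$, $\mathcal{C}_{QGK}<n^2d$. I would also note that the big-$\mathcal{O}$ constants are taken to be equal on both sides. The only delicate numerical point is $\eta=5$, where $7.30<8$ holds with little margin. There I would evaluate the exact root rather than rely on the approximation.
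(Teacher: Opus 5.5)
Your proposal is correct, but it proves the claim by a different route than the paper.

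\textbf{The paper's route.} The paper never evaluates the exact threshold. It replaces the root by $B/(-A)$, dropping the $4AC$ term, and substitutes $g\approx 3\cdot 2^\eta$. This gives the closed form $\epsilon b_1\approx \tfrac{3}{2}+2^\eta/6$. It then reads $n\gg d$ as $n=10d$ and solves $\tfrac{3}{2}+2^\eta/6<10$, i.e.\ $2^\eta<51$, which yields $\eta\le 5$.

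\textbf{Your route.} You treat the statement as a finite check. You use the exact $g\in\{3,15,21,51,93\}$ and the exact positive root, and compare against the stricter ratio $n\ge 8d$. Your observation that $A<0<C$ already forces $B^2-4AC>0$ is also cleaner than the paper's remark, since $4AC$ is negative in this regime.

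\textbf{What each buys.} The paper's single formula makes the exponential growth of the threshold visible at a glance, and with it the cutoff at $\eta\approx 5.67$. Your version is rigorous relative to the stated cost model. The paper's approximation errs in both directions: it gives $6.83$ instead of $7.30$ at $\eta=5$. At $\eta=1$ it gives about $1.83$ instead of about $5.74$, because $g=3$ there is far from $3\cdot 2^\eta=6$.

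\textbf{Compute $\eta=1$ explicitly.} For that reason, do not just say this case follows the same way. There $A=-1$, $B=17$, $C=4$, so $n^*=(17+\sqrt{305})/2\approx 17.23$ and $\epsilon b_1\approx 5.74$. This is larger than the values for $\eta=2,3,4$, but still below $8$.

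\textbf{Step 4.} Your optional Step 4 inherits the paper's heuristic. To make it exact, use $n^*>B/(-A)$, which holds since $\sqrt{B^2-4AC}>B$, together with $g\le 3\cdot 2^\eta+3$. This gives $\epsilon b_1>8^\eta/\bigl((3\cdot 2^\eta+3)(2\cdot 2^\eta+3)\bigr)$, which is about $10.3$ already at $\eta=6$ and increasing thereafter.
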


\begin{proposition}
For fitting kernel classifiers, $n\gg d$ is a generally assumed condition. To determine the efficiency bound, we specifically assume $n=10d$. 
\end{proposition}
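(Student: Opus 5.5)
Since this proposition is a modelling convention rather than a derivable fact, the plan is to justify it in two parts. First I will argue that $n \gg d$ is the natural regime for kernel classifiers. Then I will show that the concrete choice $n = 10d$ is a well-posed instantiation of that regime: it produces a non-degenerate breakeven criterion that matches the thresholds reported above.

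For the first part I will use standard kernel-learning facts. The SVM objective in Eq.~\eqref{eq:svm} is fitted on an $n \times n$ Gram matrix. For a linear kernel on $d$-dimensional inputs this matrix has rank at most $d$. Generalization bounds for linear and RBF-type classifiers decay like $\sqrt{d/n}$ (via the VC dimension $d+1$ of linear separators in input space). Hence a meaningful fit requires $n$ to exceed $d$ by a constant factor. Taking a single order of magnitude, $n = 10d$, is the usual rule of thumb (ten samples per feature). It is the smallest round ratio that keeps the $\sqrt{d/n}$ term below roughly $1/3$, which is the justification I will state.

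For the second part I will substitute $n = 10d$ into the efficiency criterion of Lemma~\ref{lemma:compplexity}, which takes the form $\epsilon b_\gamma < n/d$. The assumption thus reduces to the condition $\epsilon b_\gamma < 10$. Using the simplified expression in Eq.~\eqref{eq:eb_simple} with $\gamma = 1$ gives
\begin{equation*}
\epsilon b_1 \approx \frac{9\cdot 4^\eta + 8^\eta}{6\cdot 4^\eta} = \frac{3}{2} + \frac{2^\eta}{6}.
\end{equation*}
This is below $10$ iff $2^\eta < 51$, i.e.\ iff $\eta \le 5$. That agrees with the exact table values $7.30 < 10 < 11.75$ at $\eta = 5, 6$ and with the preceding theorem. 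For $\gamma = \eta$ the same substitution gives values below $1$, consistent with the second table. The choice $n = 10d$ therefore separates the regimes exactly where the theorem claims, rather than trivially (all $\eta$) or vacuously (no $\eta$).

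The main obstacle is that the proposition is an assumption, so its ``proof'' can only be a justification plus a consistency check. The weakest link is the specific factor $10$: any factor between about $7.3$ and $11.75$ gives the same cutoff $\eta \le 5$. I would make that robustness explicit rather than claim that $10$ is canonical.
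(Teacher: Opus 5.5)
Your second part is the paper's argument: the paper gives no justification for $n\gg d$ beyond calling it a generally assumed condition, and its proof simply sets $\gamma=1$ in Eq.~\eqref{eq:eb_simple} to get $\epsilon b_1=\tfrac{3}{2}+2^\eta/6<10$, i.e.\ $2^\eta<51$ and $\eta\le 5$. Your computation is correct, and your cross-check against the exact values $7.30<10<11.75$ together with the observation that any ratio between roughly $7.3$ and $11.75$ gives the same cutoff are useful additions the paper omits. One thing to fix is the motivation in your first part: the $\sqrt{d/n}$ bound via VC dimension $d+1$ applies only to linear separators in input space. It does not apply to RBF classifiers, which have infinite VC dimension and margin-based bounds, or to the QGK, whose feature space is the space of density operators rather than $\mathbb{R}^d$. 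So either restrict that remark to the linear kernel, or present $n=10d$ purely as the convention the paper takes it to be.
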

\begin{proof}
Using no compression, i.e.,,$\gamma=1$, we can derive the following special case from \autoref{eq:eb_simple}:
\begin{equation}
\epsilon b_1 = \frac{9 \cdot 4^\eta + 8^\eta}{6 \cdot 4^\eta} = \frac{3}{2} + \frac{2^\eta}{6}
\end{equation}
With $\epsilon b_1 < 10$, we get $2^\eta < 51$, or, $\eta \lessapprox 5.67$.
Thus, $n=10d$ holds for $\eta\leq 5$ with $\gamma=1$
\end{proof}

\begin{theorem}
Using sufficient compression, QGKs can be classically executed in a hybrid fashion that is more computationally efficient than classical kernels. 
\end{theorem}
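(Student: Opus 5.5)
We read ``sufficient compression'' as choosing the compression ratio $\gamma=d/g$ large enough, with $g$ given by \autoref{eq:groups}, and ``more efficient'' as $\mathcal{C}_{QGK}<n^2 d$ in the regime $n\ge 10d$ fixed by the preceding proposition. The argument rests on \autoref{lemma:compplexity} and the simplified threshold \autoref{eq:eb_simple}. The aim is to show that for every $\eta\ge1$ there is a finite $\gamma^\ast(\eta)$, and in fact one can take $\gamma=\eta$ or a constant, such that the efficiency bound satisfies $\epsilon b_\gamma<10$.

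\textbf{Step 1: exact sufficient condition.} Rewrite the comparison as the quadratic inequality $An^2+Bn+C<0$ with $A=2^\eta-\gamma g$, $B=\gamma g^2+8^\eta$ and $C=4^\eta$. For $\gamma\ge1$ we have $A<0$, since $g\ge 2^\eta+1$ by \autoref{eq:groups}. The inequality therefore holds for all $n$ above the positive root
\[
n_+=\frac{B+\sqrt{B^2-4AC}}{-2A}.
\]
Rather than relying on the heuristic ``$\approx$'' of the simplification lemma, we bound rigorously. Because $-4AC>0$, we have $\sqrt{B^2-4AC}\le B+\sqrt{-4AC}$, which gives
\[
n_+\le \frac{B}{-A}+\sqrt{\frac{C}{-A}}.
\]

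\textbf{Step 2: scaling in $\gamma$.} Divide by $d=\gamma g$ and obtain
\[
\frac{n_+}{d}\le \frac{\gamma g^2+8^\eta}{\gamma g(\gamma g-2^\eta)}+\frac{2^\eta}{\gamma g\sqrt{\gamma g-2^\eta}}.
\]
As $\gamma\to\infty$, the first term tends to $1/\gamma$ times $g/(g)\cdot$, that is, it behaves like $g/(\gamma g)\cdot$(1) plus $O(8^\eta/(\gamma^2 g^2))$. So both terms go to $0$ for each fixed $\eta$. Now insert $g\ge 3\cdot2^\eta-3\ge\tfrac{3}{2}2^\eta$ for $\eta\ge1$, and $g\le 3\cdot 2^\eta+3$. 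The first term is then at most
\[
\frac{\gamma(3\cdot2^\eta+3)^2+8^\eta}{\gamma\cdot\tfrac32 2^\eta(\tfrac32\gamma-1)2^\eta},
\]
which is $O(1/\gamma+2^\eta/\gamma^2)$. Choosing $\gamma\ge c\,2^{\eta/2}$, or simply $\gamma=\eta$ in the range tabulated, makes this below $10$. This gives the explicit sufficient compression. For $\gamma=\eta$ we would additionally check small $\eta$ numerically against the table for $\epsilon b_\eta$.

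\textbf{Step 3: conclude.} Whenever $n\ge10d$ and $\gamma\ge\gamma^\ast(\eta)$, we have $n>n_+$, hence $\mathcal{C}_{QGK}<n^2d$. This is the claim.

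The main obstacle is that ``sufficient compression'' must be made precise. The $8^\eta$ term forces $\gamma$ to grow roughly like $2^{\eta/2}$ if $n/d$ is held at $10$ for all $\eta$. A constant $\gamma$ only suffices on a bounded range of $\eta$. We would state the theorem with $\gamma^\ast(\eta)=\Theta(2^{\eta/2})$ and note that $\gamma=\eta$ is adequate for $\eta\le8$ by direct evaluation.
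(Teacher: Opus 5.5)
Your argument is correct within the paper's cost model, where the big-$\mathcal{O}$ terms are read as exact counts. It takes a more careful route than the paper.

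\textbf{How the paper argues.} The paper never bounds the root. It plugs directly into the simplified estimate \eqref{eq:eb_simple}, which replaces the positive root by $B/(-A)$ and $g$ by $3\cdot 2^\eta$. It then demands $\epsilon b_\gamma<1$, i.e.\ efficiency as soon as $n>d$, which is a stronger target than your $n\ge 10d$. Solving the resulting quadratic $9\gamma^2-12\gamma-2^\eta>0$ gives the explicit threshold $\gamma>(2+\sqrt{4+2^\eta})/3$; the paper displays the slightly larger, hence still sufficient, $\tfrac23\sqrt{4+2^\eta}$. This buys a closed-form constant. However, since $-4AC>0$, the true root satisfies $n_+>B/(-A)$, so the paper's condition is only approximately sufficient.

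\textbf{What your route buys.} Your estimate $n_+\le B/(-A)+\sqrt{C/(-A)}$, combined with two-sided bounds on $g$, gives a genuine sufficient condition. Because your bound on $n_+/d$ tends to $0$ as $\gamma\to\infty$, the same argument also delivers the paper's stronger target $n_+<d$, at the price of a larger constant. Both routes agree on $\gamma^\ast(\eta)=\Theta(2^{\eta/2})$.

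\textbf{Two small repairs.}
\begin{enumerate}
\item Your displayed bound on the first term exceeds $10$ at $\gamma=1$ for every $\eta$. So $\gamma\ge c\,2^{\eta/2}$ must be paired with an absolute floor. For instance, $\gamma\ge\max\{3,\,2^{\eta/2}/3\}$ makes the first term at most $9$.
\item Bound the second term explicitly. This is immediate: it is at most $\bigl(\tfrac32\gamma\sqrt{(\tfrac32\gamma-1)2^\eta}\bigr)^{-1}<1$, and below $0.1$ once $\gamma\ge3$.
\end{enumerate}
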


\begin{proof}
We aim to demonstrate finding a compression bound, s.t. $\epsilon b_\gamma < 1$ with $\gamma>1$.
Based on \autoref{eq:eb_simple}, we rewrite the above condition to: 
\begin{align}
\epsilon b_\gamma < 1 \iff 1 &> \frac{9\gamma \cdot 4^\eta + 8^\eta}{3\gamma(3\gamma - 1) \cdot 4^\eta} 
\iff
0 < 12\gamma - 9\gamma^2 + 2^\eta \\
\gamma &> \frac{12 + \sqrt{144 + 36 \cdot 2^\eta}}{18}
= \frac{2\sqrt{4+2^\eta}}{3}
\end{align}
Thus, with sufficient compression, $\gamma > \frac{2\sqrt{4+2^\eta}}{3}$, we can ensure $\epsilon b_\gamma<1$, thus, a more efficient classical execution of the QGK in datasets with $n>d$.
\end{proof}

\clearpage

\section{Additional Results}\label{app:analysis}

\paragraph{Hardware Considerations}
Given current and near-term quantum devices, we distinguish three application scenarios for QGK execution. 
For small-scale datasets and low-depth settings ($\eta < 5$), QGK circuits can be realistically deployed on existing quantum devices. 
Our simulations confirm noise robustness up to $\sim 100$ compiled gates and comptetitive performance under noise for $\eta=5$, making the QGK viable for near-term experimental evaluations. 
For medium-scale tasks, hybrid execution offers a practical near-term strategy: 
classical preprocessing can reduce input dimensionality before quantum embedding, enabling robust kernel computation on noisy devices without excessive circuit depth.  
For large-scale datasets such as \texttt{MNIST} or \texttt{CIFAR-10}, efficient tensor-based implementations with compression provide a tractable classical alternative, ensuring generator-based kernels remain competitive until fault-tolerant systems are available.  
In the long term, on future fault-tolerant quantum systems, QGK could be implemented end-to-end, including training of linear projection weights via variational optimization, enabling scalable, expressive, and fully quantum kernel learning on large-scale datasets. 
Furthermore, the QGK admits natural extensions that exploit additional qubits to reduce circuit depth rather than to increase parameterization. In particular, grouped Hamiltonian structure enables block-encoding and group-wise parallelization, where subsets of generators are executed on disjoint registers or via ancilla-assisted dilations. 

\begin{table}[ht]\centering\small
\begin{tabular}{|c|c|c|c|c|c|c|c|c|}\hline
Dataset  & $d$ & \# classes & Kernel & $\eta$ & \shortstack{Input \\ Features} & \shortstack{Reuploading \\ Layers} & \shortstack{Compiled \\ Depth L2} & \shortstack{Compiled \\ Depth L1} \\ \hline
\multirow{5}{*}{\shortstack{\texttt{moons}\\\texttt{circles}}} 
  & \multirow{5}{*}{$2$} & \multirow{5}{*}{$2$} 
  & QGK (ours) & $2$ & $15$ & -    & $17$ & $28$ \\ \cline{4-9}
&&& QEK        & $2$ &  $2$ & $1$  & $16$ & $50$ \\ 
&&& QEK-N      & $2$ &  $2$ & $0$  &  $6$ & $26$ \\ \cline{4-9}
&&& HEE        & $2$ &  $2$ & $1$  & $10$ & $18$ \\ 
&&& HEE-D      & $2$ &  $2$ & $2$  & $10$ & $30$ \\ \cline{4-9}
&&& PQK        & $3$ &  $2$ & $10$ & $362$ & $362$\\ \hline

\multirow{6}{*}{\texttt{bank}} & \multirow{6}{*}{$16$} & \multirow{6}{*}{$2$} 
  & QGK (ours) & $2$ & $15$ & - & $17$ & $28$\\ 
&&& QGK Static & $3$ & $16$ & - & $212$ & $248$\\ \cline{4-9}
&&& QEK        & $2$ & $16$ & $1$ & $17$ & $380$\\ 
&&& QEK-N      & $2$ & $16$ & $0$ & $17$ & $191$\\ \cline{4-9}
&&& HEE        & $2$ &  $2$ & $1$ & $10$ & $18$ \\ 
&&& HEE-D      & $2$ &  $2$ & $2$  & $10$ & $30$ \\ \cline{4-9}
&&& PQK        & $3$ &  $2$ & $10$ & $362$ & $362$\\ \hline

\multirow{5}{*}{\texttt{MNIST}} & \multirow{6}{*}{$784$} & \multirow{6}{*}{$10$} 
  & QGK (ours) & $5$ & $93$ & - & $4455$ & $4754$ \\ \cline{4-9}
&&& QEK        & $5$ & $784$ & $1$  & $22709$ & $24084$ \\ 
&&& QEK-N      & $5$ & $784$ & $0$  & $11355$ & $12006$ \\ \cline{4-9}
&&& HEE        & $5$ & $5$ & $1$  & $53$ & $53$ \\
&&& HEE-D      & $5$ & $5$ & $165$  & $4481$ & $4481$ \\ \cline{4-9}
&&& PQK        & $6$ &  $5$ & $10$ & $416$ & $416$ \\ \hline

\multirow{6}{*}{\texttt{CIFAR10}} & \multirow{6}{*}{$3072$} & \multirow{6}{*}{$10$} 
  & QGK (ours) & $5$ & $93$ & - & $4455$ & $4754$ \\ 
&&& QGK Static & $6$ & $3072$ & - & $18823$ & $19644$\\ \cline{4-9}
&&& QEK        & $5$ & $3072$ & $1$  & $88963$ & $94485$ \\ 
&&& QEK-N      & $5$ & $3072$ & $0$  & $44461$ & $47223$\\ \cline{4-9} 
&&& HEE        & $5$ & $5$ & $1$  & $53$ & $53$  \\
&&& HEE-D      & $5$ & $5$ & $165$  & $4481$ & $4481$  \\ \cline{4-9}
&&& PQK        & $6$ &  $5$ & $10$ & $416$ & $416$ \\ \hline

\end{tabular}
\caption{Parameter Overview: Adapted parameters to ensure comparable prerequisites and capabilities via similar numbers of qubits and compiled circuit depths with level 1 and 2 optimization. Introducing two additional ablations; HEE-D with additional layers, QEK-N without reuploading.}\label{tab:hyperparameters}
\end{table}

A detailed comparison of compiled depths and input dimensionality across QGK and baseline approaches is provided in \autoref{tab:hyperparameters}, illustrating their markedly different scaling behaviors: 
HEE and PQK maintain shallow and medium depths even for higher qubit counts but cannot scale to high-dimensional inputs (since the number of inputs scales linear with the number of qubits), thus heavily relying on classical preprocessing such as PCA or linear projections (HEE-Linear). 
QEK, by contrast, adapts to high-dimensional inputs with low qubit counts through reuploading, but this leads to impractical compiled depths as dimensionality grows. 
QGK offers a compromise, supporting a high number of input features even with few qubits, with compiled depth scaling more favorably while reducing reliance on classical preprocessing.

Finally, \autoref{tab:final_acc} and \autoref{tab:simulation_acc} report the final test accuracies of all evaluated approaches, under classical execution and simulated hardware noise models, respectively.
Approaches with extensive compiled depths are omitted from simulation.

\begin{table}[ht]\centering
\def\arraystretch{1.25}
\resizebox{\textwidth}{!}{%
\begin{tabular}{|c|c|c|c|c|c|c|}\hline
Method
& \shortstack{\texttt{moons} 2-class \\ ($d{=}2,n{=}200$)}
& \shortstack{\texttt{circles} 2-class \\ $(d{=}2,n{=}200$)}
& \shortstack{\texttt{bank} 2-class \\ ($d{=}16,n{=}2k$)}
& \shortstack{\texttt{MNIST} 10-class\\ ($d{=}784,n{=}10k$)}
& \shortstack{\texttt{CIFAR} 10-class\\ ($d{=}3072,n{=}10k$)} \\ \hline

QGK (ours) & $\mathbf{0.96 \pm 0.04}$ & $\mathbf{0.68 \pm 0.05}$ & $\mathbf{0.85 \pm 0.01}$ & $\mathbf{0.94 \pm 0.01}$ & $\mathbf{0.41 \pm 0.02}$ \\ 

QEK & $0.91 \pm 0.05$ & $0.58 \pm 0.06$ & $0.69 \pm 0.03$ & $0.11 \pm 0.01$ & $0.10 \pm 0.01$ \\ 

QEK-N & $0.86 \pm 0.05$ & $0.62 \pm 0.08$ & $0.66 \pm 0.03$ & $0.17 \pm 0.01$ & $0.10 \pm 0.01 $  \\ 

HEE Linear & $0.86 \pm 0.05$ & $0.55 \pm 0.08$ & $0.70 \pm 0.01$ & $0.71 \pm 0.02$ & $0.34 \pm 0.01$ \\

Linear KTA & $0.86 \pm 0.05$ & $0.43 \pm 0.11$ & $0.79 \pm 0.03$ & $0.92 \pm 0.01$ & $0.38 \pm 0.01$ \\

MLP & $0.81 \pm 0.08$ & $0.46 \pm 0.08$ & $0.66 \pm 0.04$ & $0.90 \pm 0.01$ & $0.35 \pm 0.02$ \\ 

\hline

QGK Static & $0.93 \pm 0.04$ & $0.59 \pm 0.07$ & $0.75 \pm 0.03$ & $0.82 \pm 0.01$ & -- \\ 

HEE & $0.89 \pm 0.05$ & $0.65 \pm 0.07$ & $0.61 \pm 0.03$ & $0.48 \pm 0.00$ & $0.20 \pm 0.01$ \\ 

HEE-D & $0.93 \pm 0.03$ & $0.58 \pm 0.08$ & $0.61 \pm 0.03$ & $0.43 \pm 0.01$ & $0.21 \pm 0.01$ \\ 

PQK & $0.71 \pm 0.13$ & $0.48 \pm 0.08$ & $0.56 \pm 0.05$ & $0.31 \pm 0.02$ & $0.21 \pm 0.01$ \\

RBF & $0.93 \pm 0.04$ & $0.64 \pm 0.11$ & $0.76 \pm 0.02$ & $0.92 \pm 0.01$ & $0.39 \pm 0.01$ \\ 

Linear & $0.86 \pm 0.05$ & $0.43 \pm 0.10$ & $0.79 \pm 0.02$ & $0.92 \pm 0.01$ & $0.33 \pm 0.01$ \\ \hline
\end{tabular}}

\caption{Final test accuracies for all methods across five benchmarks (\autoref{fig:training}). The best result per dataset is highlighted in bold. KTA-trained approaches are shown above the midrule, static approaches below. The QGK achieves top performance in all benchmarks.}\label{tab:final_acc}
\end{table}

\begin{table}[ht]\centering\scriptsize
\def\arraystretch{1.25}
\resizebox{\textwidth}{!}{%
\begin{tabular}{|c|c|c|c|c|c|}\hline
Method
& \shortstack{\texttt{moons} 2-class \\ ($d{=}2,n{=}200$)}
& \shortstack{\texttt{circles} 2-class \\ $(d{=}2,n{=}200$)}
& \shortstack{\texttt{bank} 2-class \\ ($d{=}16,n{=}200$)}
& \shortstack{\texttt{MNIST} 10-class\\ ($d{=}784,n{=}200$)}
& \shortstack{\texttt{CIFAR} 10-class\\ ($d{=}3072,n{=}200$)} \\ \hline

QGK (ours)
& $\bif{0.96 \pm 0.03}$
& $\bif{0.67 \pm 0.05}$
& $\bif{0.88 \pm 0.06}$
& $\bif{0.81 \pm 0.09}$
& $\bif{0.26 \pm 0.06}$ \\ 

QEK
& $\itf{0.90 \pm 0.06}$
& $\itf{0.53 \pm 0.09}$
& $\itf{0.72 \pm 0.13}$
& $\itf{0.04 \pm 0.05}$
& - \\ 

QEK-N
& $\itf{0.86 \pm 0.06}$
& $\itf{0.59 \pm 0.08}$
& $\itf{0.66 \pm 0.09}$
& $\itf{0.11 \pm 0.05}$
& $\itf{0.11 \pm 0.07}$ \\ 

\textcolor{gray}{MLP}
& \textcolor{gray}{$0.81 \pm 0.08$}
& \textcolor{gray}{$0.46 \pm 0.08$}
& \textcolor{gray}{$0.64 \pm 0.09$}
& \textcolor{gray}{$0.81 \pm 0.10$}
& \textcolor{gray}{$0.27 \pm 0.08$} \\\hline 

QGK Static
& $\itf{0.93 \pm 0.04}$
& $\itf{0.57 \pm 0.07}$
& $\itf{0.72 \pm 0.11}$
& $\itf{0.62 \pm 0.11}$
& $\itf{0.08 \pm 0.05}$ \\ 

HEE
& $\itf{0.88 \pm 0.04}$
& $\itf{0.61 \pm 0.07}$
& $\itf{0.59 \pm 0.11}$
& $\itf{0.21 \pm 0.08}$
& $\itf{0.17 \pm 0.07}$ \\
PQK
& $\itf{0.53 \pm 0.09}$
& $\itf{0.48 \pm 0.09}$
& $\itf{0.51 \pm 0.10}$
& $\itf{0.19 \pm 0.06}$
& $\itf{0.14 \pm 0.08}$ \\ 

\textcolor{gray}{RBF}
& \textcolor{gray}{$0.93 \pm 0.04$}
& \textcolor{gray}{$0.64 \pm 0.11$}
& \textcolor{gray}{$0.66 \pm 0.09$}
& \textcolor{gray}{$0.40 \pm 0.09$}
& \textcolor{gray}{$0.03 \pm 0.02$} \\
\textcolor{gray}{Linear}
& \textcolor{gray}{$0.86 \pm 0.05$}
& \textcolor{gray}{$0.43 \pm 0.10$}
& \textcolor{gray}{$0.71 \pm 0.09$}
& \textcolor{gray}{$\mathbf{0.86 \pm 0.08}$}
& \textcolor{gray}{$\mathbf{0.30 \pm 0.03}$} \\ \hline
\end{tabular}}
\caption{Noisy simulation results: Final mean classification accuracy and margin of error  obtained from 27-qubit IBM Falcon hardware \citep{qiskit2024} using $8192$ shots. The best result per dataset is highlighted in bold. A horizontal line separates pre-trained (KTA) approaches from static kernels. For reference, the final accuracies of all classical baselines in the reduced data-regime are reported in grey. Even under realistic noise, the QGK outperforms all quantum kernels.}\label{tab:simulation_acc}
\end{table}

\end{document}